\documentclass{article}
\usepackage[table]{xcolor}

\usepackage{microtype}
\usepackage{graphicx}
\usepackage{subcaption}

\usepackage{booktabs} %
\usepackage{multirow}
\usepackage{tabularx}
\usepackage{colortbl}
\usepackage{pgf}           %
\usepackage{hyperref}
\usepackage{enumerate}   
\usepackage{placeins}

\usepackage{xspace}

\newcommand{\stok}{\textsc{StochasTok}\xspace}
\newcommand{\stokuni}{\textsc{StochasTok-uni}\xspace}
\newcommand{\unik}{\textsc{Uniform-k}\xspace}
\newcommand{\uni}{\textsc{Uniform}\xspace}
\newcommand{\bpedropout}{\textsc{BPE-Dropout}\xspace}

\newcommand{\ppre}{\alpha_{\text{pre}}}
\newcommand{\pfine}{\alpha_{\text{fine}}}
\newcommand{\peval}{\alpha_{\text{eval}}}
\newcommand{\picl}{\alpha_{\text{ICL}}}

\newcommand{\tinyllm}{\textcolor{scGreen}{\bf\texttt{Tiny-LLM}}\xspace}
\newcommand{\gptxl}{\textcolor{scYellow}{\bf\texttt{GPT2-xl}}\xspace}
\newcommand{\llamaoneb}{\textcolor{scBlue}{\bf\texttt{Llama-1b}}\xspace}
\newcommand{\llamaeightb}{\textcolor{scPurple}{\bf\texttt{Llama-8b}}\xspace}
\newcommand{\gemmaoneb}{{\bf\texttt{Gemma-1b}}\xspace}
\newcommand{\qwenzerosixb}{\textcolor{scCyan}{\bf\texttt{Qwen-0.6b}}\xspace}

\newcommand{\langgame}{{\textsc{Language Game}}\xspace}
\newcommand{\cute}{{\textsc{Cute}}\xspace}

\newcommand{\norm}[1]{\left\lVert #1 \right\rVert}

\newcommand{\cV}{\mathcal{V}}
\newcommand{\R}{\mathbb{R}}

\usepackage[preprint]{icml2026}

\usepackage[utf8]{inputenc} %
\usepackage[T1]{fontenc}    %
\usepackage{url}            %

\usepackage{natbib}

\usepackage{xspace}
\newcommand{\eg}{\textit{e.g.}\xspace}
\newcommand{\ie}{\textit{i.e.}\xspace}
\newcommand{\cf}{\textit{cf.}\xspace}
\newcommand{\wrt}{w.r.t.\xspace}

\newcommand{\tok}[2]{%
  \tikz[baseline=(tok.base)]\node[fill=#1!40,inner sep=1pt,font=\strut](tok){#2};%
}

\renewcommand{\epsilon}{\varepsilon}

\usepackage{shortex-light}
\usepackage{fontawesome5}

\usepackage{safecolours}

\colorlet{scLightCyan}{scCyan!60}
\colorlet{scLightPurple}{scPurple!60}
\colorlet{scLightGreen}{scGreen!60}
\colorlet{scLightRed}{scRed!60}
\colorlet{scLightYellow}{scYellow!60}

\usepackage{tikz}          %
\usetikzlibrary{patterns}

\usepackage{pgfplots}      %
\pgfplotsset{compat=newest}
\pgfplotsset{unbounded coords=jump}
\usepgfplotslibrary{groupplots} %
\usepgfplotslibrary{colorbrewer}

\NewDocumentCommand{\printmynumber}{ m }
{%
  \pgfmathfloatifflags{#1}{3}
  {}%
  {\pgfmathprintnumber{#1}}%
}

\usepackage[table]{xcolor}        %
\usepackage{graphicx}      %

\usepgfplotslibrary{external}

\newlength\figureheight
\newlength\figurewidth
\icmltitlerunning{Stochasticity in Tokenisation Improves Robustness}

\begin{document}

\twocolumn[
\icmltitle{Stochasticity in Tokenisation Improves Robustness}

\begin{icmlauthorlist}
\icmlauthor{Sophie Steger}{tugraz}
\icmlauthor{Rui Li}{aalto}
\icmlauthor{Sofiane Ennadir}{microsoft}
\icmlauthor{Anya Sims}{oxford}
\icmlauthor{Arno Solin}{aalto}
\icmlauthor{Franz Pernkopf}{tugraz}
\icmlauthor{Martin Trapp}{kth}
\end{icmlauthorlist}

\icmlaffiliation{tugraz}{Institute of Signal Processing and Speech Communication, Graz University of Technology, Graz, Austria}
\icmlaffiliation{aalto}{ELLIS Institute Finland \& Aalto University, Espoo, Finland}
\icmlaffiliation{microsoft}{King AI Labs, Microsoft Gaming}
\icmlaffiliation{oxford}{University of Oxford, Oxford, United Kingdom}
\icmlaffiliation{kth}{KTH Royal Institute of Technology, Stockholm, Sweden}

\icmlcorrespondingauthor{Sophie Steger}{sophie.steger@tugraz.at}
\icmlkeywords{Tokenisation, Large-language models}

\vskip 0.3in
]

\printAffiliationsAndNotice{}  %

\begin{abstract}
    The widespread adoption of large language models (LLMs) has increased concerns about their robustness. Vulnerabilities in perturbations of tokenisation of the input indicate that models trained with a deterministic canonical tokenisation can be brittle to adversarial attacks. Recent studies suggest that stochastic tokenisation can deliver internal representations that are less sensitive to perturbations. In this paper, we analyse how stochastic tokenisations affect robustness to adversarial attacks and random perturbations. We systematically study this over a range of learning regimes (pre-training, supervised fine-tuning, and in-context learning), data sets, and model architectures. We show that pre-training and fine-tuning with uniformly sampled stochastic tokenisations improve robustness to random and adversarial perturbations. 
    Evaluating on uniformly sampled non-canonical tokenisations reduces the accuracy of a canonically trained Llama-1b model by 29.8\%. We find that training with stochastic tokenisation preserves accuracy without increasing inference cost. 
\looseness-1

\end{abstract}

\section{Introduction}
\begin{figure}[t]
    \centering\footnotesize

    \setlength{\figureheight}{0.4\linewidth}
    \setlength{\figurewidth}{0.8\linewidth}

    \pgfplotsset{
        x tick label style={font=\scriptsize}, 
        y tick label style={rotate=90, font=\scriptsize}, 
        ylabel={\small Avg.\ test accuracy $\rightarrow$},
        xlabel={\small Normalised number of splits (amount of perturbation) $\rightarrow$},
        scale only axis,
        tick align=outside,
        tick pos=left,
        every axis plot/.append style={mark size=3pt},
        axis x line*=bottom,
        axis y line*=left,
        axis line style={draw=none},
        grid style={solid},
        yticklabel={
          \pgfmathparse{100*\tick}%
          \pgfmathprintnumber[fixed,precision=0]{\pgfmathresult}\%
        },
        legend style={
          legend columns=4,
          legend cell align=left,
          draw opacity=1,
          text opacity=1,
          at={(0.03,1.03)},
          anchor=south west,
          draw=none
        },
        }    
    \begin{tikzpicture}

\begin{axis}[
  height=\figureheight,
  width=\figurewidth,
  ybar,
  bar width=6pt,
  enlarge x limits=0.1,
  symbolic x coords={0,0.1,0.5,1,3},
  xtick=data,
  xtick style={color=black},
  ytick style={color=black},
  ymajorgrids,
  grid style={scGrey},
  legend cell align={left},
  legend style={
    fill opacity=1,
    draw opacity=1,
    text opacity=1,
    at={(0.03,1.03)},
    anchor=south west,
    draw=none
  },
]

\addplot+[very thick, draw=black, fill=black, bar shift=-12pt]
coordinates { (0,0.94) ({0.1},0.9284) ({0.5},0.8861) (1,0.8170) (3,0.8165) };
\addlegendentry{\textsc{Canon}}

\addplot+[very thick, draw=scRed!30, fill=scRed!30, bar shift=-4pt]
coordinates { (0,0.95) ({0.1},0.9438) ({0.5},0.9442) (1,0.9484) (3,0.9498) };
\addlegendentry{\textsc{STok}}

\addplot+[very thick, draw=scRed!60, fill=scRed!60, bar shift=4pt]
coordinates { (0,0.95) ({0.1},0.9456) ({0.5},0.9522) (1,0.9554) (3,0.9586) };
\addlegendentry{\textsc{STok-uni}}

\addplot+[very thick, draw=scRed!90, fill=scRed!90, bar shift=12pt]
coordinates { (0,0.948) ({0.1},0.9498) ({0.5},0.96) (1,0.9664) (3,0.9656) };
\addlegendentry{\textsc{Uni-k}}

\end{axis}
\end{tikzpicture}

  \caption{Effect of stochastic tokenisation during testing. As the level of stochasticity increases (increasing number of splits), the accuracy of \llamaoneb trained on \langgame with canonical tokenisation (\textsc{Canon}) sharply drops while the same model fine-tuned with stochastic tokenisation (\textsc{Stok}, \textsc{Stok-uni}, or \textsc{uni-k}) remains robust to perturbations during testing. 
  }
  \vspace{-1em}
  \label{fig:teaser}
\end{figure}
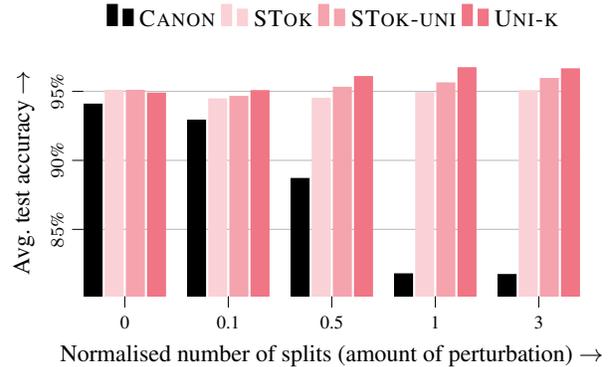
In recent years, large language models (LLMs) have demonstrated remarkable capabilities across a wide range of natural language processing tasks, leading to their rapid and widespread adoption \cite{achiam2023gpt, team2023gemini, grattafiori2024llama, brown2020language}.
As their use has increased, so too have concerns about their safety and robustness \cite{li2025security,dong2024attacks,geh2025adversarial}.
Various approaches have been proposed to tackle these safety concerns, including fine-tuning \cite{wei2022finetuned,agarwal2024onpolicy}, preference alignment \cite{rafailov2023direct,song2024preference}, and red teaming \cite{perez2022redteaming,deng2023attack}.
However, studying their robustness to perturbations in the inputs, for example, as adversarial attacks, remains underexplored.

Subword tokenisation, a standard component of modern LLMs, has recently attracted attention in the study of LLM robustness.
In LLMs, a tokeniser, such as Byte Pair Encoding (BPE) \citet{sennrich2016neural}, encodes text into a sequence of tokens through a typically deterministic encoding function.
Those token sequences then serve as the basis for all subsequent computations in LLMs.
This makes them a natural point of attack. 
Although decoding is a many-to-one mapping, text is typically encoded using a deterministic function that returns the canonical tokenisation. However, multiple token sequences can reconstruct the same string, referred to as non-canonical tokenisations. In standard training, the models are exposed to only the canonical tokenisation. 
Recently, \citet{zheng2025broken,kaplan2025from} investigated the robustness of pre-trained LLMs against non-canonical tokenisation and found that random non-canonical tokenisation generally significantly reduces performance. Further, \citet{geh2025adversarial} showed that alignment in LLMs can be broken by finding adversarial attacks in the space of tokenisations without changing the content of the query.
These results indicate that LLMs are surprisingly brittle with respect to non-canonical tokenisation, as also highlighted in \Cref{fig:teaser}. 
A potential remedy is to train with \emph{stochastic tokenisations}, which expose the model to alternative segmentations of the same string while keeping the vocabulary fixed. BPE, for example, retains all intermediate merges, enabling multiple valid token sequences per string. BPE-dropout \cite{provilkov2020bpe} introduces stochasticity by skipping merge rules at random. More recently, \citet{sims2026stochastok} proposed \stok, which randomly splits tokens into equivalent subtokens with some small probability. These works focus on improvements to subtoken-level understanding; consequently, the extent to which training under stochastic tokenisations affects robustness remains an open question.

In this work, we tackle the fundamental question: \textbf{Can stochastic tokenisation improve robustness?}
To this end, we investigate how stochastic tokenisation during training affects robustness against random and adversarial tokenisations.
We study various training regimes, including stochastic tokenisation during pre-training, fine-tuning, and in-context learning (ICL). 
In our study, we use \stok as the starting point, but find that \stok introduces a biased sampling distribution over tokenisations. 
Therefore, we devise \stokuni, which generates uniformly distributed tokenisations conditioned on a specific per-token split count, and \unik, which generates uniformly distributed tokenisations with a specified number of total splits. 
Both can represent a larger set of non-canonical tokenisations compared to \stok, \ie they have larger support. 
Our findings indicate that \stok helps generalise across random tokenisations during testing, but for worst-case (adversarial) evaluations, we observe a significant drop in test performance compared to \stokuni and \unik. 
Lastly, we provide theoretical insights into the adversarial robustness introduced by stochastic tokenisation during training.

Our contributions can be summarised as follows:
{\em(i)}~We study how training with stochastic tokenisation affects robustness against random and adversarial tokenisations (\cref{sec:robustness_analysis}).
{\em(ii)}~We analyse the sampling distribution of \stok, finding that it is biased towards certain tokenisations and introduce unbiased uniform sampling schemes (\cref{sec:methods}). 
{\em(iii)}~Lastly, we provide theoretical insights into adversarial robustness introduced by stochastic tokenisation (\cref{sec:adversarial}).

\section{Background}\label{sec:preliminaries}

We now introduce the necessary notation and formally define relevant concepts on tokenisation in LLMs.

\textbf{Notation.}\;
Let $\mcX$ denote an alphabet of characters $x_i$ (or bytes), and let $\bx=(x_1,\dots,x_n)\in\mcX^*$ be a string, \ie, a character sequence. The tokenisation of $\bx$ \wrt a token vocabulary $\mcV$ is a token sequence $\bv = (v_1,\dots,v_m)\in\mcV^*$ such that $v_1 \circ \dots \circ v_m = \bx$, where $\circ$ is the string concatenation. The vocabulary $\mcV$ is a finite set of string sequences $\mcV \subseteq \mcX^*$. 
We denote the encoding function $\tau\colon\mcX^*\to\mcV^*$ and the decoding function $\kappa\colon\mcV^*\to\mcX^*$, which satisfy $\kappa(\tau(\bx)) =\bx$ for all $\bx \in\mcX^*$.
\textbf{Non-canonical tokenisation.}\;
Subword tokenisation is the most prevalent tokenisation scheme in modern LLMs, mapping a sequence of characters to tokens. Which character sequences are merged is determined by the tokenisation scheme. For instance, byte-pair encoding \citep[BPE;][]{gage1994new} starts with a vocabulary that includes all individual characters. This vocabulary is then iteratively extended by merging the most frequent adjacent pairs of tokens in a given training corpus \citep{sennrich2016neural}. The procedure stops once a predetermined vocabulary size is reached.

However, this mapping is not unique. Given a token vocabulary $\mcV$, there may exist multiple token sequences that reconstruct the same underlying string sequence. 

\paragraph{Example.}\;
Consider the string $\bx={(\texttt{revolution})}$ and suppose the vocabulary
\begin{align*}
\mcV=\{\tok{scGrey}{\texttt{a}},\dots,\tok{scGrey}{\texttt{z}},\tok{scRed}{\texttt{re}},\tok{scYellow}{\texttt{v}},\tok{scPurple}{\texttt{ol}}, \tok{scCyan}{\texttt{ution}}, 
\tok{scBlue}{\texttt{\texttt{revolution}}}\}
\end{align*}
contains all single characters as well as the listed merged character sequences.
The canonical tokenisation is the single-token decomposition
$$
\bv^c=(\tok{scBlue}{\texttt{\texttt{revolution}}}).
$$
Other possible \textit{tokenisations} include
\begin{align*}
&\bv =(\tok{scRed}{\texttt{re}},\tok{scYellow}{\texttt{v}},\tok{scPurple}{\texttt{ol}},\tok{scCyan}{\texttt{ution}}),
\end{align*}
and, at the extreme, the character-level tokenisation
$$
\bv^{\text{char}}=(\tok{scGrey}{\texttt{r}},\tok{scGrey}{\texttt{e}},\tok{scGrey}{\texttt{v}},\tok{scGrey}{\texttt{o}},\tok{scGrey}{\texttt{l}},\tok{scGrey}{\texttt{u}},\tok{scGrey}{\texttt{t}},\tok{scGrey}{\texttt{i}},\tok{scGrey}{\texttt{o}},\tok{scGrey}{\texttt{n}}),
$$
all of which satisfy $v_1\circ\cdots\circ v_m=\bx$.

We denote the set of all \emph{valid tokenisations} \wrt{} a vocabulary $\mcV$ as $\mcT_\mcV(\bx) = \{ \bv : v_1 \circ \dots \circ v_m = \bx \}$, call $\bv^c=\tau(\bx)$ the \emph{canonical} tokenisation, and refer to any $\bv\in\mcT_{\mcV}(\bx)\setminus\{\bv^c\}$ as a \emph{non-canonical tokenisation}.

\textbf{\stok.}\;
Recently, \citet{sims2026stochastok} introduced a stochastic tokenisation scheme that is compatible with any tokeniser. It creates non-canonical tokenisations by randomly splitting tokens into pairs of smaller subtokens by following the merge rules reflected in the vocabulary.

Consider a string $\bx$ and its canonical tokenisation $\bv^c=(v_1,\dots,v_m)$, and define the  expansion proportion $\alpha$. Then, \stok aims to create a total of $N=\alpha \cdot m$ splits by iteratively splitting the canonical tokenisation, \ie, a subtoken $v_i$ is selected uniformly at random and then replaced with a pair of subtokens $(t_1,t_2)\in\mcV^2$. This procedure is repeated on the extended sequence $(v_1,\dots,v_{i-1},t_1,t_2,v_{i+1},\dots,v_m)$ for $N$ iterations. 

\textbf{Adversarial tokenisation.}\;
The goal of adversarial tokenisation is to search for tokenisations of a string $\bx$ that change an LLM's correct prediction, \ie, break an alignment of an LLM. As the number of non-canonical tokenisations grows exponentially with the length of the string $|\bx|$, an exhaustive enumeration is infeasible. Instead, \citet{geh2025adversarial} proposed performing an adversarial search over local neighbourhoods in the tokenisation space. 

Let $d(\cdot,\cdot)$ be a token-level edit distance (Levenshtein distance \cite{levenshtein1965binary}) with insertion cost $1$ and deletion cost $0$. 
For a reference tokenisation $\bv$, define the set of tokenisations at edit distance $i$ as $\mcT_\mcV^i(\bx,\bv)=\{ \bu: \bu\in\mcT_\mcV(\bx) \land d(\bv,\bu)=i\}$. The neighbourhood of $\bv$ is defined as $\mathrm{Ne}(\bv)=\mcT^2_\mcV(\bx,\bv)$ whose cardinality is bounded by $\mcO(|\bv|^2)$. 

In this work, we focus on multiple-choice question answering tasks. 
Let $\bx$ be a question string with tokenisation $\bv$, and $\{\ba_c\}_{c=1}^C$ be a set of answer strings with tokenisation $\{\bv_{\ba_c}\}_{c=1}^C$. The model assigns a score, \ie, 
\begin{equation}
  z_{c}(\bv)=p(\bv_{\ba_c} \mid \bv) ,
\end{equation}
to each answer option. When attacking, our aim is to maximise the margin of violation:
\begin{equation}
    \bv^\star = \argmax_{\bu\in\mcT_\mcV(\bx)}\left(\max_{c\neq y} z_c(\bu)-z_y(\bu)\right).
\end{equation}
where $y$ denotes the true class and $c$ the most probable wrong class.
To approximate this, \citep{geh2025adversarial} proposed a greedy search over a local neighbourhood. \Cref{alg:adv_tok} (\Cref{app:pseudocodes}) summarises the procedure used in this work.

\section{Related Work}

\textbf{Robustness to tokenisations.}\;
Typically, LLMs are trained with deterministic subword tokenisation \cite{sennrich2016neural, kudo2018subword}, which has been argued to obscure sub-token and character-level information \cite{chai2024tokenization,edman2024cute}. Recent work has demonstrated that models still encode within-word information without direct access to its characters \cite{itzhak2022models, hiraoka2025spelling}, with larger models performing better in general \cite{kaushal2022tokens}. Evidence suggests that models convert subword tokens into an implicit vocabulary, which allows models to retrieve character-level information \cite{feucht2024token, kaplan2025from}. In addition, \citet{zheng2025broken} demonstrate the robustness of canonically trained LLMs to unseen non-canonical tokenisations at test time, especially after instruction-tuning. Post-training on curated datasets with canonical tokenisation, however, has been argued to render models vulnerable to adversarial tokenisation attacks \cite{geh2025adversarial}.

\textbf{Training with stochastic tokenisations.}\; 
Several works have proposed to train with stochastic tokenisations to improve the model's subword understanding. Among the first, \citet{provilkov2020bpe} introduce stochasticity by randomly dropping merge rules. \citet{cognetta2024distributional} demonstrate that BPE-dropout \cite{provilkov2020bpe} and MaxMatch/Unigram \cite{wu2016google} result in non-uniform distribution over tokenisations, and show improvements in machine translation quality with uniform sampling. \citet{bauwens2025grampa} argue that uniform sampling leads to tokens with reduced character count and propose length-aware sampling. Most recently, \citet{sims2026stochastok} randomly split tokens into pairs of sub-tokens, showing improved sub-word understanding. In addition, results suggest improved robustness to non-canonical tokenisations.

\section{Does \stok Improve Robustness?} \label{sec:robustness_analysis}
In the following section, we investigate whether training with \stok instead of canonical tokenisation improves robustness to non-canonical tokenisations. Our aim is to provide a practical overview of the benefits of using stochastic tokenisation in different training regimes.
For our evaluation setup, we use the \langgame and \cute data sets from \citet{sims2026stochastok}. These data sets test sub-token-level understanding, such as counting or detecting letters within words, see \Cref{app:datasets} for examples. 
We assess the effect of stochastic tokenisation during (i) pre-training, (ii) supervised fine-tuning, and (iii) in-context learning. In particular, we evaluate the performance on canonical tokenisation during testing (clean case) and on stochastic tokenisation at various levels during testing, to assess the model's ability to solve the task in the ideal setting and its robustness to input perturbations.

\subsection{Does Pre-training Improve Robustness?} \label{subsec:pre-training}
First, we test whether \stok pre-training can instil both sub-token understanding and robustness. While choosing the pre-training scheme offers the highest flexibility, it also imposes a high computational burden. For this reason, we pre-train a small \tinyllm (50\,M-parameter) \cite{hillier2024super} model from scratch on OpenWebText and focus on fine-tuning and in-context learning for larger models. In \Cref{app:pretraining}, we report validation perplexity, accuracy on general multiple-choice benchmarks, and TokSuite \cite{altintacs2025toksuite} results after pre-training under both canonical and stochastic tokenisation.
Below, we follow \citet{sims2026stochastok} and perform canonical fine-tuning on \langgame to test whether sub-token understanding and robustness to alternative tokenisations persist. 
In particular, we evaluate the average accuracy for stochastic tokenisations sampled with \stok at increasing edit distances, normalised by the length of the canonical token sequence (see \Cref{eq:normalised_edit_distance}, \Cref{app:normalised_edit_distance}). Specifically, we used the range $\{0, 0.1, 0.5, 1, 3\}$, where zero corresponds to canonical, deterministic tokenisation. 
Note that all test-time stochastic tokenisations in this experiment are sampled with \stok. Hence, tokenisations are not sampled uniformly but reflect sampling biases induced by \stok. In \Cref{sec:methods}, we test the robustness to uniformly sampled tokenisations.

In \Cref{fig:robustness_finetuning_tiny_langgame} \textit{(left)}, we see that canonical pre-training, \ie, expansion proportion of $\ppre=0.0$, is insufficient to produce useful representations for solving these subtoken-level tasks, which is consistent with findings in \citet{sims2026stochastok}, and pre-training with stochastic tokenisation ($\ppre>0.0$) substantially improves accuracy.
Further, across all tokenisation schemes, we observe that accuracy degrades as the normalised number of splits increases. 
However, accuracy with stochastic pre-training remains substantially higher compared to canonical pre-training. 
Results for \cute exhibit the same trend and are shown in \Cref{app:fig:avg_acc_tinyllm_cute}, \Cref{app:exp-pretraining-finetuning}.  We note that although stochastic pre-training performs favourably in the clean case, it is not clear that it provides strong robustness benefits against alternative tokenisations. 

\begin{figure}[t!]
    \centering\footnotesize
    \setlength{\figureheight}{0.3\linewidth}    
    \pgfplotsset{
        x tick label style={font=\scriptsize}, 
        y tick label style={rotate=90, font=\scriptsize}, 
        ylabel={\small Avg.\ test accuracy $\rightarrow$},
        ylabel style={yshift=+1pt},
        xlabel={\small Normalised number of splits},
        scale only axis,
        tick align=outside,
        tick pos=left,
        axis x line*=bottom,
        axis y line*=left,
        axis line style={draw=none},
        grid style={solid},
        }    
    \begin{subfigure}[t]{0.48\linewidth}
        \setlength{\figurewidth}{0.7\linewidth}
        \centering
        \begin{tikzpicture}
\begin{axis}[
  height=\figureheight,
  width=\figurewidth,
  ybar,
  bar width=3pt,
  enlarge x limits=0.1,
  symbolic x coords={0,0.1,0.5,1,3},
  xtick=data,
  xtick style={color=black},
  ytick style={color=black},
  ymajorgrids,
  grid style={scGrey},
  tick label style={font=\small},
  ymin=0.1,ymax=.85,
    yticklabel={
      \pgfmathparse{100*\tick}\pgfmathprintnumber[fixed,precision=0]{\pgfmathresult}\%
    },
  axis y line*=left,
  legend cell align={left},
  legend style={font=\scriptsize,legend columns=2,at={(0.4,1.05)},anchor=south,draw=none,/tikz/every even column/.append style={column sep=4pt}},
  legend image post style={xscale=0.6,yscale=1},
]
\addlegendimage{empty legend}
\addlegendentry{\small\textit{Pre-training:}}
\addlegendimage{empty legend}
\addlegendentry{}

\addplot+[very thick,draw=black,fill=black,bar shift=-5pt] coordinates {(0,0.411) ({0.1},0.3822) ({0.5},0.3084) (1,0.2803) (3,0.2622)};
\label{plt:pretraining:canon}
\addlegendentry{$\ppre=0.0$}
\addplot+[very thick,draw=black!60,fill=black!60,bar shift=0pt] coordinates {(0,0.787) ({0.1},0.7642) ({0.5},0.6879) (1,0.6040) (3,0.4364)};
\addlegendentry{$\ppre=0.1$}
\addplot+[very thick,draw=black!30,fill=black!30,bar shift=5pt] coordinates {(0,0.750666666666667) ({0.1},0.769533333333333) ({0.5},0.729666666666667) (1,0.652666666666667) (3,0.493333333333333)};
\addlegendentry{$\ppre=0.5$}
\end{axis}
\end{tikzpicture}
    \end{subfigure}
    \hfill
    \begin{subfigure}[t]{0.49\linewidth}
        \setlength{\figurewidth}{\linewidth}
        \centering
        \pgfplotsset{
            ylabel={},
            ylabel style={yshift=0pt},
            yticklabel=\empty,
            ymajorticks=false
        } 
        \begin{tikzpicture}
\begin{axis}[
  height=\figureheight,
  width=\figurewidth,
  ybar,
  bar width=3pt,
  enlarge x limits=0.12,
  symbolic x coords={0,0.1,0.5,1,3},
  xtick=data,
  xtick style={color=black},
  ymajorgrids,
  grid style={scGrey},
  tick label style={font=\small},
  ymin=0.1,ymax=.85,
  legend cell align={left},
  legend style={font=\scriptsize,legend columns=2,at={(0.5,1.05)},anchor=south,draw=none,/tikz/every even column/.append style={column sep=4pt}},
    legend image post style={xscale=0.6,yscale=1},
]

\addlegendimage{empty legend}
\addlegendentry{\small\textit{Fine-tuning:}}
\addlegendimage{empty legend}
\addlegendentry{\small\;}

\addplot+[very thick,draw=black!60,fill=black!60,bar shift=-7.5pt] coordinates {(0,0.787) ({0.1},0.7642) ({0.5},0.6879) (1,0.6040) (3,0.4364)};
\addlegendentry{$\pfine=0.0$}
\addplot+[very thick,draw=scRed!90,fill=scRed!90,bar shift=-2.5pt] coordinates {(0,0.814) ({0.1},0.8114) ({0.5},0.8084) (1,0.7926) (3,0.7388)};
\addlegendentry{$\pfine=0.1$}
\addplot+[very thick,draw=scRed!60,fill=scRed!60,bar shift=2.5pt] coordinates {(0,0.78) ({0.1},0.7852) ({0.5},0.7878) (1,0.7860) (3,0.7810)};
\addlegendentry{$\pfine=0.5$}
\addplot+[very thick,draw=scRed!30,fill=scRed!30,bar shift=7.5pt] coordinates {(0,0.818) ({0.1},0.8234) ({0.5},0.8282) (1,0.8280) (3,0.8232)};
\addlegendentry{$\pfine=1.0$}
\end{axis}
\end{tikzpicture}
    \end{subfigure}
    \caption{\textit{(Left)} Average test accuracy of \tinyllm \emph{pre-trained} on \langgame with canonical tokenisation~\ref{plt:pretraining:canon} and \stok tokenisation. Stochastic tokenisation performs favourably in all settings and provides mild robustness to input perturbations (normalised number of splits $>0$).
    \textit{(Right)} Additionally fine-tuning with \stok tokenisation ($\pfine>0$) after pre-training with $\ppre=0.1$ substantially improves robustness.
}
\label{fig:robustness_finetuning_tiny_langgame}
\end{figure}

\subsection{Does Fine-tuning Improve Robustness?} \label{subsec:stochastok-fine-tuning}
As pre-training with \stok alone provides mild robustness against noisy inputs, we now address the following two questions: (i) does fine-tuning with \stok after pre-training with \stok instil additional robustness, and if so (ii) is fine-tuning with \stok sufficient to provide robustness against noisy inputs?
Previously, \citet{geh2025adversarial} found that large models, pre-trained on massive, possibly noisy, text corpora, exhibit a certain robustness to non-canonical tokenisations. However, fine-tuning on smaller, curated data sets was argued to introduce vulnerabilities. Hence, question (ii) examines if fine-tuning with \stok prevents those vulnerabilities.

To address the first question, we compare canonical fine-tuning against fine-tuning with \stok using varying expansion proportions $\pfine\in\{0.1,0.5,1\}$. 
In all cases, we start with stochastic pre-training using $\ppre=0.1$.
In \Cref{fig:robustness_finetuning_tiny_langgame} (right), we observe that additional stochastic fine-tuning substantially improves robustness against input perturbations and does not degrade accuracy on canonical tokenisations. 
In \Cref{fig:app-langgame_tiny-p0} (\Cref{app:exp-pretraining-finetuning}), we provided additional ablations on canonical pre-training. %

Next, we test whether this effect translates to larger canonically pre-trained models. 
For this, we use a publicly available model checkpoint of \llamaoneb and fine-tune a low-rank adapter (LoRA) \cite{hu2022lora}. 
\Cref{fig:finetuning_langgame_llama1b} demonstrates that, while the \llamaoneb model is able to solve the subword-level tasks without stochastic pre-training, it is brittle to non-canonical tokenisations during evaluation. Again, a moderate expansion proportion ($\pfine=0.1$) significantly increases robustness, with a slight drop at a normalised number of splits of $\peval=3$. Increased levels of stochasticity ($\pfine\in\{0.1, 0.5\}$) lead to almost constant accuracy at all levels of $\peval$.
This suggests that stochastic tokenisation during fine-tuning is an effective way to improve robustness to non-canonical tokenisation, even in the case of canonical pre-training, and effectively reduces potential vulnerabilities to non-canonical tokenisation.

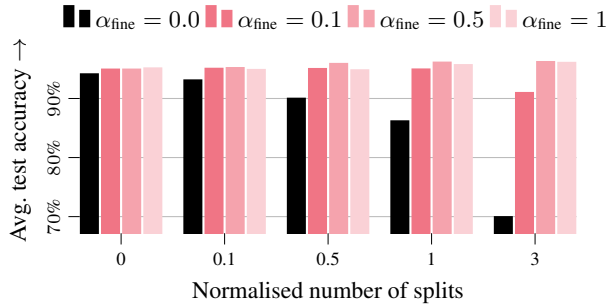
\begin{figure}[t]
    \centering

\centering\footnotesize
\setlength{\figureheight}{0.3\linewidth}
\setlength{\figurewidth}{0.8\linewidth}

\pgfplotsset{
    x tick label style={font=\scriptsize}, 
    y tick label style={rotate=90, font=\scriptsize}, 
    ylabel={\small Avg.\ test accuracy $\rightarrow$},
    xlabel={\small Normalised number of splits},
    scale only axis,
    tick align=outside,
    tick pos=left,
    every axis plot/.append style={mark size=3pt},
    axis x line*=bottom,
    axis y line*=left,
    axis line style={draw=none},
    grid style={solid},
    }    

    \begin{tikzpicture}
\begin{axis}[
  height=\figureheight,
  width=\figurewidth,
  ybar,
  bar width=6pt,
  enlarge x limits=0.1,
  symbolic x coords={0,0.1,0.5,1,3},
  xtick=data,
  xtick style={color=black},
  ytick style={color=black},
  ymajorgrids,
    yticklabel={
      \pgfmathparse{100*\tick}\pgfmathprintnumber[fixed,precision=0]{\pgfmathresult}\%
    },
  grid style={scGrey},
  legend cell align={left},
  legend style={fill opacity=1,legend columns=4,draw opacity=1,text opacity=1,at={(-0.05,1.03)},anchor=south west,draw=none},
]

\addplot+[very thick, draw=black, fill=black, bar shift=-12pt]
coordinates { (0,0.94) ({0.1},0.9297) ({0.5},0.8987) (1,0.8604) (3,0.6982) };
\addlegendentry{$\pfine=0.0$}

\addplot+[very thick, draw=scRed!90, fill=scRed!90, bar shift=-4pt]
coordinates { (0,0.948) ({0.1},0.9494) ({0.5},0.9490) (1,0.9482) (3,0.9084) };
\addlegendentry{$\pfine=0.1$}

\addplot+[very thick, draw=scRed!60, fill=scRed!60, bar shift=4pt]
coordinates { (0,0.948) ({0.1},0.9506) ({0.5},0.9576) (1,0.9598) (3,0.9608) };
\addlegendentry{$\pfine=0.5$}

\addplot+[very thick, draw=scRed!30, fill=scRed!30, bar shift=12pt]
coordinates { (0,0.95) ({0.1},0.9474) ({0.5},0.9468) (1,0.9556) (3,0.9594) };
\addlegendentry{$\pfine=1$}

\end{axis}
\end{tikzpicture}
    \caption{Average test accuracy for canonically pre-trained \llamaoneb model and stochastic tokenisation during fine-tuning on \langgame.
    Stochastic fine-tuning with \stok consistently improves robustness to random perturbations during testing without affecting performance on canonical test data. \label{fig:finetuning_langgame_llama1b}
    }
    \vspace*{-1em}    
\end{figure}

\newcommand{\AccPrec}{1}   %
\newcommand{\DeltaPrec}{1} %

\newcommand{\Acc}[1]{%
  \pgfmathparse{100*(#1)}%
  \pgfmathprintnumber[fixed,fixed zerofill,precision=\AccPrec]{\pgfmathresult}%
}

\newcommand{\HeatVMax}{0.3}   %
\newcommand{\HeatMinInt}{5} %
\newcommand{\HeatSpanInt}{70} %

\newcommand{\HeatCell}[1]{%
  \begingroup
  \pgfmathsetmacro{\d}{#1}%
  \pgfmathsetmacro{\a}{min(abs(\d)/\HeatVMax,1)}%
  \pgfmathtruncatemacro{\I}{round(\HeatMinInt + \HeatSpanInt*\a)}%
  \ifdim \d pt < 0pt\relax
    \edef\HeatColorSpec{scPurple!\I}%
  \else
    \edef\HeatColorSpec{scCyan!\I}%
  \fi
  \expandafter\cellcolor\expandafter{\HeatColorSpec}%
  \pgfmathprintnumber[fixed,fixed zerofill,precision=\DeltaPrec]{\d}%
  \endgroup
}

\newcommand{\HeatCellColorBy}[2]{%
  \begingroup
  \pgfmathsetmacro{\d}{#1}%
  \pgfmathsetmacro{\p}{#2}%
  \pgfmathsetmacro{\a}{min(abs(\d)/\HeatVMax,1)}%
  \pgfmathtruncatemacro{\I}{round(\HeatMinInt + \HeatSpanInt*\a)}%
  \ifdim \d pt < 0pt\relax
    \edef\HeatColorSpec{scPurple!\I}%
  \else
    \edef\HeatColorSpec{scCyan!\I}%
  \fi
  \expandafter\cellcolor\expandafter{\HeatColorSpec}%
  \pgfmathparse{100*(\p)}%
  \pgfmathprintnumber[fixed,fixed zerofill,precision=\DeltaPrec]{\pgfmathresult}%
  \endgroup
}

\newcommand{\PairDelta}[2]{%
  \Acc{#1} &
  \begingroup
  \pgfmathsetmacro{\delta}{(#2)-(#1)}%
  \HeatCellColorBy{\delta}{#2}%
  \endgroup
}

\begin{table*}[t]
\centering\small
\renewcommand{\arraystretch}{1.1}
\setlength{\tabcolsep}{4pt}

\caption{Accuracy (\%) under canonical versus \stok ($\peval=3$) query tokenisation across data sets, $K=10$ few shot examples. }
\label{tab:avg_icl}
\resizebox{\textwidth}{!}{%

\begin{tabular}{@{}l ll *{7}{cc} @{}}
\toprule
& \multirow{2}{*}{\shortstack{\textbf{Context}\\\textbf{tokenisation}}} &
  \multirow{2}{*}{\textbf{$\picl$}} &
  \multicolumn{2}{c}{\textsc{Language Game}} &
  \multicolumn{2}{c}{\textsc{ARC-E}} &
  \multicolumn{2}{c}{\textsc{COPA}} &
  \multicolumn{2}{c}{\textsc{CSQA}} &
  \multicolumn{2}{c}{\textsc{HellaSwag}} &
  \multicolumn{2}{c}{\textsc{PIQA}} &
  \multicolumn{2}{c}{\textsc{SocialIQA}} \\
\cmidrule(l){4-5}\cmidrule(l){6-7}\cmidrule(l){8-9}\cmidrule(l){10-11}\cmidrule(l){12-13}\cmidrule(l){14-15}\cmidrule(l){16-17}
& & &
\textbf{canon.} & \textbf{\textsc{STok}} &
\textbf{canon.} & \textbf{\textsc{STok}} &
\textbf{canon.} & \textbf{\textsc{STok}} &
\textbf{canon.} & \textbf{\textsc{STok}} &
\textbf{canon.} & \textbf{\textsc{STok}} &
\textbf{canon.} & \textbf{\textsc{STok}} &
\textbf{canon.} & \textbf{\textsc{STok}} \\
\midrule

& Canonical & -  & \PairDelta{0.838}{0.723} & \PairDelta{0.782}{0.738} & \PairDelta{0.920}{0.882} & \PairDelta{0.684}{0.652} & \PairDelta{0.548}{0.512} & \PairDelta{0.814}{0.802} & \PairDelta{0.524}{0.472} \\
& \stok & 0.1  & \PairDelta{0.846}{0.764} & \PairDelta{0.770}{0.747} & \PairDelta{0.920}{0.883} & \PairDelta{0.684}{0.647} & \PairDelta{0.552}{0.518} & \PairDelta{0.818}{0.801} & \PairDelta{0.526}{0.502} \\
& \stok & 0.5  & \PairDelta{0.870}{0.802} & \PairDelta{0.776}{0.760} & \PairDelta{0.920}{0.882} & \PairDelta{0.674}{0.657} & \PairDelta{0.542}{0.513} & \PairDelta{0.820}{0.805} & \PairDelta{0.540}{0.507} \\
& \stok & 1.0  & \PairDelta{0.864}{0.807} & \PairDelta{0.776}{0.760} & \PairDelta{0.910}{0.877} & \PairDelta{0.666}{0.654} & \PairDelta{0.542}{0.515} & \PairDelta{0.828}{0.808} & \PairDelta{0.530}{0.511} \\
\bottomrule
\end{tabular}
} 

\end{table*}

\subsection{Does In-context Learning Improve Robustness?} \label{subsec:stock-ict}
Lastly, we study the most restrictive scenario, where the model parameters are kept frozen.  We test whether robustness to stochastic tokenisations can be improved \emph{without parameter updates}, using in-context learning (ICL) \cite{yin2024deeper}. In ICL, the prediction for a test input $\bx^*$ is obtained by prepending solved examples $(\bx_1, \br_1, \dots, \bx_K, \br_K, \bx^*)$, while keeping model parameters fixed. Specifically, we analyse whether stochastic tokenisation of the context questions, $\bv_i \sim P(\cdot\mid \bx_i;\alpha_{\mathrm{ICL}})$, increases robustness to stochastic tokenisations of the query, $\bv^* \sim P(\cdot\mid \bx^*;\alpha_{\mathrm{eval}})$.

We evaluate robustness to non-canonical tokenisations of the query for the \langgame data set and general multiple-choice data sets. \Cref{tab:avg_icl} reports results for a \llamaeightb model using $K=10$ context examples. We compare accuracy for a canonical query and report the drop in accuracy when evaluated on non-canonical tokenisations at a normalised number of splits of $\peval=3$. 
Using a canonical context \textit{(first row)} consistently yields lower accuracy under non-canonical queries compared to the canonical tokenisation. Using \stok for context tokenisation yields notable robustness gains on \langgame, with modest improvements on the other benchmark datasets. %
While ICL imposes the lowest computational cost, it also yields lower robustness gains than full fine-tuning. In fine-tuning, the model is trained with 10k training samples, each tokenised with 10 random draws by \stok. In contrast, in ICL the model is presented with $K=10$ examples, each with a single \stok sample. %

Having established that \stok does improve robustness to non-canonical tokenisations, we investigate whether further gains are possible. We explore the limitations of \stok's distribution over tokenisations and propose two alternative sampling schemes (\stokuni and \unik), considering both randomly sampled and adversarially selected non-canonical tokenisations.

\section{Improving with Uniform Sampling} \label{sec:methods}
While training with \stok generally improves robustness, it comes with two key limitations.
Firstly, the sampling distribution $P_\textsc{STok}(\bv\mid \bx;\alpha)$ generated by \stok is non-uniform.  
Among segmentations with the same number of splits, it is biased towards those constructed from subtokens with fewer alternative split pairs. 
Secondly, the sampling process yields a distribution with incomplete support. We discuss these two limitations in detail below.

\textbf{Bias.}\;
First, let us call a subtoken pair $(v, v') \in \mcV^2$ \emph{valid} if, when merged via a two-way merge $v \circ v' = u$, they form a token $u \in \mcV$.
Now, let us denote the set of valid subtoken pairs as: $\mcS(v_i)=\{ (t,t') \in\mcV^2 : t\circ t' = v_i  \}$. 

Recall, \stok recursively splits tokens $v_i$ into (valid) token pairs $(t,t')$. Let us consider tokenisations obtained by performing exactly one further split. At this point, \stok selects the token to be split uniformly from $\{ t,t'\}$. Importantly, the two subtokens may have an unequal number of valid subtoken pairs, $|\mcS(t)|\neq|\mcS(t')|$. As the subsequent split pair is sampled uniformly from $\mcS(\cdot)$, each individual tokenisation is obtained via splitting $t$ or $t'$ with probability $\frac{1}{|\mcS(t)|}$ or $\frac{1}{|\mcS(t')|}$, respectively. Thus, the induced distribution over the tokenisations is inherently biased.

\textbf{Incomplete Support.}\;
Let's assume we have the canonical token \tikz[baseline=-.3em]\protect\node[fill=scGrey!40] {\texttt{revolution}};
Then, a potential non-canonical tokenisation with edit distance of $3$ by \stok is:
\begin{center}
    \begin{tikzpicture}[scale=0.95, transform shape]
    \node[fill=scGrey!40] (rev) at (3,-1.5) {\texttt{rev}};
        \node[draw=scGrey] (olution) at (5,-1.5) {\texttt{olution}};
        \node[fill=scGrey!40] (ol) at (4.5,-2.5) {\texttt{ol}};
        \node[fill=scGrey!40] (ution) at (5.5,-2.5) {\texttt{ution}};

        \node[inner sep=0pt, outer sep=0pt] (m1) at (4,-1) {\tiny\faCircle};
        \node[inner sep=0pt, outer sep=0pt] (m2) at (5,-2) {\tiny\faCircle};

        \draw (rev) -- (m1);
        \draw (olution) -- (m1);

        \draw (ol) -- (m2);
        \draw (ution) -- (m2);
        \draw (m2) -- (olution);
    \end{tikzpicture}    
\end{center}
In this example, we iteratively apply 2 two-way merges to obtain the initial canonical token from the respective subtokens.
Another potential tokenisation is by splitting the initial token directly into three subtokens, \ie, 
\begin{center}
    \begin{tikzpicture}[scale=0.95, transform shape]
    \node[fill=scGrey!40] (re-m) at (3,-4) {\texttt{re}};
    \node[fill=scGrey!40] (vol-m) at (4.1,-4) {\texttt{vol}};
    \node[fill=scGrey!40] (ution-m) at (5.5,-4) {\texttt{ution}};

    \node[inner sep=0pt, outer sep=0pt, scRed] (m21) at (4.1,-3.5) {\tiny\faCircle};

    \draw[scRed] (re-m) -- (m21);
    \draw[scRed] (vol-m) -- (m21);
    \draw[scRed] (ution-m) -- (m21);
\end{tikzpicture}    
\end{center}
where we note that all three subtokens, \tikz[baseline=-.3em]\protect\node[fill=scGrey!40] {\texttt{re}};, \tikz[baseline=-.3em]\protect\node[fill=scGrey!40] {\texttt{vol}};, and \tikz[baseline=-.3em]\protect\node[fill=scGrey!40] {\texttt{ution}};; are contained in the vocabulary.
However, neither \tikz[baseline=-.3em]\protect\node[draw=scGrey!40] {\texttt{revol}}; nor \tikz[baseline=-.3em]\protect\node[draw=scGrey!40] {\texttt{volution}}; are tokens in the vocabulary $\mcV$.
Thus, \stok cannot generate the tokenisation through a series of two-merges.
This example illustrates the second key issue: \stok does not have full support for all possible k-edit distance tokenisations of a token.

\begin{figure}[t]
\centering\scriptsize
    \pgfplotsset{
    axis x line*=bottom,
    axis y line*=left,
    grid style={solid},
    }    

\begin{subfigure}[t]{0.98\linewidth}
\centering

\setlength{\figurewidth}{0.85\linewidth}
\setlength{\figureheight}{0.2\linewidth}

\begin{tikzpicture}
    \begin{axis}[
      ybar,
      legend cell align={left},
      bar width=4pt,
      width=\figurewidth,
      height=\figureheight,
      ymin=0,
      ylabel={Probability (\%)},
      scale only axis,
      yticklabel={
      \pgfmathparse{100*\tick}%
      \pgfmathprintnumber[fixed,precision=0]{\pgfmathresult}%
      },
      xlabel={},
      symbolic x coords={
        rev-ol-u-tion,
        re-vol-u-tion,
        r-e-v-olution,
        rev-o-l-ution,
        re-vo-l-ution,
        rev-o-lu-tion,
        re-vol-uti-on,
        rev-ol-ut-ion,
        re-v-ol-ution,
        re-vo-lu-tion,
        rev-ol-uti-on,
        r-e-vol-ution,
        re-vol-ut-ion,
        r-ev-ol-ution,
      },
      xtick=data,
      xticklabel style={font=\ttfamily\tiny, rotate=35, anchor=east},
      enlarge x limits=0.05,
    ]
    \addplot+[bar shift=-2pt, fill=scRed, draw=scRed] table[
      col sep=comma,
      x=Segmentation,
      y=Frequency
    ] {
    Segmentation,Frequency
    rev-ol-u-tion,0.07471943295924395
    re-vol-u-tion,0.07412876550502068
    r-e-v-olution,0.0759007678676905
    rev-o-l-ution,0.0637920850561134
    re-vo-l-ution,0.0670407560543414
    rev-o-lu-tion,0.06556408741878322
    re-vol-uti-on,0.06585942114589487
    rev-ol-ut-ion,0.07531010041346722
    re-v-ol-ution,0.07147076196101594
    re-vo-lu-tion,0.07560543414057885
    rev-ol-uti-on,0.06556408741878322
    r-e-vol-ution,0.07058476077968104
    re-vol-ut-ion,0.0759007678676905
    r-ev-ol-ution,0.07855877141169522
    };
    \label{plt:seg_histogram:uniform}
    \addplot+[bar shift=+2pt,fill=black!60, draw=black!60] table[
      col sep=comma,
      x=Segmentation,
      y=Frequency] {
    Segmentation,Frequency
    r-ev-ol-ution,0.17307692307692307
    rev-o-l-ution,0.21875
    re-v-ol-ution,0.18509615384615385
    rev-ol-ut-ion,0.052884615384615384
    rev-ol-uti-on,0.04807692307692308
    rev-ol-u-tion,0.05048076923076923
    r-e-v-olution,0.27163461538461536
    };
    \label{plt:seg_histogram:stok}
    \end{axis}
\end{tikzpicture}
\end{subfigure}

\caption{Histogram of segmentation probabilities for two sampling schemes, \ref{plt:seg_histogram:stok} \stok and  \ref{plt:seg_histogram:uniform} \stokuni. \stok induces a biased distribution over a subset of all valid segmentations with limited support. \stokuni (ours), on the other hand, generates samples with equal probability and has full support over the set of non-canonical tokenisations.
}

\label{fig:seg_histograms}

\end{figure}
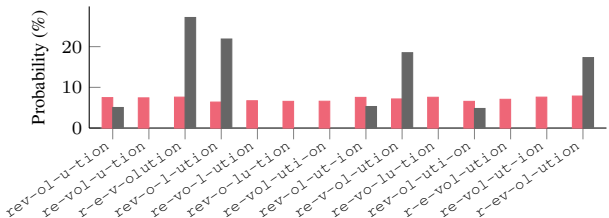

\subsection{Removing Bias and Increasing Support} 
To address these limitations and understand their influence on the robustness to stochastic tokenisations, we propose stochastic tokenisation schemes with progressively increasing uniformity and support.

\textbf{\stokuni.}\;
As a first step, we introduce a uniform extension of \stok that reduces sampling bias while preserving the per-token split-count distribution induced by \stok, which we call \stokuni. For this, we devise a two-stage procedure where (i) we generate the per-token split-count vector $\bS=(S_1, \dots, S_m)$ for a sequence of $m$ tokens following the distribution induced by \stok, and (ii) for each token independently, we uniformly sample non-canonical tokenisations with $S_i$ splits, that is, with $S_i+1$ segments.

First, note that \stok introduces a split-count distribution for a canonical token following a Beta-Binomial distribution, \cf, \Cref{app:sec:splitcounts}.
By doing so, \stok has a preference towards small split counts.
To sample $\bS=(S_1, \dots, S_m)$ with $\sum_i S_i = N$ we use a Dirichlet-multinomial model, $\bS \sim \distDirMult(N, \balpha)$, with symmetric concentration parameter $\balpha = \b1$.

For the second step, let $\bv^c=(v_1^c, \dots, v_m^c)$ denote the canonical token sequence. Recall that each token $v_i^c\in\mcV$ corresponds to a character sequence which we denote by $\tilde{\bx}_i \coloneqq \kappa(v_i^c)\in\mcX^*$.
Then, conditional on the split counts $\bS$, we sample tokenisations uniformly from $\mcT^{S_i+1}_\mcV(\tilde{\bx}_i, v_i^c)$, that is, from the set of tokenisations consisting of $S_i+1$ segments. 

Inspired by \citet{zheng2025broken}, we recursively construct a tree, where each root-to-leaf path corresponds to a non-canonical tokenisation $\bv\in\mcT_\mcV(\tilde{\bx}_i)$.
To ensure that the path length equals the number of segments $S_i+1$, we remove all paths longer or shorter than $S_i+1$. 
Next, we perform uniform sampling by descending the tree, selecting each outgoing edge with probability proportional to the number of leaves in its subtree. The sampling cost scales linearly with the number of segments, while constructing the tree scales exponentially with the sequence length of the token, $\mcO\left(2^{|\Tilde{\bx}|-1}\right)$. In our case, the cost remains manageable as we only construct the tree for each token $v_i^c$ independently.
\Cref{fig:seg_histograms} compares the empirical distribution over segmentation probabilities for \stok and \stokuni. 
Note that \stok covers only a subset of all non-canonical tokenisations, while \stokuni (i) provides full support and (ii) results in a uniform distribution over all tokenisations. 

\textbf{\unik.}\;
Note that, while \stokuni removes the key limitations of \stok, it remains restricted to per-token level splits. Also, the tokenisations are sampled uniformly \textit{conditioned} on the per-token split count. 
Henceforth, we now introduce \unik, which samples \emph{marginally} uniformly at edit distance $k$ from the set of non-canonical tokenisations over the \emph{entire input sequence}, \ie
$\bv \sim \distUnif\!\left(\mcT_\mcV^k(\bx,\bv^c)\right)$.
Creating the tokenisation tree recursively for the entire string $\bx$, is generally infeasible due to exponential scaling \wrt the length $|\bx|$. To address this, we encode the tokenisations with a Multi-valued Decision Diagram (MDD) \cite{geh2024signal}. By decomposing and reusing subsequences, the MDD encodes all tokenisations with a construction cost scaling linearly \wrt the tokeniser vocabulary and the sequence length, $\mcO(|\bx| \cdot|\mcV|)$. Sampling is performed by following edges proportional to the number of tokenisations of the sub-MDD. To sample uniformly at edit distance $k$, \citet{geh2025adversarial} proposed to construct a multi-rooted MDD (MRMDD), where the construction cost is upper bounded by $\mcO(|\bx|^2 \cdot|\mcV|)$. We adopt their MRMDD algorithm and select the edit distance $k$ to match the total split count $N$ of \stok and \stokuni. Note, however, that uniform sampling induces a different per-token split-count distribution (see \Cref{app:distribution}).

By construction, \unik can create subtokens across canonical token boundaries. Thus, providing even higher flexibility than \stokuni.
Lastly, we include full uniform sampling as a baseline, denoted \uni. This offers full support over $\mcT_\mcV(\bx)$, but substantially increases computational cost and the number of splits generated. In \Cref{app:sec:splitcounts} we present empirical results for the number of splits generated by each variant.

\subsection{Fine-tuning Robustness}\label{subsec:uniform-fine-tuning}
We now analyse if increased uniformity over non-canonical tokenisations improves robustness during fine-tuning.

\Cref{tab:finetune_schemes_langgame_llama1b} reports the drop in accuracy when switching from canonical queries to uniformly sampled tokenisations. 
As in \Cref{subsec:stochastok-fine-tuning}, canonical fine-tuning results in the largest drop in accuracy ($-29.8\%$). 
Further note that increasing the number of splits, \ie, increasing $\pfine$, improves robustness for all methods. 
Next, recall that both \stokuni and \unik produce uniform samples conditioned on the number of splits, but with different distributions over split-counts and different support.
Despite those differences, we observe that both approaches generally result in a smaller drop in accuracy than \stok, indicating that increased uniformity is beneficial.

In addition to the results shown in \Cref{tab:finetune_schemes_langgame_llama1b}, we note that \Cref{fig:teaser}, which assesses the performance for increasing levels of perturbations (normalised edit distance), further highlights the benefits of a uniform sampling distribution.

\begin{table}[t]
\centering\small
\setlength{\tabcolsep}{3pt}

\caption{Accuracy (\%) of \llamaoneb for (i) canonical query tokenisation ($\bv^*=\tau(\bx)$) or (ii) a randomly sampled non-canonical query tokenisation ($\mathrm{Uniform}(\mathcal{T}_{\mathcal{V}}(\bx))$) trained on \langgame under various fine-tuning strategies.}
\label{tab:finetune_schemes_langgame_llama1b}

\renewcommand{\HeatVMax}{0.30} %
\renewcommand{\HeatMinInt}{5} %
\renewcommand{\HeatSpanInt}{70} %

\newcommand{\HeatCellFinetuning}[1]{%
  \begingroup
  \pgfmathsetmacro{\d}{#1}%

  \pgfmathsetmacro{\a}{(abs(\d) / \HeatVMax)}%
  \pgfmathsetmacro{\a}{min(\a, 1)}%

  \pgfmathtruncatemacro{\I}{round(\HeatMinInt + \HeatSpanInt*\a)}%

  \ifdim \d pt < 0pt\relax
    \edef\HeatColorSpec{scPurple!\I}%
  \else
    \edef\HeatColorSpec{scCyan!\I}%
  \fi

  \expandafter\cellcolor\expandafter{\HeatColorSpec}%
  \pgfmathparse{100*(\d)}%
  \pgfmathprintnumber[fixed,showpos,precision=3]{\pgfmathresult}%
  \endgroup
}

\newcommand{\tableitem}[4]{%
  #1 & #2 & \Acc{#3} & \HeatCellFinetuning{#4}%
}

\begin{tabular}{@{}llcc@{}}
\toprule
\multirow{2}{*}{\textbf{Fine-tuning}} &
\multirow{2}{*}{\textbf{$\pfine$}} &
\multicolumn{2}{c}{\textbf{Accuracy}} \\
\cmidrule(l){3-4}
& & \textbf{$\bv^*$ canonical} & \textbf{$\bv^*$ uniform $\Delta$} \\
\midrule
\tableitem{Canonical}{0}{0.940}{-0.298} \\ \midrule

\tableitem{\stok{}}{0.1}{0.948}{-0.117} \\
\tableitem{}{0.5}{0.948}{-0.005} \\
\tableitem{}{1}{0.950}{-0.003} \\ \midrule

\tableitem{\stokuni{}}{0.1}{0.948}{-0.088} \\
\tableitem{}{0.5}{0.936}{-0.001} \\
\tableitem{}{1}{0.950}{0.014} \\ \midrule

\tableitem{\unik{}}{0.1}{0.938}{-0.048} \\
\tableitem{}{0.5}{0.962}{-0.006} \\
\tableitem{}{1}{0.948}{0.028} \\ \midrule

\tableitem{\uni{}}{--}{0.928}{0.060} \\

\bottomrule
\end{tabular}
\end{table}

\subsection{In-context Learning Robustness}\label{subsec:stochastic_scheme}
Having established that inducing a uniform sampling distribution improves robustness in the case of stochastic tokenisation, we now investigate the ICL setting.
For this, we expose the model to $K=10$ in-context examples for few-shot learning, with each example having a non-canonical tokenisation generated by \stok, \stokuni, \unik, or \uni, respectively.
Note that this setup is consistent with \cref{subsec:stock-ict} and that, in contrast to pre-training of fine-tuning experiments, only one non-canonical tokenisation per training example is provided.
Results in \cref{tab:uniformity_avg_icl} show that, while stochastic tokenisation during ICL generally provides mild improvements over canonical tokenisation, little difference between the stochastic tokenisation schemes can be observed. However, the most noticeable improvements are observed with \uni, which also provides the highest degree of stochasticity.
It may be necessary to provide a substantially larger number of in-context examples to induce robustness via ICL.

\begin{table*}[t]
\centering\small
\renewcommand{\arraystretch}{1.1}
\setlength{\tabcolsep}{4pt}

\renewcommand{\HeatVMax}{0.3} %
\renewcommand{\HeatMinInt}{0} %
\renewcommand{\HeatSpanInt}{70} %

\caption{Accuracy (\%) under canonical versus uniformly sampled query tokenisation across data sets, with $K=10$ few shot examples. 
}
\label{tab:uniformity_avg_icl}
\resizebox{\textwidth}{!}{%

\begin{tabular}{@{}l ll *{7}{cc} @{}}
\toprule
& \multirow{2}{*}{\shortstack{\textbf{Context}\\\textbf{tokenisation}}} &
  \multirow{2}{*}{\textbf{$\picl$}} &
  \multicolumn{2}{c}{\textsc{Language Game}} &
  \multicolumn{2}{c}{\textsc{ARC-E}} &
  \multicolumn{2}{c}{\textsc{COPA}} &
  \multicolumn{2}{c}{\textsc{CSQA}} &
  \multicolumn{2}{c}{\textsc{HellaSwag}} &
  \multicolumn{2}{c}{\textsc{PIQA}} &
  \multicolumn{2}{c}{\textsc{SocialIQA}} \\
\cmidrule(l){4-5}\cmidrule(l){6-7}\cmidrule(l){8-9}\cmidrule(l){10-11}\cmidrule(l){12-13}\cmidrule(l){14-15}\cmidrule(l){16-17}
& & &
\textbf{canon.} & \textbf{unif.} &
\textbf{canon.} & \textbf{unif.} &
\textbf{canon.} & \textbf{unif.} &
\textbf{canon.} & \textbf{unif.} &
\textbf{canon.} & \textbf{unif.} &
\textbf{canon.} & \textbf{unif.} &
\textbf{canon.} & \textbf{unif.} \\
\midrule

& Canonical & -  & \PairDelta{0.839}{0.703} & \PairDelta{0.781}{0.713} & \PairDelta{0.920}{0.846} & \PairDelta{0.683}{0.630} & \PairDelta{0.548}{0.502} & \PairDelta{0.814}{0.795} & \PairDelta{0.524}{0.451} \\ %
& \stok & 1.0  & \PairDelta{0.877}{0.818} & \PairDelta{0.774}{0.740} & \PairDelta{0.913}{0.870} & \PairDelta{0.671}{0.639} & \PairDelta{0.544}{0.509} & \PairDelta{0.819}{0.800} & \PairDelta{0.538}{0.475} \\ %
& \stokuni & 1.0  & \PairDelta{0.871}{0.813} & \PairDelta{0.773}{0.738} & \PairDelta{0.916}{0.873} & \PairDelta{0.667}{0.639} & \PairDelta{0.542}{0.510} & \PairDelta{0.819}{0.801} & \PairDelta{0.534}{0.477} \\ %
& \unik & 1.0  & \PairDelta{0.872}{0.808} & \PairDelta{0.771}{0.737} & \PairDelta{0.911}{0.872} & \PairDelta{0.676}{0.640} & \PairDelta{0.543}{0.509} & \PairDelta{0.819}{0.800} & \PairDelta{0.528}{0.489} \\ \midrule
& \uni & -  & \PairDelta{0.850}{0.817} & \PairDelta{0.776}{0.752} & \PairDelta{0.902}{0.870} & \PairDelta{0.623}{0.638} & \PairDelta{0.541}{0.510} & \PairDelta{0.819}{0.804} & \PairDelta{0.514}{0.498} \\
\bottomrule
\end{tabular}
} 
\end{table*}

\section{Robustness to Adversarial Tokenisation}\label{sec:adversarial}

We will now provide an empirical assessment of adversarial robustness induced by stochastic tokenisation, utilising the adversarial attack mechanism presented in \cite{geh2025adversarial}, and following the approach outlined in \cref{sec:preliminaries}.

\Cref{fig:adv_robustness_llama1b} shows the results for the \llamaoneb model fine-tuned canonically (\textsc{Canon.}) or using a stochastic tokenisation scheme and evaluated either on canonical tokens or under adversarial attacks.
We see that non-canonical tokenisation during fine-tuning substantially improves adversarial robustness, \ie, accuracy under canonical fine-tuning drops from $94\%$ to $6.1\%$ (almost $90$ percentage points), while under \unik we only observe a drop to $69.6\%$.
Further, we observe that increased uniformity at the same split budget provides increased robustness to adversarial tokenisations.
In \Cref{app:add_results_tinyllm} we provide additional results for \tinyllm and for the \cute dataset. In \Cref{app:bpedropout} we provide a comparison with \bpedropout for \llamaoneb and \qwenzerosixb models. In both cases, the findings are consistent with those presented here.

Lastly, we evaluate whether stochastic tokenisation during ICL provides any robustness to adversarial attacks. 
\Cref{tab:adv_icl_langgame} summarises the experiments and indicates that, surprisingly, ICL with stochastic tokenisation can provide mild robustness to adversarial tokenisation.

\subsection{Theoretical Analysis}\label{sec:theo_analysis}

Beyond the empirical results outlined in the previous section, we aim to theoretically characterise how tokenisation affects the robustness of the resulting model. In this perspective, we focus on evasion attacks, \ie, adversarial perturbations applied at test time. 
In this direction, let's consider a trained classifier $f\colon \mcV \to \mcY$ and let $v \in \mcV $ be an input with its associated label $y \in \mcY$, such that $f(v) = y$. The goal of an attacker is to craft a small additional perturbation to the input, such as to generate a point $\tilde{v}$ whose prediction $f(\tilde{v})$ is different from the original one. We note that in our setting, the function $f$ represents the LLM \emph{after} the tokeniser, comprising the embedding function and subsequent transformer blocks with self-attention.

As the generated adversarial perturbation should be ``similar'' to the original input, we consider a similarity budget $\epsilon$ (the maximum possible changes to the input), comprising the set of valid adversarial perturbations defined by a neighbourhood $\mcB(x, \epsilon)$. 
In our setting $\mcB(x, \epsilon) = \mcT^k_{\mcV}(x)$ with edit distance at most $k$. 
Now, the adversarial aim is to find, within $\mcB(x, \epsilon)$, points that 
result in the worst prediction (the model's \textit{adversarial risk}),
which 
can be formulated as:
\begin{equation}\label{equation:robustness_definition}
\mathcal{R}_{\epsilon}[f] = \mathop{\mathbb{E}}_{x \in \mathcal{D}_{\mathcal{X}}} \left[\sup_{\tilde{x} \in \mathcal{B}(x, \epsilon)} d_{\mathcal{Y}}\left(f\left(\tilde{x}\right), f\left(x\right)\right)\right] .
\end{equation}
From a defence perspective, the objective is to ensure that the risk remains small, thereby keeping predictions within the same output region and making it harder to find a valid perturbation. This goal can be formulated as:
\begin{definition}[Adversarial Robustness]\label{def:adv_robustness}
    The classifier $f$ is said to be $(\epsilon, \gamma)$-\emph{robust} if its adversarial risk with respect to the classifier $f$ satisfies: $\mathcal{R}_{\epsilon}[f] \leq \gamma.$
\end{definition}

\Cref{def:adv_robustness} uses an upper-bound perspective ($\gamma$) as computing the exact risk is generally intractable. 
A smaller value of $\gamma$ indicates stronger robustness guarantees. Consequently, comparing the resulting bounds provides a principled way to assess and contrast the model's robustness. 
Without loss of generality, we consider a $1$-layer Transformer-based model in which all activation functions are assumed to be $1$-Lipschitz, which is the case for many common activations, \eg,  \texttt{tanh} and \texttt{sigmoid} \cite{virmaux18}. We additionally assume the output of the embedding function to be bounded in terms of its norm (\eg, $[0, 1]^{n \times d}$, with $n$ being the number of tokens and $d$ the hidden dimension). In this setting, the model's underlying robustness, $\gamma$, can be derived and is given by the following. 

\begin{theorem}\label{theo:upper_bound_robustness}
    Let $f\colon \mathcal{X}\rightarrow\mathcal{Y}$ be a classifier following our considered problem setup, then $f$ is $(\epsilon, \gamma)$-\emph{robust}, with: 
    \begin{equation}
        \gamma = \sqrt{2} \lVert W\lVert
    \left(\frac{d}{d-1}\right)^2
    C_1 C_2 \sqrt{k} ,
    \end{equation}
    where $W$ is the weight matrix of the embedding layer and $C_1C_2$ is proportional to the Lipschitz constant of the attention block \cite{ennadir2025pool}. 
\end{theorem}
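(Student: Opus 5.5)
We bound the adversarial risk by a Lipschitz argument. The model is treated as the composition of three maps: the embedding $E(\bv)=\mathrm{onehot}(\bv)\,W$, the attention block $A$, and the $1$-Lipschitz activations and readout. We show that for every $\tilde\bv\in\mcB(\bv,\epsilon)=\mcT^k_\mcV(\bx)$,
\[
d_\mcY\bigl(f(\tilde\bv),f(\bv)\bigr)\le \mathrm{Lip}(A)\,\norm{E(\tilde\bv)-E(\bv)}
\]
holds, up to the normalisation factor. Since the resulting bound does not depend on $\bx$, it passes through the supremum and the expectation in \eqref{equation:robustness_definition}. This gives $\mathcal{R}_\epsilon[f]\le\gamma$ directly, which is \Cref{def:adv_robustness}.

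\textbf{Step 1 (embedding perturbation).} A tokenisation at edit distance at most $k$ from the reference differs from it in at most $k$ token positions. Edits are insertions, and deletions cost $0$. After aligning the sequences by padding, at most $k$ rows of the one-hot matrix change. Each changed row is a difference of two one-hot vectors, whose norm is $\sqrt2$. Hence
\[
\norm{E(\tilde\bv)-E(\bv)}_F\le \norm{W}\,\sqrt{2}\,\sqrt{k},
\]
which yields the factors $\sqrt2\norm{W}\sqrt k$.

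\textbf{Step 2 (attention block).} We invoke the Lipschitz bound for self-attention on bounded inputs from \citet{ennadir2025pool}. That result gives $\mathrm{Lip}(A)\le C_1C_2$, where the constants depend on the query, key and value weight norms and on the input bound. The assumption that embeddings lie in $[0,1]^{n\times d}$ is what makes this bound finite, since softmax attention is not globally Lipschitz.

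\textbf{Step 3 (normalisation).} We account for the factor $\bigl(\tfrac{d}{d-1}\bigr)^2$. We expect it to arise from the LayerNorm/centering map $x\mapsto x-\bar x$ or from the variance normalisation with $1/(d-1)$, applied twice in the block: once before attention and once before the output. Each application would contribute a factor $\tfrac{d}{d-1}$ to the Lipschitz constant on the bounded domain. Multiplying the constants of Steps 1 to 3, with all activations $1$-Lipschitz, gives $\gamma$.

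\textbf{Main obstacle.} We expect two steps to be delicate. The first is making Step 1 rigorous when insertions change the sequence length $n$. We would handle this by fixing a maximal length and zero-padding, and then arguing that the alignment induced by the optimal edit script perturbs at most $k$ rows. Rows that shift position also change, so a careful bound may pick up extra factors. We would therefore state the result under the aligned-position convention. The second is matching exactly the constant form $C_1C_2$ and the $\bigl(\tfrac{d}{d-1}\bigr)^2$ factor to the cited lemma. For this we would adopt that lemma's normalisation conventions verbatim.
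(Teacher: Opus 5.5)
Your proposal matches the paper's proof essentially step for step. The one-hot difference with at most $k$ changed columns, each of norm $\sqrt{2}$, gives $\norm{E(s_1)-E(s_2)}\le\sqrt{2}\norm{W}\sqrt{k}$, and this is then pushed through the attention Lipschitz inequality imported from the cited work. The remaining differences are minor: the paper takes the whole factor $\left(\frac{d}{d-1}\right)^2C_1C_2$ directly from that cited inequality rather than attributing $\left(\frac{d}{d-1}\right)^2$ to LayerNorm, and it simply treats the $k$ edits as positional substitutions (the same aligned-position convention you adopt). So the length-change issue you flag is one the paper's argument also passes over without comment.
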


Details and proof of \cref{theo:upper_bound_robustness} are given in \cref{app:proof_theorem}. 
The resulting upper bound, as with any Lipschitz-like analysis, depends on the weight norms and, additionally, on the factor $k$, which is linked to the tokenisation.
Thus, it underscores the importance of tokenisation for the model’s robustness. 
Additionally, the bound 
explicitly on the perturbation neighbourhood, defined through the token-level edit distance, linking it to the support of the sampling distribution. 
Specifically, canonical tokenisation concentrates the model’s sensitivity on a single tokenisation, making it vulnerable to adversarial attacks.
Stochastic tokenisation, however, 
provides the model with a distribution over tokenisation, effectively smoothing the embedding space and reducing its vulnerability. 

In connection to this, we empirically evaluated the sensitivity of each layer by comparing unit-normalised embeddings for $50$ neighbouring non-canonical tokenisations of
the top $10$k most frequent canonical tokens.
\Cref{fig:app:embeddings} shows the average distance for canonical and stochastic tokenisation schemes.
We find that stochastic tokenisation results in smaller mean normalised distances, indicating smaller local Lipschitz constants and suggesting reduced vulnerability.

\begin{figure}[t!]
\centering

        \resizebox{1\linewidth}{!}{%
            \pgfplotsset{
  colormap={scScheme}{
    color(0pt)=(white)
color(1pt)=(scCyan!60)
  }
}

\begin{tikzpicture}
\begin{groupplot}[
  group style={
    group size=2 by 1,
    horizontal sep=.15cm,
  },
  width=0.3\textwidth,
  tick label style={font=\small},   colormap name=scScheme,
  point meta min=0,
  point meta max=100,
  x grid style={gray!30},
  y grid style={gray!30},
  xlabel={$\pfine$},
  xmin=-0.5, xmax=3.5,
  ymin=-0.5, ymax=4.5,
  xtick={0,...,3},
  xticklabels={-, 0.1, 0.5, 1.0},
  ytick={0,...,4},
  yticklabels={
    \textsc{Canon.},
    \textsc{Uni.}, 
    \textsc{STok}, 
    \textsc{STok-uni},
    \textsc{Uni-k}
  },
  yticklabel style={rotate=0, anchor=east, font=\small},
  y dir=reverse,
  tick align=outside,
  tick pos=left,
  every axis label/.append style={font=\small},
  every node near coord/.append style={
    font=\footnotesize, color=black, anchor=center,
    /pgf/number format/.cd, fixed, fixed zerofill, precision=1,
  },
]

\nextgroupplot[
  title={Canonical Tokenisation},
  title style={yshift=-1.0ex},
]
\addplot [
  matrix plot,
  point meta=explicit,
  mesh/cols=4,
  point meta=\thisrow{Value} * 100,
  nodes near coords={\printmynumber\pgfplotspointmeta},
]
table [meta=Value] {
x y Value
0 0 0.940
1 0 nan
2 0 nan
3 0 nan
0 1 0.928
1 1 nan
2 1 nan
3 1 nan
0 2 nan
1 2 0.950
2 2 0.948
3 2 0.950
0 3 nan
1 3 0.948
2 3 0.938
3 3 0.950
0 4 nan
1 4 0.938
2 4 0.964
3 4 0.948
};

\nextgroupplot[
  title={Adversarial Tokenisation},
  title style={align=center,yshift=-1.0ex},
  colorbar style={
    ylabel={},
    y dir=normal,
    width=5pt, 
  },
yticklabels={
},
  y dir=reverse,
]
\addplot [
  matrix plot,
  point meta=explicit,
  mesh/cols=4,
  point meta=\thisrow{Value} * 100,
nodes near coords={\printmynumber\pgfplotspointmeta},
]
table [
meta=Value
] {
x y Value
0 0 0.061
1 0 nan
2 0 nan
3 0 nan
0 1 0.618
1 1 nan
2 1 nan
3 1 nan
0 2 nan
1 2 0.266
2 2 0.518
3 2 0.620
0 3 nan
1 3 0.328
2 3 0.592
3 3 0.616
0 4 nan
1 4 0.350
2 4 0.644
3 4 0.696
};

\end{groupplot}
\end{tikzpicture}
        }

    \caption{Accuracy (\%) under canonical and adversarial tokenisation for \llamaoneb trained on \langgame under various fine-tuning strategies.}
    \vspace*{-1em}\label{fig:adv_robustness_llama1b}
\end{figure}
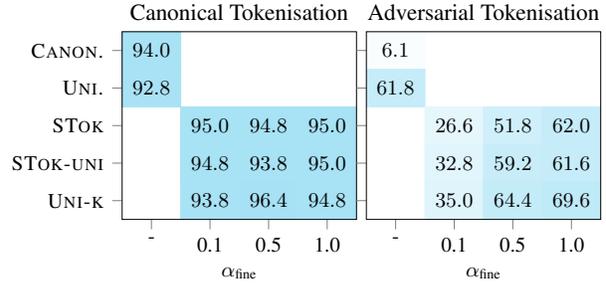

\section{Conclusion \& Discussion}

In this paper, we address the fundamental question: \textbf{Can stochastic tokenisation improve robustness?} and find that stochastic tokenisation during training can provide robustness against random perturbations and adversarial attacks on tokens.
To this end, we first examined the robustness of \stok \cite{sims2026stochastok} and found that it provides some robustness, but the sampling distribution is both biased and incomplete, making it vulnerable to adversarial attacks.
Hence, we develop alternative sampling mechanisms, specifically \stokuni and \unik, both of which ensure a uniform sampling distribution (conditional in the case of \stokuni) with support spanning all non-canonical tokenisations within some neighbourhood.
We find uniform sampling often performs favourably compared to \stok and provides stronger adversarial robustness.
Lastly, we provided a theoretical analysis of the expected adversarial risk, further confirming that stochastic tokenisation reduces it.

\textbf{Practitioner Guide.}\;
We found that canonical tokenisation results in a brittle model that is vulnerable to non-canonical tokenisations or attacks.
Further, we found that fine-tuning with stochastic tokenisation substantially increases robustness, without additional cost during inference.
This indicates that fine-tuning should generally be performed with stochastic tokenisation instead of canonical tokenisation.

\textbf{Limitations.}\;
Our analysis is currently limited to question-answering tasks, and an interesting next step would be to extend the assessment to natural language generation tasks. 
While our theoretical analysis provides links to stochastic tokenisation, further analysis is needed to fully characterise the optimal design of a stochastic tokenisation scheme.

\section*{Acknowledgements}

This work was partially supported by the Wallenberg AI, Autonomous Systems and Software Program (WASP) funded by the Knut and Alice Wallenberg Foundation.
RL and AS acknowledge funding from the Research Council of Finland (grant numbers 339730 and 362408). 
We acknowledge the computational resources provided by the Aalto Science-IT project. 

\section*{Impact Statement}

This work is concerned with the robustness of large-language models (LLMs) to non-canonical tokenisations and adversarial attacks on the tokenisation of the input.
Henceforth, our results and findings directly impact not only the community in machine learning research, such as indicating that stochastic tokenisation during training is a promising direction in providing robustness against such settings, but also raise awareness of the vulnerability to non-canonical tokenisations existing in LLMs.
Consequently, while we do not believe that our work has direct ethical implications, it does provide societal impact in the long run. 
In particular, it provides a guide that stochastic tokenisation during training of LLMs can substantially improve the robustness of LLMs to various attacks and therefore reduces the risk imposed on society by LLMs.

\bibliographystyle{icml2026}

\newpage
\appendix
\onecolumn

\newcommand{\appheader}[1]{%
  \noindent\makebox[\linewidth][l]{%
    \colorbox{gray!10}{%
      \begin{minipage}{0.98\linewidth}
        \emph{#1}%
      \end{minipage}%
    }%
  }\par\medskip
}

\FloatBarrier
\section{Training Details} \label{app:training}
\noindent \faGithub\xspace The code for the experiments is available at:
\url{https://github.com/stegsoph/stochastic-tokenisation-robustness} .

For LLMs trained from scratch, we use the architecture proposed in \citet{hillier2024super} and used in \citet{sims2026stochastok}. The architectural details are summarised in \Cref{tab:tinyllm_architecture}. The pre-training setup is summarised in \Cref{tab:tinyllm_pre-training}. The fine-tuning step uses full parameter updating, with details specified in \Cref{tab:tinyllm_instruction_tuning}.

Additionally, we use pre-trained weights and architectures from Hugging Face for \gptxl (\url{https://huggingface.co/openai-community/gpt2-xl}), \llamaoneb (\url{https://huggingface.co/meta-llama/Llama-3.2-1B}), \llamaeightb (\url{https://huggingface.co/meta-llama/Llama-3.1-8B}), 
\qwenzerosixb (\url{https://huggingface.co/Qwen/Qwen3-0.6B}), 
and 
\gemmaoneb (\url{https://huggingface.co/google/gemma-3-1b-pt}). For the fine-tuning step, we train LoRA adapters with the training setup as specified in \Cref{tab:hf_instruction_tuning}. 

\begin{table}[h]
\centering
\caption{Model architecture of \tinyllm. }
\label{tab:tinyllm_architecture}
\begin{tabular}{ll}
\toprule
\multicolumn{2}{c}{\textbf{Model architecture (\tinyllm)}} \\ \midrule

Number of layers 
& 8 \\

Hidden dimension 
& 512 \\

Attention heads 
& 16 (grouped, group size 4) \\

Feed-forward network 
& SwiGLU, dimension 1320 \\

Normalization 
& RMSNorm (no bias) \\

Positional encoding 
& Rotary positional embeddings (RoPE) \\

Context length 
& 512 \\
Tokeniser type
& GPT \\
Vocabulary size 
& 50{,}257 \\

Embedding tying 
& Input/output embeddings tied \\
\bottomrule
\end{tabular}
\end{table}

\begin{table}[h]
\centering
\caption{Pre-training setup for \tinyllm on OpenWebText.}
\label{tab:tinyllm_pre-training}
\begin{tabular}{ll}
\toprule
\multicolumn{2}{c}{\textbf{Pre-training (\tinyllm)}} \\
\midrule
Training objective 
& Autoregressive language modeling \\

Data set 
& OpenWebText \\

Tokenisation 
& Canonical, \stok ($\ppre \in \{0.1, 0.5\}$) \\

Training steps 
& 30{,}000 \\

Batch size 
& 480 \\

Optimizer 
& AdamW \\

Learning rate 
& $6\times10^{-4}$ (min.\ $6\times10^{-5}$) \\

Adam parameters 
& $\beta_1 = 0.9$, $\beta_2 = 0.95$ \\

Weight decay 
& 0.1 \\

Learning-rate schedule 
& Linear warmup (5k steps) + linear decay \\

Gradient clipping 
& 1.0 \\

Dropout 
& 0.1 \\
\bottomrule
\end{tabular}

\end{table}

\begin{table}[t]
\centering
\caption{Supervised fine-tuning setup for \tinyllm using stochastic tokenisation.}
\label{tab:tinyllm_instruction_tuning}
\begin{tabular}{l p{0.6\linewidth}}
\toprule
\multicolumn{2}{c}{\textbf{Supervised fine-tuning (\tinyllm)}} \\
\midrule
Training objective 
& Supervised fine-tuning \\

Tokenisation mode 
& Canonical, \stok, \stokuni, \unik, \uni, \bpedropout \\

Training steps 
& 3{,}000 \\

Batch size 
& 480 \\

Optimizer 
& AdamW \\

Learning rate 
& $1\times10^{-4}$ (min.\ $1\times10^{-5}$) \\

Adam parameters 
& $\beta_1 = 0.9$, $\beta_2 = 0.95$ \\

Weight decay 
& 0.05 \\

Warmup steps 
& 500 \\

Dropout 
& 0.1 \\
\bottomrule
\end{tabular}
\end{table}

\begin{table}[t]
\centering
\caption{Supervised fine-tuning configuration for Hugging Face models.}
\label{tab:hf_instruction_tuning}
\begin{tabular}{l p{0.6\linewidth}}
\toprule
\multicolumn{2}{c}{\textbf{Supervised fine-tuning (\gptxl, \llamaoneb, \llamaeightb, \qwenzerosixb, \gemmaoneb)}} \\
\midrule
Training objective
& Supervised fine-tuning \\

Tokenisation mode 
& Canonical, \stok, \stokuni, \unik, \uni, \bpedropout \\

Training steps
& 1{,}500 \\

Optimizer
& AdamW  \\

Learning rate 
& $1\times10^{-4}$ (min.\ $1\times10^{-5}$) \\

Adam parameters 
& $\beta_1 = 0.9$, $\beta_2 = 0.95$ \\

Weight decay 
& 0.01 \\

Warmup steps 
& 150 \\

LoRA rank ($r$) 
& 8 \\

LoRA scaling ($\alpha$) 
& 32 \\

LoRA dropout 
& 0.05 \\

Task type 
& Causal language modeling \\
\bottomrule
\end{tabular}
\end{table}

\FloatBarrier

\newpage
\section{Experimental Details} \label{app:exp_details}
In this section, we describe our evaluation setup in detail. 

\subsection{Normalised Number of Splits} \label{app:normalised_edit_distance}
Assume a string sequence $\bx=(x_1,\dots,x_s)$ and its canonical token sequence $\bv^c=(v_1^c,\dots,v_m^c)$. Let $\bv\sim P(\cdot\mid\bx)$ be a non-canonical tokenisation of $\bx$, where $\bv=(v_1,\dots,v_p) \in \mcT_\mcV(\bx)$. We define the per-token split-count vector $\bS = (S_1, \dots, S_m)$ that describes the number of splits induced in each canonical token. 

We define the normalised number of splits for a token sequence as
\[
\alpha(\bv,\bv^c) :=\frac{\sum_iS_i}{|\bv^c|},
\label{eq:normalised_edit_distance}
\]
which corresponds to the average number of splits per canonical token.

\subsection{Average Tokenisation Robustness}

We quantify average accuracy given stochastic tokenisations in a multiple-choice
setting. For each input question string $\bx$, we have answer options
$\{\ba_c\}_{c=1}^C$ and correct index $y\in\{1,\dots,C\}$. Assume the stochastic tokenisation of the query $\bv\sim P(\cdot\mid \bx;\peval)$, and canonical tokenisation of the answer options $\{\bv_{\ba_c}\}_{c=1}^C$. The model predicts the answer with the highest conditional probability
\[
\hat y\left(\bv, \{\bv_{\ba_c}\}_{c=1}^C \right) = \argmax_{c \in \{1,\dots,C\}} \; p(\bv_{\ba_c}\mid\bv).
\]
We approximate the expected accuracy under stochastic tokenisation by Monte Carlo sampling. We draw $M=10$ tokenisations $\{\bv^{(m)}\}_{m=1}^{M}$ and compute
\[
\mathrm{Acc}
=
\frac{1}{M}
\sum_{m=1}^{M}
\mathbf{1}\!\left(
\hat y\left(\bv^{(m)}, \{\bv_{\ba_c}\}_{c=1}^C \right) = y
\right).
\]

We report the average accuracy over $N=500$ data samples,
\[
\overline{\mathrm{Acc}}
=
\frac{1}{N}
\sum_{n=1}^{N}
\frac{1}{M}
\sum_{m=1}^{M}
\mathbf{1}\!\left(
\hat y\left(\bv_n^{(m)}, \{\bv_{\ba_c,n}\}_{c=1}^C \right) = y_n
\right).
\]

\subsection{Adversarial Tokenisation Robustness}
To evaluate adversarial tokenisation robustness, we use the greedy search algorithm by \citet{geh2025adversarial}, summarised in \Cref{alg:adv_tok}. For a given input string $\bx$, the attacker selects an initial tokenisation. Unless stated otherwise, we use the canonical tokenisation as a default initialisation. In ablation experiments, we report results for randomly drawn initial tokenisation. Let $\bv_n^\mathrm{adv}$ denote the adversarial tokenisation for input string $\bx_n$. 

We compare the clean average accuracy on the canonical tokenisation
\[
\overline{\mathrm{Acc}}_{\mathrm{clean}}
=
\frac{1}{N}
\sum_{n=1}^{N}
\ind\!\left(
\hat{y}\left(\tau(\bx_n), \{\bv_{\ba_c,n}\}_{c=1}^C \right) = y_n
\right)
\]
and the adversarial accuracy 
\[
\overline{\mathrm{Acc}}_{\mathrm{adv}}
=
\frac{1}{N}
\sum_{n=1}^{N}
\ind\!\left(
\hat{y}\left(\bv_n^{\mathrm{adv}}, \{\bv_{\ba_c,n}\}_{c=1}^C  \right) = y_n
\right).
\]

\subsection{In-context Learning}

Let $(\bv_i, \bv_{\ba_i})_{i=1}^K$ be $K$ tokenised context examples and let $\bv^*$ be a test query with answer options $\{ \bv_{\ba_c}^* \}_{c=1}^C$. 
To evaluate in-context learning robustness, we sample stochastic tokenisations for each context example $\bv_i \sim P(\bv_i\mid \bx_i;\alpha_{\mathrm{ICL}})$, and test whether this improves robustness to stochastic tokenisations of the query $\bv^* \sim P(\bv^*\mid \bx^*;\alpha_{\mathrm{eval}})$.
We denote the number of solved examples presented in the context by $K$. We report the average accuracy over $N=500$ data samples
\[
\overline{\mathrm{Acc}}
=
\frac{1}{N}
\sum_{n=1}^{N}
\ind\!\left(
\hat{y}\left( (\bv_1, \bv_{\ba_1}, \dots, \bv_K, \bv_{\ba_K}, \bv_n^{*}), \{\bv^*_{\ba_c,n}\}_{c=1}^C  \right) = y^*_n
\right)
\]
where $y_n^*$ denotes the ground-truth index and $\hat{y}(\cdot)$ the predicted answer index. 

For the adversarial attack, we define the adversarial objective as maximising the margin of the conditional probability assigned to each answer option $c$
\[
\bar{z}_c(\bv^*)=
p\!\left(
 \bv_{\ba_c} \mid \bv_1, \bv_{\ba_1}, \dots, \bv_K, \bv_{\ba_K}, \bv^*
\right).
\]

\subsection{Distances of Representations}

We analyse how training with different stochastic tokenisation schemes affects the internal representations of the same word with different non-canonical tokenisations by measuring the distance between its hidden states. We consider the 1,000 most common English words. For each word $\bx$, we draw 50 tokenisations uniformly from $\mcT_\mcV(\bx)$. 

Let $h_\ell(\bv) \in \reals^d$ denote the hidden representation at model layer
$\ell$ for tokenisation $\bv$, obtained via last-token pooling. For each layer $\ell$, we report the $L^2$ distance between the canonical tokenisation $v^c$ and a sampled tokenisation $\bv$, where the length of both vectors has been scaled to have unit norm: 
\[
d_\ell(\bv, \bv^c)
=
\left\|
\frac{h_\ell(\bv)}{\lVert h_\ell(\bv)\rVert_2}
-
\frac{h_\ell(\bv^c)}{\lVert h_\ell(\bv^c)\rVert_2}
\right\|_2
\]
We report the average of $d_\ell(\bv, \bv^c)$ across words and sampled tokenisations.

\FloatBarrier
\section{Data Sets} \label{app:datasets}

\newenvironment{ttitems}
{\begingroup\ttfamily\small\bitems}
{\eitems\endgroup}

\pgfmathsetseed{\number\pdfrandomseed} %

\newcommand{\randtoken}[1]{
    \pgfmathsetmacro{\thenuma}{int(random(50,200))}%
    \pgfmathsetmacro{\thenumb}{int(random(50,200))}%
    \pgfmathsetmacro{\thenumc}{int(random(50,200))}%
    \definecolor{randomcolor1}{RGB}{\thenuma,\thenumb,\thenumc}%
    \textcolor{randomcolor1}{\token}--%
}

\newcommand{\tokensequence}[1]{%
    \foreach \token in {#1} {%
        \!\!\!\randtoken{\token}%
    }%
}

\subsection{Language Games}

\textbf{\langgame} \cite{sims2026stochastok} consists of six question types that require subword understanding (count letter, contains letter, starts with string, ends with string, longest word, shortest word). For the training set we randomly sample from the \href{https://github.com/powerlanguage/word-lists/blob/master/1000-most-common-words.txt}{top 1k English words}. 

Below, we illustrate the effect of our stochastic tokenisation schemes for questions of the \langgame data set tokeniser with a Llama 3 tokeniser. Hyphens (``-``) indicate token boundaries.

\textsc{Canonical:}
\begin{ttitems}
\item \tokensequence{{Which},{ choice},{ has},{ the},{ most},{ letter},{ '},{b},{'s},{?},{ These},{ are},{ the},{ possible},{ option},{ words},{:},{ [},{ up},{,},{ baby},{,},{ rain},{,},{ glad},{].},{ Answer},{:},{ },{ baby}}
\item \tokensequence{{Which},{ option},{ ends},{ with},{ 'o'},{?},{ The},{ possible},{ option},{ strings},{:},{ [},{ no},{,},{ hard},{,},{ by},{,},{ happen},{].},{ Answer},{:},{ },{ no}}
\item \tokensequence{{What},{ word},{ is},{ the},{ shortest},{?},{ The},{ available},{ option},{ strings},{ are},{:},{ [},{ row},{,},{ suffix},{,},{ gather},{,},{ energy},{].},{ Answer},{:},{ },{ row}}
\end{ttitems}

\stok\ ($\alpha=1$):
\begin{ttitems}
\item \tokensequence{{Wh},{i},{ch},{ choice},{ has},{ the},{ most},{ le},{t},{t},{er},{ '},{b},{'s},{?},{ These},{ are},{ the},{ possible},{ option},{ words},{:},{ [},{ up},{,},{ baby},{,},{ rain},{,},{ g},{lad},{].},{ A},{n},{sw},{er},{:},{ },{ baby}}
\item \tokensequence{{Which},{ option},{ end},{s},{ with},{ 'o'},{?},{ The},{ possible},{ option},{ s},{tring},{s},{:},{ [},{ no},{,},{ hard},{,},{ b},{y},{,},{ happen},{].},{ Answer},{:},{ },{ no}}
\item \tokensequence{{Wh},{at},{ word},{ i},{s},{ the},{ shortest},{?},{ The},{ available},{ option},{ string},{s},{ are},{:},{ [},{ r},{o},{w},{,},{ suffix},{,},{ g},{ather},{,},{ e},{n},{ergy},{].},{ Ans},{wer},{:},{ },{ row}}
\end{ttitems}

\stokuni\ ($\alpha=1$):
\begin{ttitems}
\item \tokensequence{{W},{hic},{h},{ choice},{ has},{ the},{ mo},{s},{t},{ letter},{ '},{b},{'},{s},{?},{ The},{se},{ are},{ the},{ possible},{ opt},{ion},{ words},{:},{ [},{ u},{p},{,},{ baby},{,},{ rain},{,},{ g},{la},{d},{].},{ Answer},{:},{ },{ baby}}
\item \tokensequence{{Which},{ op},{ti},{on},{ en},{ds},{ w},{i},{t},{h},{ 'o'},{?},{ The},{ possible},{ op},{tion},{ strings},{:},{ [},{ no},{,},{ h},{a},{r},{d},{,},{ by},{,},{ happen},{].},{ Answer},{:},{ },{ no}}
\item \tokensequence{{What},{ wo},{rd},{ is},{ th},{e},{ shortest},{?},{ The},{ available},{ o},{p},{tion},{ strings},{ a},{r},{e},{:},{ [},{ row},{,},{ su},{ff},{ix},{,},{ gather},{,},{ ener},{gy},{].},{ Answer},{:},{ },{ row}}
\end{ttitems}

\unik\ ($\alpha=1$):
\begin{ttitems}
\item \tokensequence{{Which},{ c},{h},{o},{ic},{e},{ has},{ th},{e},{ mos},{t},{ letter},{ '},{b},{'s},{?},{ The},{se},{ are},{ the},{ poss},{ible},{ op},{tion},{ wo},{r},{ds},{:},{ [},{ up},{,},{ ba},{by},{,},{ rain},{,},{ glad},{].},{ An},{s},{we},{r},{:},{ },{ baby}}
\item \tokensequence{{W},{h},{ic},{h},{ option},{ end},{s},{ w},{ith},{ 'o'},{?},{ T},{he},{ possible},{ option},{ str},{ing},{s},{:},{ [},{ no},{,},{ h},{ard},{,},{ by},{,},{ happen},{].},{ An},{sw},{er},{:},{ },{ no}}
\item \tokensequence{{What},{ w},{o},{rd},{ is},{ the},{ s},{hort},{est},{?},{ The},{ a},{vailable},{ option},{ s},{t},{r},{i},{n},{gs},{ are},{:},{ [},{ row},{,},{ suffix},{,},{ ga},{ther},{,},{ energy},{].},{ Ans},{w},{er},{:},{ },{ row}}
\end{ttitems}

\uni:
\begin{ttitems}
\item \tokensequence{{Wh},{i},{c},{h},{ cho},{i},{c},{e},{ h},{as},{ t},{h},{e},{ m},{o},{st},{ le},{tte},{r},{ '},{b},{'s},{?},{ Th},{es},{e},{ a},{r},{e},{ t},{he},{ po},{ssi},{b},{l},{e},{ op},{t},{ion},{ words},{:},{ [},{ u},{p},{,},{ b},{a},{b},{y},{,},{ r},{ain},{,},{ g},{lad},{].},{ A},{ns},{w},{e},{r},{:},{ },{ baby}}
\item \tokensequence{{W},{h},{i},{ch},{ op},{ti},{on},{ ends},{ wi},{t},{h},{ 'o'},{?},{ T},{h},{e},{ po},{s},{s},{ible},{ option},{ s},{tr},{i},{n},{g},{s},{:},{ [},{ no},{,},{ ha},{rd},{,},{ b},{y},{,},{ ha},{p},{pe},{n},{].},{ Ans},{w},{er},{:},{ },{ no}}
\item \tokensequence{{W},{hat},{ word},{ i},{s},{ th},{e},{ s},{ho},{r},{t},{e},{st},{?},{ Th},{e},{ a},{v},{a},{i},{l},{abl},{e},{ op},{t},{i},{o},{n},{ s},{t},{ring},{s},{ a},{re},{:},{ [},{ row},{,},{ s},{u},{ff},{i},{x},{,},{ g},{a},{th},{er},{,},{ e},{n},{er},{g},{y},{].},{ An},{s},{wer},{:},{ },{ row}}
\end{ttitems}

\textbf{\cute} \cite{edman2024cute} tests the Character-level Understanding of Tokens of models. The original data sets was intended for zero-shot evaluation, providing only a test split. Thus, we follow \citet{sims2026stochastok} in generating a custom \cute data set for training and evaluating the smaller scale LLMs. We generate questions for seven subword tasks (contains letter, delete letter, insert letter, orthography, spelling, inverse spelling, substitute letter). For the training set we randomly sample from the \href{https://github.com/powerlanguage/word-lists/blob/master/1000-most-common-words.txt}{top 1k English words}. 

\textsc{Canonical:}
\begin{ttitems}
\item \tokensequence{{Spell},{ out},{ the},{ word},{ "},{ stood},{ ".},{ Answer},{:},{s},{ t},{ o},{ o},{ d}}
\item \tokensequence{{Is},{ there},{ a},{ "},{ e},{ "},{ in},{ "},{ salt},{ "?},{ Answer},{:},{No}}
\item \tokensequence{{Delete},{ every},{ instance},{ of},{ "},{ c},{ "},{ in},{ "},{ molecule},{ ".},{ Answer},{:},{m},{ole},{ule}}
\end{ttitems}

\subsection{General Benchmarks}

\textbf{\textsc{ARC-E}} \cite{clarc2018think} consists of natural, grade-school science questions with four answer options. \\

\textsc{Canonical:}
\begin{ttitems}
\item \tokensequence{{Which},{ factor},{ will},{ most},{ likely},{ cause},{ a},{ person},{ to},{ develop},{ a},{ fever},{?},{ a},{ bacterial},{ population},{ in},{ the},{ bloodstream}}
\item \tokensequence{{L},{ich},{ens},{ are},{ symb},{iotic},{ organisms},{ made},{ of},{ green},{ algae},{ and},{ fungi},{.},{ What},{ do},{ the},{ green},{ algae},{ supply},{ to},{ the},{ fungi},{ in},{ this},{ symb},{iotic},{ relationship},{?},{ food}}
\item \tokensequence{{When},{ a},{ switch},{ is},{ used},{ in},{ an},{ electrical},{ circuit},{,},{ the},{ switch},{ can},{ stop},{ and},{ start},{ the},{ flow},{ of},{ current},{.}}
\end{ttitems}

\textbf{\textsc{COPA}} \cite{roemmele2011choice} contains multiple-choice questions that test causaul reasoning with two answer options. \\

\textsc{Canonical:}
\begin{ttitems}
\item \tokensequence{{My},{ body},{ cast},{ a},{ shadow},{ over},{ the},{ grass},{.},{ cause},{?},{ The},{ sun},{ was},{ rising},{.}}
\item \tokensequence{{The},{ woman},{ tolerated},{ her},{ friend},{'s},{ difficult},{ behavior},{.},{ cause},{?},{ The},{ woman},{ knew},{ her},{ friend},{ was},{ going},{ through},{ a},{ hard},{ time},{.}}
\item \tokensequence{{The},{ women},{ met},{ for},{ coffee},{.},{ cause},{?},{ They},{ wanted},{ to},{ catch},{ up},{ with},{ each},{ other},{.}}
\end{ttitems}

\textbf{\textsc{CSQA}} \cite{talmor2019commonsenseqa} contains multiple-choice questions that test commonsense knowledge with five answer options. \\

\textsc{Canonical:}
\begin{ttitems}
\item \tokensequence{{The},{ sanctions},{ against},{ the},{ school},{ were},{ a},{ punishing},{ blow},{,},{ and},{ they},{ seemed},{ to},{ what},{ the},{ efforts},{ the},{ school},{ had},{ made},{ to},{ change},{?},{ ignore}}
\item \tokensequence{{Sam},{my},{ wanted},{ to},{ go},{ to},{ where},{ the},{ people},{ were},{.},{ Where},{ might},{ he},{ go},{?},{ populated},{ areas}}
\item \tokensequence{{To},{ locate},{ a},{ ch},{oker},{ not},{ located},{ in},{ a},{ jewelry},{ box},{ or},{ boutique},{ where},{ would},{ you},{ go},{?},{ jewelry},{ store}}
\end{ttitems}

\textbf{\textsc{HellaSwag}} \cite{zellers2019hellaswag} contains multiple-choice questions that test the most natural continuation given four answer options. \\

\textsc{Canonical:}
\begin{ttitems}
\item \tokensequence{{[},{-header},{-]},{ How},{ to},{ dry},{ pant},{yh},{ose},{ in},{ a},{ hurry},{ [},{-title},{-]},{ Lay},{ the},{ pant},{yh},{ose},{ flat},{ on},{ your},{ towel},{.},{ [},{-step},{-]},{ Use},{ a},{ dry},{ bath},{ towel},{ for},{ best},{ results},{.},{ The},{ thicker},{ the},{ towel},{ is},{,},{ the},{ more},{ effective},{ this},{ method},{ will},{ be},{.},{ If},{ you},{ do},{ not},{ have},{ a},{ thick},{ towel},{,},{ repeat},{ the},{ method},{ using},{ two},{ separate},{ towels},{.},{ [},{-title},{-]},{ Roll},{ pant},{yh},{ose},{ into},{ your},{ thick},{ towel},{.}}
\item \tokensequence{{[},{-header},{-]},{ How},{ to},{ make},{ german},{ style},{ meat},{lo},{af},{ [},{-title},{-]},{ Combine},{ beef},{,},{ pork},{,},{ onions},{,},{ beaten},{ eggs},{,},{ bread},{ crumbs},{,},{ 3},{ tablespoons},{ water},{,},{ parsley},{,},{ pap},{rika},{,},{ prepared},{ mustard},{ and},{ salt},{.},{ [},{-step},{-]},{ Mix},{ well},{.},{ [},{-title},{-]},{ Press},{ the},{ mixture},{ into},{ the},{ 8},{ "},{ baking},{ pan},{.},{ [},{-step},{-]},{ Place},{ the},{ 3},{ hard},{ cooked},{ eggs},{ in},{ a},{ row},{ down},{ the},{ middle},{ of},{ the},{ meat},{ mixture},{,},{ then},{ fold},{ up},{ both},{ sides},{,},{ so},{ you},{ have},{ a},{ loaf},{.},{ [},{-title},{-]},{ Pre},{-heat},{ the},{ oven},{ to},{ 375},{.}}
\item \tokensequence{{[},{-header},{-]},{ How},{ to},{ feel},{ full},{ without},{ eating},{ [},{-title},{-]},{ Chew},{ a},{ piece},{ of},{ gum},{.},{ [},{-step},{-]},{ Ch},{ewing},{ a},{ piece},{ of},{ gum},{ will},{ trigger},{ your},{ brain},{ and},{ stomach},{ to},{ believe},{ you},{ are},{ about},{ to},{ eat},{ or},{ feeling},{ full},{.},{ This},{ will},{ not},{ only},{ stimulate},{ your},{ mind},{ to},{ feel},{ full},{,},{ but},{ will},{ also},{ make},{ sure},{ that},{ your},{ mouth},{ is},{ too},{ busy},{ to},{ eat},{.},{ [},{-sub},{-steps},{-]},{ Make},{ sure},{ to},{ chew},{ sugar},{less},{ gum},{ so},{ that},{ you},{ don},{'t},{ get},{ unnecessary},{ calories},{.},{ Ch},{ewing},{ gum},{ can},{ even},{ burn},{ 11},{ calories},{ an},{ hour},{.}}
\end{ttitems}

\textbf{\textsc{PIQA}} \cite{bisk2020piqa} contains multiple-choice questions that test knowledge about the physical world with two answer options. \\

\textsc{Canonical:}
\begin{ttitems}
\item \tokensequence{{When},{ boiling},{ butter},{,},{ when},{ it},{'s},{ ready},{,},{ you},{ can},{ Pour},{ it},{ into},{ a},{ jar}}
\item \tokensequence{{To},{ permanently},{ attach},{ metal},{ legs},{ to},{ a},{ chair},{,},{ you},{ can},{ Weld},{ the},{ metal},{ together},{ to},{ get},{ it},{ to},{ stay},{ firmly},{ in},{ place}}
\item \tokensequence{{how},{ do},{ you},{ indent},{ something},{?},{ leave},{ a},{ space},{ before},{ starting},{ the},{ writing}}
\end{ttitems}

\textbf{\textsc{SocialIQA}} contains multiple-choice questions that test social commonsense knowledge with three answer options.\\

\textsc{Canonical:}
\begin{ttitems}
\item \tokensequence{{Choose},{ the},{ most},{ likely},{ answer},{.}}

\tokensequence{{Context},{:},{ After},{ weeks},{ apart},{ from},{ one},{ another},{ Austin},{ invited},{ their},{ boyfriend},{ to},{ dinner},{.}}

\tokensequence{{Question},{:},{ Why},{ did},{ Austin},{ do},{ this},{?}}

\tokensequence{{ reconnect}}
    
\item \tokensequence{{Choose},{ the},{ most},{ likely},{ answer},{.}}

\tokensequence{{Context},{:},{ Lee},{ spent},{ extra},{ time},{ to},{ explain},{ clearly},{ the},{ topic},{ purpose},{ to},{ the},{ students},{.}}

\tokensequence{{Question},{:},{ How},{ would},{ you},{ describe},{ Lee},{?}}

\tokensequence{{ careful}}

\item \tokensequence{{Choose},{ the},{ most},{ likely},{ answer},{.}}

\tokensequence{{Context},{:},{ The},{ man},{ and},{ woman},{ were},{ too},{ tired},{ to},{ cook},{,},{ so},{ Bailey},{ made},{ their},{ breakfast},{.}}

\tokensequence{{Question},{:},{ How},{ would},{ Bailey},{ feel},{ afterwards},{?}}

\tokensequence{{ helpful}}
\end{ttitems}

\FloatBarrier
\section{Pseudocodes} \label{app:pseudocodes}

\begin{algorithm}[h]
\caption{\stok: stochastic tokenisation via random pairwise splits}
\label{alg:stochastok}
\begin{algorithmic}[1]
\REQUIRE String $\bx\in\mcX^*$; token vocabulary $\mcV$; canonical tokenizer $\tau$;
split map $\mcS$; expansion proportion $\alpha\ge 0$; max splits $K_{\max}$
\STATE $\bv \leftarrow \tau(\bx)$ \COMMENT{canonical tokenisation}
\STATE $K \leftarrow \min\!\big(K_{\max}, \lceil \alpha \cdot |\bv| \rceil\big)$
\FOR{$k=1$ to $K$}
    \STATE $i \leftarrow \texttt{randomInteger}(1,|\bv|)$
    \STATE $t \leftarrow \bv_i$
    \IF{$\mcS(t)\neq \emptyset$}
        \STATE $(a,b) \leftarrow \texttt{randomChoice}(\mcS(t))$
        \STATE $\bv \leftarrow (v_1,\dots,v_{i-1},\, a, b,\, v_{i+1},\dots,v_{|v|})$
    \ENDIF
\ENDFOR
\STATE \textbf{return} $\bv$
\end{algorithmic}
\end{algorithm}

\begin{algorithm}[h]
\caption{Greedy adversarial tokenisation (radius-2 neighbourhood)}
\label{alg:adv_tok}
\begin{algorithmic}[1]
\REQUIRE string $\bx$, correct option $y$, scoring function $z_c(\bv)$, start tokenisation $\bv_0\in\mcT_\mcV(\bx)$, max steps $S$
\STATE $\bv \gets \bv_0$
\FOR{$s=1,\dots,S$}
    \STATE $\mathcal{N} \gets \mathrm{Ne}(\bv)=\mcT_\mcV^2(\bx,\bv)$
    \STATE $\bv^\star \gets \argmax_{\bu\in\mathcal{N}}\left(\max_{c\neq y} z_c(\bu)-z_y(\bu)\right)$
    \IF{$\max_{c\neq y} z_c(\bv^\star)-z_y(\bv^\star)\le \max_{c\neq y} z_c(\bv)-z_y(\bv)$}
        \STATE \textbf{break} \COMMENT{local optimum}
    \ENDIF
    \STATE $\bv \gets \bv^\star$
\ENDFOR
\STATE \textbf{return} $\bv$
\end{algorithmic}
\end{algorithm}

\FloatBarrier

\newpage
\section{Proof of \Cref{theo:upper_bound_robustness}}\label{app:proof_theorem}
Beyond the empirical results outlined in the previous section, we aim to theoretically characterise how tokenisation affects the robustness of the resulting model. In this perspective, we focus on evasion attacks, \ie, adversarial perturbations applied at test time. 
This threat model is particularly relevant in practical settings, where attackers typically have access only to the deployed model during inference. 
In this direction, let's consider a trained classifier $f\colon \mcV \to \mcY$ and let $v \in \mcV $ be an input with its associated label $y \in \mcY$, such that $f(v) = y$. The goal of an attacker is to craft a small additional perturbation to the input, such as to generate a point $\tilde{v}$ whose prediction $f(\tilde{v})$ is different from the original one. We note that in our setting, the function $f$ represents the LLM \emph{after} the tokeniser, comprising the embedding function and subsequent transformer blocks with self-attention.

As the generated adversarial perturbation should be ``similar'' to the original input, we consider a similarity budget $\epsilon$ (the maximum possible changes to the input), comprising the set of valid adversarial perturbations defined by a neighbourhood $\mcB(x, \epsilon)$. 
In our setting $\mcB(x, \epsilon) = \mcT^k_{\mcV}(x)$ with edit distance at most $k$. 
Now, the adversarial aim is to find, within $\mcB(x, \epsilon)$, points that not only satisfy the adversarial aim of flipping the classification but also result in the worst prediction. 
In this direction, given our model $f$, the model's \textit{adversarial risk} 
can be formulated as:
\begin{equation}\label{equation:robustness_definition_appendix}
\mathcal{R}_{\epsilon}[f] = \mathop{\mathbb{E}}_{x \in \mathcal{D}_{\mathcal{X}}} \left[\sup_{\tilde{x} \in \mathcal{B}(x, \epsilon)} d_{\mathcal{Y}}\left(f\left(\tilde{x}\right), f\left(x\right)\right)\right] .
\end{equation}
From a defence perspective, the objective is to ensure that the risk remains small, thereby keeping predictions within the same output region and making it harder to find a valid perturbation. This goal can be formulated as:
\begin{definition}[Adversarial Robustness]\label{def:adv_robustness_appendix}
    The classifier $f$ is said to be $(\epsilon, \gamma)$-\emph{robust} if its adversarial risk with respect to the classifier $f$ satisfies: $\mathcal{R}_{\epsilon}[f] \leq \gamma.$
\end{definition}

\Cref{def:adv_robustness_appendix} uses an upper-bound perspective ($\gamma$) as computing the exact risk is generally intractable. 
A smaller value of $\gamma$ indicates stronger robustness guarantees. Consequently, comparing the resulting bounds provides a principled way to assess and contrast the model's robustness. 
Without loss of generality, we consider a $1$-layer Transformer-based model in which all activation functions are assumed to be $1$-Lipschitz, which is the case for many common activations, \eg,  \texttt{tanh} and \texttt{sigmoid} \cite{virmaux18}. We additionally assume the output of the embedding function to be bounded in terms of its norm (\eg, $[0, 1]^{n \times d}$, with $n$ being the number of tokens and $d$ the hidden dimension). In this setting, the model's underlying robustness, $\gamma$, can be derived and is given by the following.

\begin{theorem*}
Let $f\colon \mathcal{X}\rightarrow\mathcal{Y}$ be a classifier following our considered problem setup, then $f$ is $(\epsilon, \gamma)$-\emph{robust}, with: 
    $$\gamma = \sqrt{2} \lVert W\lVert
    \left(\frac{d}{d-1}\right)^2
    C_1 C_2 \sqrt{k},$$

\begin{align*}
    \text{with } & C_1 = 1 + \lVert W_O \lVert \sqrt{H} \max_h \big[ \lVert W^{V, h} \lVert \big[\frac{4}{\sqrt{{d/H}}} \lVert W^{Q, h} \lVert \lVert W^{K, h} \lVert + 1 \big]  \big], \\
    & C_2 = \big(1 + \lVert W_{FFN} \lVert\big) \lVert W_{\text{out}} \lVert.
\end{align*}
\end{theorem*}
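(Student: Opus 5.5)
The plan is a Lipschitz composition argument. We write $f = g \circ E$, where $E$ is the embedding layer acting on token sequences (one-hot or indicator rows multiplied by $W$), and $g$ is the one-layer Transformer: multi-head self-attention with residual, then feed-forward with residual, then the output map $W_{\text{out}}$. The adversarial risk in \cref{equation:robustness_definition_appendix} is an expectation of a supremum. It therefore suffices to bound, for every $x$ and every $\tilde x \in \mcB(x,\epsilon) = \mcT^k_\mcV(x)$, the output distance $d_\mcY(f(\tilde x), f(x)) \le \mathrm{Lip}(g)\,\lVert E(\tilde v) - E(v)\rVert$. We then bound $\lVert E(\tilde v) - E(v)\rVert$ uniformly in terms of $k$. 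Since the bound will be uniform in $x$, the expectation is bounded by the same constant.

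\textbf{Step 1: the embedding perturbation.} Represent a tokenisation as a (padded) one-hot matrix $V \in \{0,1\}^{n\times|\mcV|}$, so that $E(v) = VW$. A tokenisation at token-level edit distance $k$ differs from the reference in at most $k$ rows; insertions cost $1$ and deletions $0$, so we count insertions. In each differing row, the difference of two one-hot vectors has Euclidean norm $\sqrt{2}$. Hence $\lVert V - \tilde V\rVert_F \le \sqrt{2}\sqrt{k}$, and therefore $\lVert E(\tilde v) - E(v)\rVert \le \sqrt{2}\,\lVert W\rVert \sqrt{k}$. This is exactly the factor $\sqrt{2}\lVert W\rVert\sqrt{k}$ in $\gamma$.

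\textbf{Step 2: the Transformer block.} We assume normalisation layers are applied (LayerNorm or RMSNorm, before attention and before the FFN), acting on embeddings that are bounded in $[0,1]^{n\times d}$. On such a bounded domain each normalisation is Lipschitz with constant at most $d/(d-1)$; one factor comes from each of the two normalisations, which produces the $(d/(d-1))^2$ term. The attention-plus-residual block is Lipschitz with constant $C_1$. For this we invoke the bound of \cite{ennadir2025pool} for multi-head self-attention on bounded inputs:
\[
1 + \lVert W_O\rVert \sqrt{H}\max_h \lVert W^{V,h}\rVert\left(\tfrac{4}{\sqrt{d/H}}\lVert W^{Q,h}\rVert\lVert W^{K,h}\rVert + 1\right).
\]
Here the $\sqrt{H}$ comes from concatenating the heads, the "$+1$" inside the brackets from the value path, the $4/\sqrt{d/H}$ term from the Jacobian of the softmax over bounded scores, and the leading "$1$" from the residual. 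The FFN-plus-residual block has Lipschitz constant $1 + \lVert W_{FFN}\rVert$, using $1$-Lipschitz activations, and the readout contributes $\lVert W_{\text{out}}\rVert$. Together these give $C_2$.

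\textbf{Step 3: composition.} Multiplying the constants gives the claimed $\gamma$. Taking the supremum over $\mcB$ and then the expectation preserves the bound, which yields $\mathcal{R}_\epsilon[f] \le \gamma$ as in \Cref{def:adv_robustness_appendix}.

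The main obstacle is Step 1 together with the attention Lipschitz bound. Edit-distance perturbations change the sequence length and shift positions, so the rows no longer align, and softmax attention is not globally Lipschitz. I would handle length by padding both sequences to a common $n$ and aligning them along the optimal edit script, so that only $k$ rows differ. For attention, I would rely on the boundedness assumption on embeddings, which is what makes the constant from \cite{ennadir2025pool} applicable.
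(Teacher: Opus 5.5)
Your skeleton is the paper's. You write $f=A\circ E$ with $E(s)=WH(s)$. At most $k$ nonzero columns of norm $\sqrt2$ in $H(s_1)-H(s_2)$ give $\lVert E(s_1)-E(s_2)\rVert\le\sqrt2\lVert W\rVert\sqrt k$. You then import the block inequality with factor $(d/(d-1))^2C_1C_2$ from Ennadir et al.\ and compose. The paper goes no further: it assumes the two tokenisations ``differ in at most $k$ token positions'' (substitutions at fixed positions) and treats the block constant as a black box. The problems are in what you add.

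You are right that tokenisations in $\mathcal{T}^k_{\mathcal{V}}(x)$ generally have different lengths, but your padding-and-alignment fix does not repair this. The network is evaluated on the actual sequences, not on matrices with pad rows inserted mid-sequence. Unmasked pads change the softmax normalisation (and, with positional encodings, the positions). Masked pads make the map depend on a mask that differs between the two inputs, so there is no single $A$ whose Lipschitz constant applies to the aligned matrices. Moreover, because deletions cost $0$, re-segmenting $a$ reference tokens into $b<a$ tokens costs only $b$ but changes $a$ aligned rows. So ``at most $k$ differing rows'' fails as soon as the perturbation merges reference tokens. Without alignment, a single split shifts every later row, and $\lVert H(s_1)-H(s_2)\rVert$ can be of order $\sqrt{n}$ rather than $\sqrt{k}$. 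Step 1 therefore needs an explicit modelling assumption (fixed-length, substitution-type perturbations, which the paper uses tacitly); the alignment argument does not supply it.

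The constant is also not what the citation supports. Where the paper's proof actually writes out the cited inequality, $C_1$ contains $4n/\sqrt{d/H}$, with $n$ the number of tokens. So the $n$-free $C_1$ of the statement, which you attribute to the citation, is not what the paper's own proof obtains from it. Your ``softmax Jacobian over bounded scores'' explanation does not remove the length dependence either, since entries in $[0,1]$ still allow $\lVert z_e\rVert$ up to $\sqrt{nd}$. Followed honestly, this route gives $\gamma$ with $n$ inside $C_1$. Because $n$ differs between $s_1$ and $s_2$ and across inputs, your claim that the bound is uniform in $x$ then needs a bound on sequence length.

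Finally, your derivation of $(d/(d-1))^2$ is false as stated. LayerNorm's Jacobian scales like $1/\sigma$, which is unbounded on $[0,1]^d$ near constant rows (capped only by the stabiliser). RMSNorm blows up near $0$, and the input to the second normalisation is not confined to $[0,1]^d$ anyway. Either cite the block inequality as a black box with its hypotheses, as the paper does, or add the variance lower bound that such a Lipschitz claim requires.
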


\begin{proof}

We recall our considered problem setup, in which $f$ is a classifier (a large-language model).
In the following analysis, we will ignore the reliance on the tokeniser $\tau$ and focus on the remaining function, \ie, 
\begin{equation}
    f = A \circ E ,
\end{equation}
where:
\begin{itemize}
\item $E:\cV^{*} \to \R^{d\times n}$ an embedding function, mapping from tokens to a continuous space.
    \item $A:\mathbb{R}^{d\times n} \to \mathbb{R}^{m}$ an attention-block, mapping from a set of tokens to a classification, for instance. Note that we consider the pooling operation within this block. 
\end{itemize}

Now recall that $\mcX$ denotes the space of raw inputs (\eg, strings or words), $\cV$ is the token vocabulary, and let $\cV^{*}$ be the set of finite token sequences where $n$ is the length of the sequence, which depends on the input's length and the considered tokenisation.
Further, let $d$ be the embedding dimension. 

We now consider that each space within our components is equipped with an appropriate metric or distance, \ie, $d_{\cV^*}$ (edit distance) on token sequences, and the L2 norm in Euclidean spaces.

Given a character sequence $\bx$, we denote:
\begin{equation}
    s_1 \in \mcV^*, \quad s_2 \in \mcT^k_{\mcV^*}(\bx, s_1) ,
\end{equation}
where $s_1$ is, for example, the canonical tokenisation of a string, and $s_2$ denotes an adversarially perturbed counterpart within the $ k$-neighbourhood of $s_1$.
Consequently, the adversarial perturbation induces at most $k$ token-level changes, which can be written as:
\begin{equation}
    d_{\mathcal V^*}(s_1, s_2) \leq k.
\end{equation}
The effect of the perturbation can first be shown within the embedding level, and we therefore start by writing the effect of this perturbation. 
Let $H(s) \in \{0,1\}^{\mid\mcV\mid \times n}$ denote the one-hot representation of a token sequence $s = (v_1,\dots,v_n)$, so that the embedding function can be written as
\begin{equation}
E(s) = W H(s),
\end{equation}
where $W \in \reals^{d \times \mid\mcV\mid }$ is the learned embedding matrix.

Each token substitution affects exactly one column of the matrix $H(s)$, replacing a one-hot vector by another one-hot vector. Therefore, if $s_1$ and $s_2$ differ in at most $k$ token positions, the matrix $H(s_1) - H(s_2)$ has at most $k$ non-zero columns, each with $\ell_2$-norm $\sqrt{2}$. We can consequently formulate the following:
\begin{equation}
    \lVert H(s_1) - H(s_2)\lVert \leq \sqrt{2k}.
\end{equation}

And consequently, we can also derive the following:
\begin{equation}
    \norm{E(s_1) - E(s_2)} = \norm{W(H(s_1) - H(s_2))} \leq \norm{W}  \norm{H(s_1) - H(s_2)} \leq \sqrt{2} \norm{W} \sqrt{k}.
\end{equation}

Now, let's consider the second part, which relates to the self-attention block. 
Let $z_e$ (corresponding to $s_1)$ and $\tilde{z_e}$ (the perturbed version corresponding to $s_2$) be the input to this specific part of the model. In practice an additional linear head is added at the end to get the final predictions, in our case we consider it to be $W_{\text{out}}$. 

Additionally, as specific in Section \ref{sec:theo_analysis}, we consider that the output of the embedding function to be bounded in terms of its norm (\eg, $[0, 1]^{n \times d}$, with $n$ being the number of tokens and $d$ the hidden dimension). Consequently, based on the previous assumptions and following previous work \cite{ennadir2025pool}, we have the following inequality:
\begin{equation}
    \lVert A(z_e) - A(\tilde{z_e}) \lVert \leq \lVert z_e - \tilde{z_e} \lVert \left(\frac{d}{d-1}\right)^2 C_1 C_2
\end{equation}
with
\begin{align}
    C_1 &= 1 +\lVert W_O \lVert \sqrt{H} \max_h \Biggl[ \Bigl\lVert W^{V,h} \Bigr\rVert \left(4 \frac{n}{\sqrt{d/H}} \Bigl\lVert W^{Q,h} \Bigr\rVert \Bigl \lVert W^{K,h} \Bigr\rVert + 1\right) \Biggr] \\
    C_2 &= (1 + \Bigl\lVert W_{FFN} \Bigr\rVert) \lVert W_{\text{out}} \lVert.
\end{align}

Hence, after simple substitution we have:
\begin{align}
    \lVert f(s_1) - f(s_2) \lVert &\leq  \lVert A(E(s_1)) - A(E(s_2)) \lVert \\
    & \leq \sqrt{2} \lVert W \lVert
    \left(\frac{d}{d-1}\right)^2
    C_1 C_2
    \sqrt{k}.
\end{align}

\end{proof}

\section{Distributional Properties of Stochastic Tokenisations} \label{app:distribution}

Let $\bx=(x_1,\dots,x_s)$ be a string sequence and $\bv^c=(v_1^c,\dots,v_m^c)$ its canonical tokenisation. We sample a non-canonical tokenisation $\bv\sim P(\cdot\mid\bx)$ from a stochastic tokenisation scheme, where $\bv=(v_1,\dots,v_p) \in \mcT_\mcV(\bx)$. We define a per-token split-count vector $\bS = (S_1, \dots, S_m)$ that describes the number of splits induced in each canonical token. The split vector $\bS$ is a random variable. In the following, we discuss the probability distribution of $\bS$ for each of our considered stochastic tokenisation schemes. 

\paragraph{\stok}
\stok applies $N$ token-level splits in an iterative sampling process (see \Cref{alg:stochastok}). At each step, a token $v_i$ is selected uniformly at random and replaced by a pair of subtokens $(t_1, t_2)\in\mcV^2$ with $t_1\circ t_2=v_1$. In the next step, a token from this extended token sequence is selected uniformly. Now, there are two possibilities to select a token that corresponds to the original token $v_i$. This procedure corresponds exactly to a Polya-urn model. Thus, the number of times each original token is split follows a \emph{Dirichlet–multinomial distribution} with symmetrical concentration parameter
\[
\bS \sim \distDirMult(N, \balpha), \quad \text{with } \balpha = \b1.
\]

\paragraph{\stokuni} We construct \stokuni to follow the per-token split-count distribution $\bS$ of \stok. We draw $\bS \sim \distDirMult(N, \b1)$, and sample the tokenisations for each canonical token $v_i$ with the specified split count $S_i$. Recall that each token $v_i^c\in\mcV$ corresponds to a character sequence, which we denote by $\tilde{\bx}_i \coloneqq \kappa(v_i^c)\in\mcX^*$.
Thus, we sample 
$
\bv_i \sim \distUnif \left( \mcT^{S_i+1}_\mcV\left(v^c_i\right) \right).$

\paragraph{\unik} \unik selects a non-canonical tokenisation uniformly at random from the set of all possible valid segmentations with edit distance $k$, \ie 
$\bv \sim \distUnif \left( \mcT^{k}_\mcV\left(\bx \right) \right).$ 

Next, we want to describe the distribution over the per-token split-count $\bS=(S_1,\dots,S_m)$ this scheme induces. To simplify, we consider only splits within canonical token boundaries. 
For a canonical token $v_i^c$, let $A_i(s)=|\mcT_\mcV^s(v^c_i)|$ denote the number of non-canonical tokenisations of $v_i^c$ with exactly $s$ splits. Using these counts, we construct a polynomial
\[
G_i(x) \;=\; \sum_{s\ge 0} A_i(s)\,x^{\,s}.
\]
The total number of tokenisations of $\bv^c$ with exactly $k$ splits can be computed via the product \[G(x) = \prod_{i=1}^m G_i(x)\] via the $k$-th coefficient $[x^k]G(x)$. 
To obtain the probability of canonical token $v^c_i$ having exactly $S_i=s$ splits, we define
\[
H_i(x) \;=\; \prod_{j \neq i} G_j(x),
\]
and obtain
\[
\Pr\left(S_i=s \mid \sum_j S_j=k\right)
=
\frac{A_i(s)\,[x^{k-s}]\,H_i(x)}{[x^k]\,G(x)}~.
\]

\paragraph{\uni} \uni samples tokenisations uniformly from all valid non-canonical tokenisations, $\bv\sim\distUnif(\mcT_\mcV(\bx))$. For simplification, assume an infinite vocabulary $\mcV$. Let $S_i$ be the number of splits within $v_i^c$. Then, $S_i$ follows a \textit{binomial} distribution
\[
S_i \sim \distBinom(|v_i^c|-1, \nicefrac{1}{2})
\]
as \uni corresponds to flipping a fair coin at every internal boundary.

\FloatBarrier
\subsection{Empirical distribution over Per-token Split-counts} \label{app:sec:splitcounts}

We test whether our theoretical distributions for the per-token split-count distributions match the empirical distribution for the different stochastic tokenisation schemes. 

We take an example input sequence $\bx=$(\texttt{revolution is a rapid, fundamental transformation of a society's class, state, ethnic or religious structures}). We sample $D=50,000$ draws of tokenisations $\bv\sim P(\cdot\mid\bx)$ and report the empirical distribution over the number of splits $S_1$ for the first token $v_1=\texttt{revolution}$.

\begin{figure}[h]
\begin{subfigure}[t]{0.48\linewidth}
    \centering
    \begin{tikzpicture}
\begin{axis}[
  height=\figureheight,
  width=\figurewidth,
  bar width=6pt,
  enlarge x limits=0.15,
  tick align=outside,
  tick pos=left,
  xtick style={color=black},
  ytick style={color=black},
  ymajorgrids,
  grid style={scGrey},
  xlabel={Number of splits $S_1$},
  ylabel={Probability (\%)},
xticklabel style={rotate=0},
  yticklabel style={rotate=0},
ymin=0, ymax=0.511538461538462,
  scaled y ticks=false,
  yticklabel={
    \pgfmathparse{100*\tick}\pgfmathprintnumber[fixed,precision=0]{\pgfmathresult}},
  legend cell align={left},
]

\addlegendimage{ybar,ybar legend,draw=none,fill=scBlue}
\addlegendentry{Simulation}

\draw[draw=none,fill=scBlue] (axis cs:-0.3,0) rectangle (axis cs:0.3,0.45782);
\draw[draw=none,fill=scBlue] (axis cs:0.7,0) rectangle (axis cs:1.3,0.25306);
\draw[draw=none,fill=scBlue] (axis cs:1.7,0) rectangle (axis cs:2.3,0.15838);
\draw[draw=none,fill=scBlue] (axis cs:2.7,0) rectangle (axis cs:3.3,0.08542);
\draw[draw=none,fill=scBlue] (axis cs:3.7,0) rectangle (axis cs:4.3,0.03138);
\draw[draw=none,fill=scBlue] (axis cs:4.7,0) rectangle (axis cs:5.3,0.01056);
\draw[draw=none,fill=scBlue] (axis cs:5.7,0) rectangle (axis cs:6.3,0.00266);
\draw[draw=none,fill=scBlue] (axis cs:6.7,0) rectangle (axis cs:7.3,0.00064);
\draw[draw=none,fill=scBlue] (axis cs:7.7,0) rectangle (axis cs:8.3,6e-05);
\draw[draw=none,fill=scBlue] (axis cs:8.7,0) rectangle (axis cs:9.3,2e-05);

\addplot[very thick, black, mark=+, mark size=3]
table[row sep=\\]{
0 0.487179487179487\\
1 0.256410256410257\\
2 0.131670131670132\\
3 0.0658350658350659\\
4 0.0319770319770321\\
5 0.015048015048015\\
6 0.00684000684000683\\
7 0.00299250299250299\\
8 0.00125492060975932\\
};
\addlegendentry{Theoretical}

\end{axis}
\end{tikzpicture}
    \caption{\stok}
    \label{fig:distribution_length_stochastok}
\end{subfigure}
\begin{subfigure}[t]{0.48\linewidth}
    \centering
    \begin{tikzpicture}
\begin{axis}[
  height=\figureheight,
  width=\figurewidth,
  bar width=6pt,
  enlarge x limits=0.15,
  tick align=outside,
  tick pos=left,
  xtick style={color=black},
  ytick style={color=black},
  ymajorgrids,
  grid style={scGrey},
  xlabel={Number of splits $S_1$},
  ylabel={Probability (\%)},
xticklabel style={rotate=0},
  yticklabel style={rotate=0},
ymin=0, ymax=0.511728,
  scaled y ticks=false,
  yticklabel={
    \pgfmathparse{100*\tick}\pgfmathprintnumber[fixed,precision=0]{\pgfmathresult}},
  legend cell align={left},
]

\addlegendimage{ybar,ybar legend,draw=none,fill=scBlue}
\addlegendentry{Simulation}

\draw[draw=none,fill=scBlue] (axis cs:-0.3,0) rectangle (axis cs:0.3,0.48736);
\draw[draw=none,fill=scBlue] (axis cs:0.7,0) rectangle (axis cs:1.3,0.2568);
\draw[draw=none,fill=scBlue] (axis cs:1.7,0) rectangle (axis cs:2.3,0.1301);
\draw[draw=none,fill=scBlue] (axis cs:2.7,0) rectangle (axis cs:3.3,0.06772);
\draw[draw=none,fill=scBlue] (axis cs:3.7,0) rectangle (axis cs:4.3,0.03188);
\draw[draw=none,fill=scBlue] (axis cs:4.7,0) rectangle (axis cs:5.3,0.01466);
\draw[draw=none,fill=scBlue] (axis cs:5.7,0) rectangle (axis cs:6.3,0.00632);
\draw[draw=none,fill=scBlue] (axis cs:6.7,0) rectangle (axis cs:7.3,0.00296);
\draw[draw=none,fill=scBlue] (axis cs:7.7,0) rectangle (axis cs:8.3,0.00128);
\draw[draw=none,fill=scBlue] (axis cs:8.7,0) rectangle (axis cs:9.3,0.00092);

\addplot[very thick, black, mark=+, mark size=3]
table[row sep=\\]{
0 0.487179487179487\\
1 0.256410256410257\\
2 0.131670131670132\\
3 0.0658350658350659\\
4 0.0319770319770321\\
5 0.015048015048015\\
6 0.00684000684000683\\
7 0.00299250299250299\\
8 0.00125492060975932\\
};
\addlegendentry{Theoretical}

\end{axis}
\end{tikzpicture}
    \caption{\stokuni}
    \label{fig:distribution_length_stochastok_uni}
\end{subfigure}
\\[1cm]
\vspace*{1cm}
\begin{subfigure}[t]{0.48\linewidth}
    \centering
    \begin{tikzpicture}
\begin{axis}[
  height=\figureheight,
  width=\figurewidth,
  bar width=6pt,
  enlarge x limits=0.15,
  tick align=outside,
  tick pos=left,
  xtick style={color=black},
  ytick style={color=black},
  ymajorgrids,
  grid style={scGrey},
  xlabel={Number of splits $S_1$},
  ylabel={Probability (\%)},
xticklabel style={rotate=0},
  yticklabel style={rotate=0},
ymin=0, ymax=0.67179,
  scaled y ticks=false,
  yticklabel={
    \pgfmathparse{100*\tick}\pgfmathprintnumber[fixed,precision=0]{\pgfmathresult}},
  legend cell align={left},
]

\draw[draw=none,fill=scBlue] (axis cs:-0.3,0) rectangle (axis cs:0.3,0.6398);
\addlegendimage{ybar,ybar legend,draw=none,fill=scBlue}
\addlegendentry{Simulation}

\draw[draw=none,fill=scBlue] (axis cs:0.7,0) rectangle (axis cs:1.3,0.067);
\draw[draw=none,fill=scBlue] (axis cs:1.7,0) rectangle (axis cs:2.3,0.0906);
\draw[draw=none,fill=scBlue] (axis cs:2.7,0) rectangle (axis cs:3.3,0.0914);
\draw[draw=none,fill=scBlue] (axis cs:3.7,0) rectangle (axis cs:4.3,0.0696);
\draw[draw=none,fill=scBlue] (axis cs:4.7,0) rectangle (axis cs:5.3,0.0314);
\draw[draw=none,fill=scBlue] (axis cs:5.7,0) rectangle (axis cs:6.3,0.0084);
\draw[draw=none,fill=scBlue] (axis cs:6.7,0) rectangle (axis cs:7.3,0.0016);
\draw[draw=none,fill=scBlue] (axis cs:7.7,0) rectangle (axis cs:8.3,0.0002);

\addplot[very thick, black, mark=+, mark size=3]
table[row sep=\\]{
0 0.534650371237025\\
1 0.0706293793255496\\
2 0.0991915408245849\\
3 0.118997610239912\\
4 0.102309618289508\\
5 0.0546286257842331\\
6 0.0164454219615871\\
7 0.00289582436475047\\
8 0.000243762433954433\\
};
\addlegendentry{Theoretical}

\end{axis}
\end{tikzpicture}
    \caption{\unik}
    \label{fig:distribution_length_unik}
\end{subfigure}
\begin{subfigure}[t]{0.48\linewidth}
    \centering
    \begin{tikzpicture}
\begin{axis}[
  height=\figureheight,
  width=\figurewidth,
  bar width=6pt,
  enlarge x limits=0.15,
  tick align=outside,
  tick pos=left,
  xtick style={color=black},
  ytick style={color=black},
  ymajorgrids,
  grid style={scGrey},
  xlabel={Number of splits $S_1$},
  ylabel={Probability (\%)},
xticklabel style={rotate=0},
  yticklabel style={rotate=0},
ymin=0, ymax=0.511728,
  scaled y ticks=false,
  yticklabel={
    \pgfmathparse{100*\tick}\pgfmathprintnumber[fixed,precision=0]{\pgfmathresult}},
  legend cell align={left},
]

\addlegendimage{ybar,ybar legend,draw=none,fill=scBlue}
\addlegendentry{Simulation}
\draw[draw=none,fill=scBlue] (axis cs:-0.3,0) rectangle (axis cs:0.3,0.00478);
\draw[draw=none,fill=scBlue] (axis cs:0.7,0) rectangle (axis cs:1.3,0.0042);
\draw[draw=none,fill=scBlue] (axis cs:1.7,0) rectangle (axis cs:2.3,0.01904);
\draw[draw=none,fill=scBlue] (axis cs:2.7,0) rectangle (axis cs:3.3,0.06432);
\draw[draw=none,fill=scBlue] (axis cs:3.7,0) rectangle (axis cs:4.3,0.16654);
\draw[draw=none,fill=scBlue] (axis cs:4.7,0) rectangle (axis cs:5.3,0.27416);
\draw[draw=none,fill=scBlue] (axis cs:5.7,0) rectangle (axis cs:6.3,0.27356);
\draw[draw=none,fill=scBlue] (axis cs:6.7,0) rectangle (axis cs:7.3,0.14596);
\draw[draw=none,fill=scBlue] (axis cs:7.7,0) rectangle (axis cs:8.3,0.04304);
\draw[draw=none,fill=scBlue] (axis cs:8.7,0) rectangle (axis cs:9.3,0.0044);

\addplot[very thick, black, mark=+, mark size=3]
table[row sep=\\]{
0 0\\
1 0.001953125\\
2 0.017578125\\
3 0.0703125\\
4 0.1640625\\
5 0.24609375\\
6 0.24609375\\
7 0.1640625\\
8 0.0703125\\
9 0.017578125\\
};
\addlegendentry{Theoretical}

\end{axis}
\end{tikzpicture}
    \caption{\uni}
    \label{fig:distribution_length_uni}
\end{subfigure}
\end{figure}
\FloatBarrier

\newpage
\subsection{Different Tokenisers and Their Effect on Stochastic Tokenisation Schemes}

GPT2's BPE tokeniser \cite{sennrich2016neural} has a vocabulary of 50,257 tokens, whereas Llama3's tokeniser has a vocabulary size of 128,k. The Llama3 tokeniser is BPE based with the difference that it ignores merge rules if a string is part of the vocabulary. 

\begin{figure*}[h]
\appheader{GPT2}
\centering

\begin{subfigure}[t]{0.48\linewidth}
\centering
\begin{tikzpicture}[baseline=(current bounding box.north)]
\begin{axis}[
  ybar,
  bar width=10pt,
  width=\linewidth,
  height=5cm,
  ymin=0,
  ylabel={Probability (\%)},
  scaled y ticks=false,
  yticklabel={
  \pgfmathparse{100*\tick}%
  \pgfmathprintnumber[fixed,precision=0]{\pgfmathresult}%
  },
  xlabel={},
  symbolic x coords={rev-olution},
  xtick=data,
  xticklabel style={font=\ttfamily\small, rotate=90, anchor=east},
  enlarge x limits=0.18,
  legend style={at={(0.5,1.02)}, anchor=south, legend columns=2},
]
\addplot+[bar shift=-6pt,fill=scRed, draw=scRed] table[
  col sep=comma,
  x=Segmentation,
  y=Frequency
] {
Segmentation,Frequency
rev-olution,1.0
};
\addplot+[bar shift=+6pt,fill=black!60, draw=black!60] table[
  col sep=comma,
  x=Segmentation,
  y=Frequency
] {
Segmentation,Frequency
rev-olution,1.0
};
\legend{\uni~~, \stok}
\end{axis}
\end{tikzpicture}
\label{fig:seg_hist_coarse_b}
\end{subfigure}
\hfill 
\begin{subfigure}[t]{0.48\linewidth}
\centering
\begin{tikzpicture}[baseline=(current bounding box.north)]
\begin{axis}[
  ybar,
  bar width=10pt,
  width=\linewidth,
  height=5cm,
  ymin=0,
  ylabel={Probability (\%)},
  scaled y ticks=false,
  yticklabel={
  \pgfmathparse{100*\tick}%
  \pgfmathprintnumber[fixed,precision=0]{\pgfmathresult}%
  },
  xlabel={},
  symbolic x coords={re-vol-ution,r-ev-olution,re-v-olution,rev-ol-ution},
  xtick=data,
  xticklabel style={font=\ttfamily\small, rotate=90, anchor=east},
  enlarge x limits=0.18,
  legend style={at={(0.5,1.02)}, anchor=south, legend columns=2},
]
\addplot+[bar shift=-6pt,fill=scRed, draw=scRed] table[
  col sep=comma,
  x=Segmentation,
  y=Frequency
] {
Segmentation,Frequency
re-vol-ution,0.2502744237102086
r-ev-olution,0.26344676180021953
re-v-olution,0.25850713501646544
rev-ol-ution,0.22777167947310648
};
\addplot+[bar shift=+6pt,fill=black!60, draw=black!60] table[
  col sep=comma,
  x=Segmentation,
  y=Frequency
] {
Segmentation,Frequency
re-vol-ution,0
rev-ol-ution,0.461441213653603
re-v-olution,0.27686472819216185
r-ev-olution,0.26169405815423513
};
\legend{\uni~~, \stok}
\end{axis}
\end{tikzpicture}
\label{fig:seg_hist_coarse}
\end{subfigure}
\par\bigskip

\begin{subfigure}[t]{0.48\linewidth}
\centering
\begin{tikzpicture}[baseline=(current bounding box.north)]
\begin{axis}[
  ybar,
  bar width=4pt,
  width=\linewidth,
  height=5cm,
  ymin=0,
  ylabel={Probability (\%)},
  scaled y ticks=false,
  yticklabel={
  \pgfmathparse{100*\tick}%
  \pgfmathprintnumber[fixed,precision=0]{\pgfmathresult}%
  },
  xlabel={},
  symbolic x coords={
    rev-o-l-ution,
    re-vo-lu-tion,
    rev-o-lu-tion,
    r-ev-ol-ution,
    r-e-v-olution,
    re-vo-l-ution,
    rev-ol-uti-on,
    re-vol-ut-ion,
    re-vol-u-tion,
    r-e-vol-ution,
    re-v-ol-ution,
    re-vol-uti-on,
    rev-ol-u-tion,
    rev-ol-ut-ion,
  },
  xtick=data,
  xticklabel style={font=\ttfamily\small, rotate=90, anchor=east},
  enlarge x limits=0.05,
  legend style={at={(0.5,1.02)}, anchor=south, legend columns=2},
]
\addplot+[bar shift=-2pt,fill=scRed, draw=scRed] table[
  col sep=comma,
  x=Segmentation,
  y=Frequency
] {
Segmentation,Frequency
rev-o-l-ution,0.054838709677419356
re-vo-lu-tion,0.07096774193548387
rev-o-lu-tion,0.06129032258064516
r-ev-ol-ution,0.09032258064516129
r-e-v-olution,0.07096774193548387
re-vo-l-ution,0.08064516129032258
rev-ol-uti-on,0.0935483870967742
re-vol-ut-ion,0.08387096774193549
re-vol-u-tion,0.08387096774193549
r-e-vol-ution,0.08709677419354839
re-v-ol-ution,0.05161290322580645
re-vol-uti-on,0.04838709677419355
rev-ol-u-tion,0.04838709677419355
rev-ol-ut-ion,0.07419354838709677
};
\addplot+[bar shift=+2pt,fill=black!60, draw=black!60] table[
  col sep=comma,
  x=Segmentation,
  y=Frequency,
] {
Segmentation,Frequency
r-ev-ol-ution,0.17307692307692307
rev-o-l-ution,0.21875
re-v-ol-ution,0.18509615384615385
rev-ol-ut-ion,0.052884615384615384
rev-ol-uti-on,0.04807692307692308
rev-ol-u-tion,0.05048076923076923
r-e-v-olution,0.27163461538461536
};
\legend{\uni~~, \stok}
\end{axis}
\end{tikzpicture}
\label{fig:seg_hist_fine_b}
\end{subfigure}
\hfill 
\begin{subfigure}[t]{0.48\linewidth}
\centering
\begin{tikzpicture}[baseline=(current bounding box.north)]
\begin{axis}[
  ybar,
  bar width=1.5pt,
  width=\linewidth,
  height=5cm,
  ymin=0,
  ylabel={Probability (\%)},
  scaled y ticks=false,
  yticklabel={
  \pgfmathparse{100*\tick}%
  \pgfmathprintnumber[fixed,precision=0]{\pgfmathresult}%
  },
  xlabel={},
  symbolic x coords={
        rev-ol-ut-io-n,
        rev-o-l-u-tion,
        rev-o-l-uti-on,
        r-ev-ol-uti-on,
        re-v-o-l-ution,
        r-e-vol-u-tion,
        rev-o-l-ut-ion,
        rev-ol-u-t-ion,
        rev-o-lu-ti-on,
        re-v-ol-uti-on,
        r-e-v-ol-ution,
        rev-ol-u-ti-on,
        r-ev-ol-ut-ion,
        re-vo-lu-t-ion,
        re-vo-l-uti-on,
        re-vol-uti-o-n,
        re-vol-u-ti-on,
        r-e-vol-ut-ion,
        re-v-o-lu-tion,
        re-v-ol-u-tion,
        r-ev-ol-u-tion,
        r-e-vo-l-ution,
        r-e-vol-uti-on,
        re-vol-u-t-ion,
        re-vol-ut-io-n,
        r-ev-o-l-ution,
        re-vo-l-ut-ion,
        re-vo-l-u-tion,
        r-e-vo-lu-tion,
        rev-ol-ut-i-on,
        r-ev-o-lu-tion,
        re-vol-ut-i-on,
        re-vo-lu-ti-on,
        rev-o-lu-t-ion,
        re-v-ol-ut-ion,
        rev-ol-uti-o-n,
  },
  xtick=data,
  xticklabel style={font=\ttfamily\small, rotate=90, anchor=east},
  enlarge x limits=0.01,
  legend style={at={(0.5,1.02)}, anchor=south, legend columns=2},
]
\addplot+[bar shift=-1pt,
  fill=scRed, draw=scRed] table[
  col sep=comma,
  x=Segmentation,
  y=Frequency
] {
Segmentation,Frequency
rev-ol-ut-io-n,0.03814064362336114
rev-o-l-u-tion,0.026221692491060787
rev-o-l-uti-on,0.026221692491060787
r-ev-ol-uti-on,0.027413587604290822
re-v-o-l-ution,0.025029797377830752
r-e-vol-u-tion,0.03456495828367104
rev-o-l-ut-ion,0.028605482717520857
rev-ol-u-t-ion,0.029797377830750895
rev-o-lu-ti-on,0.03098927294398093
re-v-ol-uti-on,0.03098927294398093
r-e-v-ol-ution,0.03814064362336114
rev-ol-u-ti-on,0.028605482717520857
r-ev-ol-ut-ion,0.026221692491060787
re-vo-lu-t-ion,0.033373063170441
re-vo-l-uti-on,0.03575685339690107
re-vol-uti-o-n,0.02264600715137068
re-vol-u-ti-on,0.03218116805721097
r-e-vol-ut-ion,0.04290822407628129
re-v-o-lu-tion,0.01907032181168057
re-v-ol-u-tion,0.03694874851013111
r-ev-ol-u-tion,0.025029797377830752
r-e-vo-l-ution,0.026221692491060787
r-e-vol-uti-on,0.01907032181168057
re-vol-u-t-ion,0.03098927294398093
re-vol-ut-io-n,0.01907032181168057
r-ev-o-l-ution,0.023837902264600714
re-vo-l-ut-ion,0.02264600715137068
re-vo-l-u-tion,0.0166865315852205
r-e-vo-lu-tion,0.03456495828367104
rev-ol-ut-i-on,0.027413587604290822
r-ev-o-lu-tion,0.025029797377830752
re-vol-ut-i-on,0.023837902264600714
re-vo-lu-ti-on,0.025029797377830752
rev-o-lu-t-ion,0.023837902264600714
re-v-ol-ut-ion,0.025029797377830752
rev-ol-uti-o-n,0.017878426698450536
};
\addplot+[bar shift=+1pt,fill=black!60, draw=black!60] table[
  col sep=comma,
  x=Segmentation,
  y=Frequency,
] {
Segmentation,Frequency
r-ev-ol-ut-ion,0.0375
re-v-ol-u-tion,0.06875
re-v-o-l-ution,0.1125
rev-o-l-uti-on,0.05625
r-e-v-ol-ution,0.225
rev-o-l-ut-ion,0.0375
r-ev-o-l-ution,0.125
rev-ol-ut-i-on,0.0125
rev-o-l-u-tion,0.0625
r-ev-ol-uti-on,0.0625
rev-ol-u-t-ion,0.05
r-ev-ol-u-tion,0.0375
rev-ol-uti-o-n,0.01875
rev-ol-u-ti-on,0.01875
rev-ol-ut-io-n,0.0125
re-v-ol-ut-ion,0.03125
re-v-ol-uti-on,0.03125
};
\legend{\uni~~, \stok}
\end{axis}
\end{tikzpicture}
\label{fig:seg_hist_fine}
\end{subfigure}

\caption{Histogram comparison of segmentation probabilities for two sampling schemes (\uni vs.\ \stok).}
\label{app:fig:seg_histograms}

\end{figure*}

\newpage

\begin{figure*}[h]
\appheader{Llama 3}
\centering

\begin{subfigure}[t]{0.48\linewidth}
\centering
\begin{tikzpicture}[baseline=(current bounding box.north)]
\begin{axis}[
  ybar,
  bar width=10pt,
  width=\linewidth,
  height=5cm,
  ymin=0,
  ylabel={Probability (\%)},
  scaled y ticks=false,
  yticklabel={
  \pgfmathparse{100*\tick}%
  \pgfmathprintnumber[fixed,precision=0]{\pgfmathresult}%
  },
  xlabel={},
  symbolic x coords={
    rev-olution,
    re-volution,
  },
  xtick=data,
  xticklabel style={font=\ttfamily\small, rotate=90, anchor=east},
  enlarge x limits=0.3,
  legend style={at={(0.5,1.02)}, anchor=south, legend columns=2},
]
\addplot+[bar shift=-6pt,fill=scRed, draw=scRed] table[
  col sep=comma,
  x=Segmentation,
  y=Frequency] {
Segmentation,Frequency
rev-olution,0.5012787723785166
re-volution,0.49872122762148335

};
\addplot+[bar shift=+6pt,fill=black!60, draw=black!60] table[
  col sep=comma,
  x=Segmentation,
  y=Frequency] {
Segmentation,Frequency
re-volution,0.4899436846339501
rev-olution,0.5100563153660499
};
\legend{\uni~~, \stok}
\end{axis}
\end{tikzpicture}
\label{fig:seg_hist_coarse_appendix_b}
\end{subfigure}
\hfill 
\begin{subfigure}[t]{0.48\linewidth}
\centering
\begin{tikzpicture}[baseline=(current bounding box.north)]
\begin{axis}[
  ybar,
  bar width=8pt,
  width=\linewidth,
  height=5cm,
  ymin=0,
  ylabel={Probability (\%)},
  scaled y ticks=false,
  yticklabel={
  \pgfmathparse{100*\tick}%
  \pgfmathprintnumber[fixed,precision=0]{\pgfmathresult}%
  },
  xlabel={},
  symbolic x coords={
    re-vol-ution,
    re-v-olution,
    r-ev-olution,
    rev-ol-ution,
    rev-olut-ion,
    r-e-volution,
    rev-olu-tion,
  },
  xtick=data,
  xticklabel style={font=\ttfamily\small, rotate=90, anchor=east},
  enlarge x limits=0.18,
  legend style={at={(0.5,1.02)}, anchor=south, legend columns=2},
]
\addplot+[bar shift=-5pt,fill=scRed, draw=scRed] table[
  col sep=comma,
  x=Segmentation,
  y=Frequency] {
Segmentation,Frequency
re-vol-ution,0.13700918964076858
re-v-olution,0.14118629908103592
r-ev-olution,0.13450292397660818
rev-ol-ution,0.14118629908103592
rev-olut-ion,0.14619883040935672
r-e-volution,0.14703425229741018
rev-olu-tion,0.15288220551378445
};
\addplot+[bar shift=+5pt,fill=black!60, draw=black!60] table[
  col sep=comma,
  x=Segmentation,
  y=Frequency] {
Segmentation,Frequency
re-v-olution,0.24655963302752293
r-ev-olution,0.14220183486238533
r-e-volution,0.3130733944954128
rev-olu-tion,0.07224770642201835
rev-olut-ion,0.05045871559633028
re-vol-ution,0.10665137614678899
rev-ol-ution,0.06880733944954129
};
\legend{\uni~~, \stok}
\end{axis}
\end{tikzpicture}
\label{fig:seg_hist_coarse_appendix}
\end{subfigure}
\par\bigskip
\begin{subfigure}[t]{0.48\linewidth}
\centering
\begin{tikzpicture}[baseline=(current bounding box.north)]
\begin{axis}[
  ybar,
  bar width=2pt,
  width=\linewidth,
  height=5cm,
  ymin=0,
  ylabel={Probability (\%)},
  scaled y ticks=false,
  yticklabel={
  \pgfmathparse{100*\tick}%
  \pgfmathprintnumber[fixed,precision=0]{\pgfmathresult}%
  },
  xlabel={},
  symbolic x coords={
    re-vo-l-ution,
    r-e-vol-ution,
    r-ev-olut-ion,
    re-v-olut-ion,
    re-vol-uti-on,
    rev-ol-u-tion,
    rev-olu-ti-on,
    rev-ol-ut-ion,
    rev-olut-io-n,
    re-vo-lu-tion,
    r-e-v-olution,
    rev-o-lu-tion,
    rev-olut-i-on,
    re-v-ol-ution,
    rev-ol-uti-on,
    re-vol-ut-ion,
    rev-olu-t-ion,
    rev-o-lut-ion,
    re-v-olu-tion,
    r-ev-olu-tion,
    re-vo-lut-ion,
    re-vol-u-tion,
    r-ev-ol-ution,
    rev-o-l-ution,
  },
  xtick=data,
  xticklabel style={font=\ttfamily\small, rotate=90, anchor=east},
  enlarge x limits=0.05,
  legend style={at={(0.5,1.02)}, anchor=south, legend columns=2},
]
\addplot+[bar shift=-1.5pt,fill=scRed, draw=scRed] table[
  col sep=comma,
  x=Segmentation,
  y=Frequency
] {
Segmentation,Frequency
re-vo-l-ution,0.03501855287569573
r-e-vol-ution,0.04962894248608534
r-ev-olut-ion,0.04638218923933209
re-v-olut-ion,0.040584415584415584
re-vol-uti-on,0.03965677179962894
rev-ol-u-tion,0.040352504638218926
rev-olu-ti-on,0.04359925788497217
rev-ol-ut-ion,0.03780148423005566
rev-olut-io-n,0.041280148423005564
re-vo-lu-tion,0.046150278293135436
r-e-v-olution,0.041743970315398886
rev-o-lu-tion,0.041743970315398886
rev-olut-i-on,0.0387291280148423
re-v-ol-ution,0.0364100185528757
rev-ol-uti-on,0.04406307977736549
re-vol-ut-ion,0.03455473098330241
rev-olu-t-ion,0.04638218923933209
rev-o-lut-ion,0.04151205936920223
re-v-olu-tion,0.04406307977736549
r-ev-olu-tion,0.03849721706864564
re-vo-lut-ion,0.04568645640074211
re-vol-u-tion,0.042439703153988866
r-ev-ol-ution,0.03896103896103896
rev-o-l-ution,0.044758812615955476
};
\addplot+[bar shift=+1.5pt,fill=black!60, draw=black!60] table[
  col sep=comma,
  x=Segmentation,
  y=Frequency,
] {
Segmentation,Frequency
r-e-vol-ution,0.15240641711229946
r-ev-olu-tion,0.0374331550802139
rev-ol-u-tion,0.045454545454545456
r-e-v-olution,0.3048128342245989
re-v-ol-ution,0.07754010695187166
re-v-olu-tion,0.05614973262032086
rev-ol-ut-ion,0.0213903743315508
r-ev-olut-ion,0.026737967914438502
rev-ol-uti-on,0.008021390374331552
rev-o-l-ution,0.034759358288770054
rev-o-lu-tion,0.029411764705882353
re-vo-l-ution,0.01871657754010695
rev-o-lut-ion,0.008021390374331552
rev-olu-t-ion,0.0213903743315508
rev-olut-io-n,0.016042780748663103
r-ev-ol-ution,0.034759358288770054
rev-olut-i-on,0.0053475935828877
rev-olu-ti-on,0.0106951871657754
re-v-olut-ion,0.053475935828877004
re-vol-u-tion,0.016042780748663103
re-vol-uti-on,0.0106951871657754
re-vol-ut-ion,0.0106951871657754
};
\legend{\uni~~, \stok}
\end{axis}
\end{tikzpicture}
\label{fig:seg_hist_fine_appendix_b}
\end{subfigure}
\hfill 
\begin{subfigure}[t]{0.48\linewidth}
\centering
\begin{tikzpicture}[baseline=(current bounding box.north)]
\begin{axis}[
  ybar,
  bar width=1pt,
  width=\linewidth,
  height=5cm,
  ymin=0,
  ylabel={Probability (\%)},
  scaled y ticks=false,
  yticklabel={
  \pgfmathparse{100*\tick}%
  \pgfmathprintnumber[fixed,precision=0]{\pgfmathresult}%
  },
  xlabel={},
  symbolic x coords={
    rev-o-l-ut-ion,
    r-ev-olu-t-ion,
    r-e-vol-uti-on,
    rev-ol-u-t-ion,
    re-vol-ut-i-on,
    rev-olu-ti-o-n,
    re-vo-l-uti-on,
    r-e-v-olut-ion,
    r-ev-ol-uti-on,
    r-e-v-olu-tion,
    re-vol-ut-io-n,
    re-v-o-l-ution,
    re-v-ol-u-tion,
    r-e-vol-u-tion,
    re-vo-lut-io-n,
    rev-olu-t-io-n,
    re-vo-l-u-tion,
    re-v-olu-ti-on,
    re-vol-u-ti-on,
    rev-ol-u-ti-on,
    re-vo-lut-i-on,
    r-e-vo-lu-tion,
    r-ev-olut-io-n,
    r-e-vo-l-ution,
    rev-o-lu-t-ion,
    re-v-ol-uti-on,
    r-ev-o-lu-tion,
    re-v-olu-t-ion,
    re-vo-lu-ti-on,
    rev-o-lut-i-on,
    re-v-olut-i-on,
    r-ev-o-l-ution,
    rev-o-lu-ti-on,
    rev-o-l-u-tion,
    rev-ol-ut-i-on,
    rev-olut-i-o-n,
    re-vol-u-t-ion,
    rev-o-lut-io-n,
    r-ev-ol-u-tion,
    rev-olu-t-i-on,
    re-vo-l-ut-ion,
    r-ev-olu-ti-on,
    re-vol-uti-o-n,
    r-e-v-ol-ution,
    rev-ol-uti-o-n,
    re-v-o-lu-tion,
    rev-ol-ut-io-n,
    r-ev-olut-i-on,
    r-e-vo-lut-ion,
    r-ev-o-lut-ion,
    r-ev-ol-ut-ion,
    r-e-vol-ut-ion,
    rev-o-l-uti-on,
    re-v-olut-io-n,
    re-vo-lu-t-ion,
    re-v-o-lut-ion,
    re-v-ol-ut-ion,
  },
  xtick=data,
  xticklabel style={font=\ttfamily\tiny, rotate=90, anchor=east},
  enlarge x limits=0.01,
  legend style={at={(0.5,1.02)}, anchor=south, legend columns=2},
  yticklabel style={
  /pgf/number format/.cd,
    fixed,
    precision=1
},
scaled y ticks=false,
]
\addplot+[bar shift=-1pt,fill=scRed, draw=scRed] table[
  col sep=comma,
  x=Segmentation,
  y=Frequency
] {
Segmentation,Frequency
rev-o-l-ut-ion,0.017851829812555786
r-ev-olu-t-ion,0.018149360309431716
r-e-vol-uti-on,0.01894277496776753
rev-ol-u-t-ion,0.017752652980263812
re-vol-ut-i-on,0.015669939502132302
rev-olu-ti-o-n,0.01953783596151939
re-vo-l-uti-on,0.017951006644847764
r-e-v-olut-ion,0.016860061489636022
r-ev-ol-uti-on,0.017752652980263812
r-e-v-olu-tion,0.015471585837548348
re-vol-ut-io-n,0.016562530992760092
re-v-o-l-ution,0.019438659129227414
re-v-ol-u-tion,0.017256768818803926
r-e-vol-u-tion,0.014777348011504512
re-vo-lut-io-n,0.01924030546464346
rev-olu-t-io-n,0.017256768818803926
re-vo-l-u-tion,0.019141128632351484
re-v-olu-ti-on,0.017157591986511952
re-vol-u-ti-on,0.015669939502132302
rev-ol-u-ti-on,0.018248537141723694
re-vo-lut-i-on,0.017256768818803926
r-e-vo-lu-tion,0.01428146385004463
r-ev-olut-io-n,0.01636417732817614
r-e-vo-l-ution,0.017455122483387882
rev-o-lu-t-ion,0.02013289695527125
re-v-ol-uti-on,0.018843598135475554
r-ev-o-lu-tion,0.01517405534067242
re-v-olu-t-ion,0.016165823663592184
re-vo-lu-ti-on,0.019041951800059506
rev-o-lut-i-on,0.016860061489636022
re-v-olut-i-on,0.018050183477139742
r-ev-o-l-ution,0.01606664683130021
rev-o-lu-ti-on,0.014678171179212536
rev-o-l-u-tion,0.01636417732817614
rev-ol-ut-i-on,0.01924030546464346
rev-olut-i-o-n,0.0186452444708916
re-vol-u-t-ion,0.018248537141723694
rev-o-lut-io-n,0.017554299315679856
r-ev-ol-u-tion,0.016959238321927996
rev-olu-t-i-on,0.016860061489636022
re-vo-l-ut-ion,0.017653476147971834
r-ev-olu-ti-on,0.015273232172964396
re-vol-uti-o-n,0.01606664683130021
r-e-v-ol-ution,0.017355945651095904
rev-ol-uti-o-n,0.019736189626103344
re-v-o-lu-tion,0.018843598135475554
rev-ol-ut-io-n,0.017752652980263812
r-ev-olut-i-on,0.017752652980263812
r-e-vo-lut-ion,0.016661707825052066
r-ev-o-lut-ion,0.02211643360111078
r-ev-ol-ut-ion,0.016165823663592184
r-e-vol-ut-ion,0.01894277496776753
rev-o-l-uti-on,0.016860061489636022
re-v-olut-io-n,0.018149360309431716
re-vo-lu-t-ion,0.017752652980263812
re-v-o-lut-ion,0.019934543290687296
re-v-ol-ut-ion,0.018050183477139742

};
\addplot+[bar shift=+1pt,fill=black!60, draw=black!60] table[
  col sep=comma,
  x=Segmentation,
  y=Frequency
] {
Segmentation,Frequency
re-v-olu-t-ion,0.051470588235294115
r-e-vo-l-ution,0.022058823529411766
rev-ol-u-t-ion,0.03676470588235294
rev-o-lut-i-on,0.007352941176470588
rev-olu-t-i-on,0.03676470588235294
r-e-v-ol-ution,0.125
r-e-v-olu-tion,0.0661764705882353
rev-o-l-u-tion,0.04411764705882353
r-e-vol-ut-ion,0.03676470588235294
r-e-vol-uti-on,0.014705882352941176
re-v-olu-ti-on,0.022058823529411766
re-v-ol-u-tion,0.04411764705882353
r-e-v-olut-ion,0.07352941176470588
re-v-olut-i-on,0.007352941176470588
re-v-o-l-ution,0.07352941176470588
re-v-o-lu-tion,0.007352941176470588
re-v-ol-ut-ion,0.029411764705882353
r-ev-olu-ti-on,0.014705882352941176
r-e-vol-u-tion,0.03676470588235294
rev-ol-uti-o-n,0.007352941176470588
rev-olu-t-io-n,0.007352941176470588
re-v-ol-uti-on,0.007352941176470588
rev-o-lu-t-ion,0.022058823529411766
rev-ol-ut-i-on,0.022058823529411766
re-v-olut-io-n,0.014705882352941176
r-ev-o-l-ution,0.014705882352941176
rev-olut-i-o-n,0.014705882352941176
rev-o-l-ut-ion,0.014705882352941176
re-vol-u-t-ion,0.007352941176470588
r-ev-olut-io-n,0.014705882352941176
re-v-o-lut-ion,0.007352941176470588
re-vo-l-u-tion,0.007352941176470588
r-ev-olu-t-ion,0.014705882352941176
r-ev-olut-i-on,0.007352941176470588
r-ev-ol-ut-ion,0.007352941176470588
r-ev-o-lut-ion,0.007352941176470588
rev-ol-ut-io-n,0.007352941176470588
r-ev-ol-u-tion,0.007352941176470588
re-vol-u-ti-on,0.007352941176470588
r-ev-ol-uti-on,0.007352941176470588
rev-ol-u-ti-on,0.007352941176470588
rev-o-lut-io-n,0.007352941176470588
rev-o-lu-ti-on,0.007352941176470588

};
\legend{\uni~~, \stok}
\end{axis}
\end{tikzpicture}
\label{fig:seg_hist_fine_appendix}
\end{subfigure}

\caption{Histogram comparison of segmentation probabilities for two sampling schemes (\uni vs.\ \stok).}
\label{app:fig:seg_histograms_appendix}

\end{figure*}

\FloatBarrier

\newpage

\newpage
\section{Additional Experiments and Ablations} \label{app:experiments}

\subsection{Additional Results for Stochastic Pre-Training} \label{app:pretraining}

\subsubsection{Language Modelling and General Benchmark Performance}

We report perplexity $\mathrm{PPL}(x_{1:T})=\exp\!\left(-\frac{1}{T}\sum_{t=1}^T \log p(x_t \mid x_{<t})\right)$ on a validation subset of OpenWebText for our pre-trained \tinyllm models in \Cref{tab:tinyllm_perplexity}. Stochastic pre-training leads to higher perplexity on \emph{canonical} validation data. This is consistent with the training objective: during stochastic pre-training, the model is trained to also predict \emph{non-canonical} token sequences. At the same time, perplexity is substantially reduced for non-canonically tokenised validation data.
\begin{table}[h]
\centering
\caption{Perplexity on an OpenWebText validation subset for canonically and non-canonically tokenised inputs.}
\label{tab:tinyllm_perplexity}
\begin{tabular}{lccc}
\toprule
$\alpha_{\text{pre}}$ & canonical & $\alpha_{\text{eval}}=0.1$ & $\alpha_{\text{eval}}=0.5$ \\
\midrule
0.0 & 29.77 & 98.08 & 474.72 \\
0.5 & 53.06 & 43.89 & 30.28 \\
\bottomrule
\end{tabular}
\end{table}

In addition, we report the multiple-choice accuracy of the \tinyllm models on standard benchmark datasets in \Cref{tab:tinyllm_general_benchmarks}. Increasing $\alpha_{\text{pre}}$ slightly worsens the fit to the canonical distribution, but improves robustness to non-canonical tokenisations. Importantly, during stochastic pre-training, the model is trained to predict non-canonical token sequences. This can reduce accuracy in standard MCQ evaluations, where we compare the log-likelihood of canonically tokenised answer continuations. In subsequent fine-tuning stages, however, we can perturb the input while keeping the prediction targets canonical. This encourages robustness to perturbed input tokenisations without requiring the model to generate non-canonical outputs.

\begin{table}[h]
\centering
\caption{MCQ accuracy of pre-trained \tinyllm models on standard benchmarks under canonical and non-canonical tokenisation at evaluation time.}
\label{tab:tinyllm_general_benchmarks}
\begin{tabular}{lcccccccc}
\toprule
& \multicolumn{2}{c}{\textsc{ARC-E}} & \multicolumn{2}{c}{\textsc{Blimp}} & \multicolumn{2}{c}{\textsc{COPA}} & \multicolumn{2}{c}{\textsc{HellaSwag}} \\
\cmidrule(lr){2-3} \cmidrule(lr){4-5} \cmidrule(lr){6-7} \cmidrule(lr){8-9}
$\alpha_{\text{pre}}$ & canon. & $\alpha_{\text{eval}}=0.5$ & canon. & $\alpha_{\text{eval}}=0.5$ & canon. & $\alpha_{\text{eval}}=0.5$ & canon. & $\alpha_{\text{eval}}=0.5$ \\
\midrule
0.0 & 0.3780 & 0.2720 & 0.8480 & 0.6900 & 0.6600 & 0.4900 & 0.3160 & 0.2740 \\
0.5 & 0.3300 & 0.3120 & 0.8300 & 0.8040 & 0.6100 & 0.5900 & 0.2960 & 0.3060 \\
\bottomrule
\end{tabular}
\end{table}

\subsubsection{Evaluation on TokSuite}
Lastly, we evaluate the \tinyllm models on \textsc{TokSuite} \cite{altintacs2025toksuite}, which probes a variety of tokenisation-sensitive tasks. Under \emph{canonical} evaluation, stochastic pre-training with $\alpha_{\text{pre}}=0.5$ leads to a small drop in overall accuracy compared to canonical pre-training, from $0.3746$ to $0.3623$. For individual subsets, the largest decreases are observed for \textsc{contractions} (from $0.5854$ to $0.4146$), \textsc{abbreviations} (from $0.3784$ to $0.2703$), \textsc{web search queries} (from $0.3514$ to $0.2432$), \textsc{similar words} (from $0.5405$ to $0.4595$), and \textsc{colloquial variants} (from $0.4250$ to $0.3500$). However, we also observe improvements on several subsets, including \textsc{spelled-out text} (from $0.2667$ to $0.4667$), \textsc{historical spelling} (from $0.3500$ to $0.5000$), \textsc{keyboard proximity errors} (from $0.2927$ to $0.4146$), and \textsc{date formats} (from $0.1667$ to $0.2778$).

Under \emph{stochastic} evaluation with \textsc{StochasTok} perturbations at $\alpha_{\text{eval}}=0.5$, however, stochastic pre-training improves overall accuracy from $0.2974$ to $0.3535$. The largest gains are observed for \textsc{character substitution} (from $0.2500$ to $0.4375$), \textsc{historical spelling} (from $0.3000$ to $0.4500$), \textsc{OCR errors} (from $0.2381$ to $0.3810$), \textsc{orthographic errors} (from $0.3250$ to $0.4375$), \textsc{spelled-out text} (from $0.4000$ to $0.5333$), and \textsc{spaced styling} (from $0.1000$ to $0.2250$). We still observe decreases on a few subsets, such as \textsc{colloquial variants} (from $0.4250$ to $0.3250$), \textsc{abbreviations} (from $0.3243$ to $0.2973$), and \textsc{scripted text} (from $0.2750$ to $0.2250$). 

Overall, this is consistent with our findings on the other benchmarks: stochastic pre-training slightly worsens the fit to the canonical distribution, but improves robustness to non-canonical tokenisations.

\newpage
\subsection{Additional Results for Stochastic Fine-Tuning}
\label{app:exp-pretraining-finetuning}

\subsubsection{Additional Results for \tinyllm} \label{app:add_results_tinyllm}
\begin{figure*}[ht]
    \centering

    {\textit{Data set}: \langgame \\ \textit{Model}: \tinyllm \par\medskip}

    \setlength{\figureheight}{0.2\linewidth}
    \setlength{\figurewidth}{.25\linewidth}

    \pgfplotsset{
        x tick label style={font=\scriptsize},
        y tick label style={rotate=90, font=\scriptsize},
        ylabel={\small Avg.\ test accuracy $\rightarrow$},
        xlabel={\small $\peval$},
        scale only axis,
        tick align=outside,
        tick pos=left,
        every axis plot/.append style={mark size=3pt},
        axis x line*=bottom,
        axis y line*=left,
        axis line style={draw=none},
        grid style={solid},
    }

    \begin{subfigure}[b]{0.33\textwidth}
        \centering
        \begin{tikzpicture}
\begin{axis}[
  height=\figureheight,
  width=\figurewidth,
  ybar,
  bar width=6pt,
  enlarge x limits=0.1,
  symbolic x coords={0,0.1,0.5,1,3},
  xtick=data,
  xtick style={color=black},
  ytick style={color=black},
  ymajorgrids,
ymin=0.1, ymax=1,
    yticklabel={
      \pgfmathparse{100*\tick}\pgfmathprintnumber[fixed,precision=0]{\pgfmathresult}\%
    },
  grid style={scGrey},
  legend cell align={left},
  legend style={fill opacity=1,legend columns=3,draw opacity=1,text opacity=1,at={(-0.05,1.03)},anchor=south west,draw=none,font=\scriptsize},
]

\addplot+[very thick, draw=black, fill=black, bar shift=-8pt]
coordinates {
  (0,0.411)
  ({0.1},0.3822)
  ({0.5},0.3084)
  (1,0.2803)
  (3,0.2622)
};
\addlegendentry{Canon.}

\addplot+[very thick, draw=scRed!90, fill=scRed!90, bar shift=0pt]
coordinates {
  (0,0.6660)
  ({0.1},0.6616)
  ({0.5},0.6246)
  (1,0.5630)
  (3,0.4388)
};
\addlegendentry{$\pfine=0.1$}

\addplot+[very thick, draw=scRed!60, fill=scRed!60, bar shift=8pt]
coordinates {
  (0,0.5920)
  ({0.1},0.5982)
  ({0.5},0.5854)
  (1,0.5768)
  (3,0.5078)
};
\addlegendentry{$\pfine=0.5$}

\end{axis}
\end{tikzpicture}
        \caption{$\ppre = 0$} \label{fig:app-langgame_tiny-p0}
    \end{subfigure}
    \hfill
    \begin{subfigure}[b]{0.33\textwidth}
        \centering
        \begin{tikzpicture}
\begin{axis}[
  height=\figureheight,
  width=\figurewidth,
  ybar,
  bar width=6pt,
  enlarge x limits=0.1,
  symbolic x coords={0,0.1,0.5,1,3},
  xtick=data,
  xtick style={color=black},
  ytick style={color=black},
  ymajorgrids,
ymin=0.1, ymax=1,
    yticklabel={
      \pgfmathparse{100*\tick}\pgfmathprintnumber[fixed,precision=0]{\pgfmathresult}\%
    },
  grid style={scGrey},
  legend cell align={left},
  legend style={fill opacity=1,legend columns=3,draw opacity=1,text opacity=1,at={(0.05,1.03)},anchor=south west,draw=none,font=\scriptsize},
]

\addplot+[very thick, draw=black, fill=black, bar shift=-8pt]
coordinates {
  (0,0.787)
  ({0.1},0.7629)
  ({0.5},0.6778)
  (1,0.593)
  (3,0.4352)
};
\addlegendentry{Canon.}

\addplot+[very thick, draw=scRed!90, fill=scRed!90, bar shift=0pt]
coordinates {
  (0,0.814)
  ({0.1},0.8172)
  ({0.5},0.8042)
  (1,0.792)
  (3,0.7452)
};
\addlegendentry{$\pfine=0.1$}

\addplot+[very thick, draw=scRed!60, fill=scRed!60, bar shift=8pt]
coordinates {
  (0,0.782)
  ({0.1},0.7852)
  ({0.5},0.7826)
  (1,0.7834)
  (3,0.78)
};
\addlegendentry{$\pfine=0.5$}

\end{axis}
\end{tikzpicture}
        \caption{$\ppre = 0.1$} \label{fig:app-langgame_tiny-p0.1}
    \end{subfigure}
    \hfill
    \begin{subfigure}[b]{0.33\textwidth}
        \centering
        \begin{tikzpicture}
\begin{axis}[
  height=\figureheight,
  width=\figurewidth,
  ybar,
  bar width=6pt,
  enlarge x limits=0.1,
  symbolic x coords={0,0.1,0.5,1,3},
  xtick=data,
  xtick style={color=black},
  ytick style={color=black},
  ymajorgrids,
ymin=0.1, ymax=1,
    yticklabel={
      \pgfmathparse{100*\tick}\pgfmathprintnumber[fixed,precision=0]{\pgfmathresult}\%
    },
  grid style={scGrey},
  legend cell align={left},
  legend style={fill opacity=1,legend columns=3,draw opacity=1,text opacity=1,at={(-0.05,1.03)},anchor=south west,draw=none,font=\scriptsize},
]

\addplot+[very thick, draw=black, fill=black, bar shift=-8pt]
coordinates {
  (0,0.750666666666667)
  ({0.1},0.769533333333333)
  ({0.5},0.729666666666667)
  (1,0.652666666666667)
  (3,0.493333333333333)
};
\addlegendentry{Canon.}

\addplot+[very thick, draw=scRed!90, fill=scRed!90, bar shift=0pt]
coordinates {
  (0,0.844)
  ({0.1},0.8494)
  ({0.5},0.8556)
  (1,0.8518)
  (3,0.8282)
};
\addlegendentry{$\pfine=0.1$}

\addplot+[very thick, fill=scRed!60, draw=scRed!60, bar shift=8pt]
coordinates {
  (0,0.804)
  ({0.1},0.8162)
  ({0.5},0.8396)
  (1,0.8356)
  (3,0.8238)
};
\addlegendentry{$\pfine=0.5$}

\end{axis}
\end{tikzpicture} \label{fig:app-langgame_tiny-p0.5}
        \caption{$\ppre = 0.5$}
    \end{subfigure}

    \caption{Average test accuracy of \tinyllm on \langgame under stochastic tokenisation during fine-tuning, for different levels of stochasticity during pre-training. Stochastic fine-tuning improves robustness to random perturbations during testing across pre-training settings.}
    \label{app:fig:avg_acc_tinyllm_langgame}
\end{figure*}

\begin{figure*}[ht]
    \centering

    {\textit{Data set}: \cute \\ \textit{Model}: \tinyllm \par\medskip}

    \setlength{\figureheight}{0.2\linewidth}
    \setlength{\figurewidth}{.25\linewidth}

    \pgfplotsset{
        x tick label style={font=\scriptsize},
        y tick label style={rotate=90, font=\scriptsize},
        ylabel={\small Avg.\ test accuracy $\rightarrow$},
        xlabel={\small $\peval$},
        scale only axis,
        tick align=outside,
        tick pos=left,
        every axis plot/.append style={mark size=3pt},
        axis x line*=bottom,
        axis y line*=left,
        axis line style={draw=none},
        grid style={solid},
    }

    \begin{subfigure}[b]{0.33\textwidth}
        \centering
        \begin{tikzpicture}
\begin{axis}[
  height=\figureheight,
  width=\figurewidth,
  ybar,
  bar width=6pt,
  enlarge x limits=0.1,
  symbolic x coords={0,0.1,0.5,1,3},
  xtick=data,
  xtick style={color=black},
  ytick style={color=black},
  ymajorgrids,
  ymin=0.1, ymax=1,
  yticklabel={
    \pgfmathparse{100*\tick}\pgfmathprintnumber[fixed,precision=0]{\pgfmathresult}\%
  },
  grid style={scGrey},
  legend cell align={left},
  legend style={fill opacity=1,legend columns=3,draw opacity=1,text opacity=1,at={(0.05,1.03)},anchor=south west,draw=none,font=\scriptsize},
]

\addplot+[very thick, draw=black, fill=black, bar shift=-8pt]
coordinates {
  (0,0.467)
  ({0.1},0.4226)
  ({0.5},0.3046)
  (1,0.2535)
  (3,0.2091)
};
\addlegendentry{Canon.}

\addplot+[very thick, draw=scRed!90, fill=scRed!90, bar shift=0pt]
coordinates {
  (0,0.592)
  ({0.1},0.5924)
  ({0.5},0.5816)
  (1,0.5616)
  (3,0.4786)
};
\addlegendentry{$\pfine=0.1$}

\addplot+[very thick, draw=scRed!60, fill=scRed!60, bar shift=8pt]
coordinates {
  (0,0.62)
  ({0.1},0.6234)
  ({0.5},0.627)
  (1,0.6274)
  (3,0.6182)
};
\addlegendentry{$\pfine=0.5$}

\end{axis}
\end{tikzpicture}
        \caption{$\ppre = 0$}
    \end{subfigure}
    \hfill
    \begin{subfigure}[b]{0.33\textwidth}
        \centering
        \begin{tikzpicture}
\begin{axis}[
  height=\figureheight,
  width=\figurewidth,
  ybar,
  bar width=6pt,
  enlarge x limits=0.1,
  symbolic x coords={0,0.1,0.5,1,3},
  xtick=data,
  xtick style={color=black},
  ytick style={color=black},
  ymajorgrids,
  ymin=0.1, ymax=1,
  yticklabel={
    \pgfmathparse{100*\tick}\pgfmathprintnumber[fixed,precision=0]{\pgfmathresult}\%
  },
  grid style={scGrey},
  legend cell align={left},
  legend style={fill opacity=1,legend columns=3,draw opacity=1,text opacity=1,at={(0.05,1.03)},anchor=south west,draw=none,font=\scriptsize},
]

\addplot+[very thick, draw=black, fill=black, bar shift=-8pt]
coordinates {
  (0,0.561)
  ({0.1},0.5472)
  ({0.5},0.4779)
  (1,0.407)
  (3,0.2865)
};
\addlegendentry{Canon.}

\addplot+[very thick, draw=scRed!90, fill=scRed!90, bar shift=0pt]
coordinates {
  (0,0.736)
  ({0.1},0.7372)
  ({0.5},0.7306)
  (1,0.7106)
  (3,0.6612)
};
\addlegendentry{$\pfine=0.1$}

\addplot+[very thick, draw=scRed!60, fill=scRed!60, bar shift=8pt]
coordinates {
  (0,0.79)
  ({0.1},0.7922)
  ({0.5},0.7926)
  (1,0.7934)
  (3,0.7878)
};
\addlegendentry{$\pfine=0.5$}

\end{axis}
\end{tikzpicture}
        \caption{$\ppre = 0.1$}
    \end{subfigure}
    \hfill
    \begin{subfigure}[b]{0.33\textwidth}
        \centering
        \begin{tikzpicture}
\begin{axis}[
  height=\figureheight,
  width=\figurewidth,
  ybar,
  bar width=6pt,
  enlarge x limits=0.1,
  symbolic x coords={0,0.1,0.5,1,3},
  xtick=data,
  xtick style={color=black},
  ytick style={color=black},
  ymajorgrids,
  ymin=0.1, ymax=1,
  yticklabel={
    \pgfmathparse{100*\tick}\pgfmathprintnumber[fixed,precision=0]{\pgfmathresult}\%
  },
  grid style={scGrey},
  legend cell align={left},
  legend style={fill opacity=1,legend columns=3,draw opacity=1,text opacity=1,at={(0.05,1.03)},anchor=south west,draw=none,font=\scriptsize},
]

\addplot+[very thick, draw=black, fill=black, bar shift=-8pt]
coordinates {
  (0,0.788)
  ({0.1},0.718333333333333)
  ({0.5},0.594266666666667)
  (1,0.500666666666667)
  (3,0.3468)
};
\addlegendentry{Canon.}

\addplot+[very thick, draw=scRed!90, fill=scRed!90, bar shift=0pt]
coordinates {
  (0,0.828)
  ({0.1},0.8256)
  ({0.5},0.8186)
  (1,0.8046)
  (3,0.7802)
};
\addlegendentry{$\pfine=0.1$}

\addplot+[very thick, draw=scRed!60, fill=scRed!60, bar shift=8pt]
coordinates {
  (0,0.862)
  ({0.1},0.861)
  ({0.5},0.8506)
  (1,0.8492)
  (3,0.8418)
};
\addlegendentry{$\pfine=0.5$}

\end{axis}
\end{tikzpicture}
        \caption{$\ppre = 0.5$}
    \end{subfigure}

    \caption{Average test accuracy of \tinyllm on \cute under stochastic tokenisation during fine-tuning, for different levels of stochasticity during pre-training. Stochastic fine-tuning improves robustness to random perturbations during testing across pre-training settings.}
    \label{app:fig:avg_acc_tinyllm_cute}
\end{figure*}

\begin{figure}[h!]
    \centering
    { \textit{Data set}: \langgame \\ \textit{Model}: \tinyllm}\par\medskip

    \begin{subfigure}{.85\textwidth}
      \appheader{Pre-training perturbation: $\alpha_{\mathrm{pre}}=0$}
      \resizebox{1\linewidth}{!}{%
      \pgfplotsset{
  colormap={scScheme}{
    color(0pt)=(white)
color(1pt)=(scCyan!60)
  }
}

\begin{tikzpicture}
\begin{groupplot}[
  group style={
    group size=3 by 1,
    horizontal sep=.5cm,
  },
  width=0.33\textwidth,
  tick label style={font=\small},   colormap name=scScheme,
  point meta min=0,
  point meta max=1,
  x grid style={gray!30},
  y grid style={gray!30},
  xlabel={$\pfine$},
  xmin=-0.5, xmax=3.5,
  ymin=-0.5, ymax=4.5,
  xtick={0,...,3},
  xticklabels={-, 0.1, 0.5, 1.0},
  ytick={0,...,4},
  yticklabels={
    Canonical,
    \uni,
    \stok,
    \stokuni,
    \unik,
  },
  y dir=reverse,
  tick align=outside,
  tick pos=left,
  every axis label/.append style={font=\small},
  every node near coord/.append style={
    font=\footnotesize, color=black, anchor=center,
    /pgf/number format/.cd, fixed, fixed zerofill
  },
]

\nextgroupplot[
  title={Canonical tokenisation},
]
\addplot [
  matrix plot,
  point meta=explicit,
  mesh/cols=4,
nodes near coords={\printmynumber\pgfplotspointmeta},
]
table [meta=Value] {
x y Value
0 0 0.575
1 0 nan
2 0 nan
3 0 nan
0 1 0.468
1 1 nan
2 1 nan
3 1 nan
0 2 nan
1 2 0.666
2 2 0.592
3 2 0.474
0 3 nan
1 3 0.676
2 3 0.566
3 3 0.496
0 4 nan
1 4 0.656
2 4 0.698
3 4 0.596
};

\nextgroupplot[
  title={Adversarial tokenisation \\ $v_0$ canonical},
  title style={align=center},
  colorbar style={
    ylabel={},
    y dir=normal,
  },
  yticklabels={
},
  y dir=reverse,
]
\addplot [
  matrix plot,
  point meta=explicit,
  mesh/cols=4,
nodes near coords={\printmynumber\pgfplotspointmeta},
]
table [meta=Value] {
x y Value
0 0 0.079
1 0 nan
2 0 nan
3 0 nan
0 1 0.094
1 1 nan
2 1 nan
3 1 nan
0 2 nan
1 2 0.018
2 2 0.088
3 2 0.134
0 3 nan
1 3 0.004
2 3 0.066
3 3 0.098
0 4 nan
1 4 0.004
2 4 0.044
3 4 0.098
};

\nextgroupplot[
  title={Adversarial tokenisation \\ $v_0$ random},
  title style={align=center},
colorbar style={
    ylabel={},
    y dir=normal,
  },
  yticklabels={
},
  y dir=reverse,
]
\addplot [
  matrix plot,
  point meta=explicit,
  mesh/cols=4,
nodes near coords={\printmynumber\pgfplotspointmeta},
]
table [meta=Value] {
x y Value
0 0 0.000
1 0 nan
2 0 nan
3 0 nan
0 1 0.058
1 1 nan
2 1 nan
3 1 nan
0 2 nan
1 2 0.000
2 2 0.010
3 2 0.028
0 3 nan
1 3 0.000
2 3 0.012
3 3 0.044
0 4 nan
1 4 0.000
2 4 0.012
3 4 0.026
};

\end{groupplot}
\end{tikzpicture}
    }
    \end{subfigure}\par\vspace{3mm}

    \begin{subfigure}{.85\textwidth}
      \appheader{Pre-training perturbation: $\alpha_{\mathrm{pre}}=0.1$}
      \resizebox{1\linewidth}{!}{%
      \pgfplotsset{
  colormap={scScheme}{
    color(0pt)=(white)
color(1pt)=(scCyan!60)
  }
}

\begin{tikzpicture}
\begin{groupplot}[
  group style={
    group size= 3 by 1,
    group name=rowA,
horizontal sep=0.5cm
  },
  width=0.33\textwidth,
  tick label style={font=\small},   colormap name=scScheme,
  point meta min=0,
  point meta max=1,
  x grid style={gray!30},
  y grid style={gray!30},
  xlabel={$\pfine$},
  xmin=-0.5, xmax=3.5,
  ymin=-0.5, ymax=4.5,
  xtick={0,...,3},
  xticklabels={-, 0.1, 0.5, 1.0},
  ytick={0,...,4},
colorbar style={
    ylabel={},
    y dir=normal,
  },
  yticklabels={
    Canonical,
    \uni,
    \stok,
    \stokuni,
    \unik,
  },
  y dir=reverse,
  tick align=outside,
  tick pos=left,
  every axis label/.append style={font=\small},
  every node near coord/.append style={
    font=\footnotesize, color=black, anchor=center,
    /pgf/number format/.cd, fixed, fixed zerofill
  },
]

\nextgroupplot[
  title={Canonical tokenisation},
]
\addplot [
  matrix plot,
  point meta=explicit,
  mesh/cols=4,
nodes near coords={\printmynumber\pgfplotspointmeta},
]
table [meta=Value] {
x y Value
0 0 0.808
1 0 nan
2 0 nan
3 0 nan
0 1 0.826
1 1 nan
2 1 nan
3 1 nan
0 2 nan
1 2 0.812
2 2 0.782
3 2 0.818
0 3 nan
1 3 0.816
2 3 0.786
3 3 0.806
0 4 nan
1 4 0.822
2 4 0.808
3 4 0.842
};

\nextgroupplot[
  title={Adversarial tokenisation \\ $v_0$ canonical},
  title style={align=center},
  colorbar style={
    ylabel={},
    y dir=normal,
  },
  yticklabels={
},
  y dir=reverse,
]
\addplot [
  matrix plot,
  point meta=explicit,
  mesh/cols=4,
nodes near coords={\printmynumber\pgfplotspointmeta},
]
table [meta=Value] {
x y Value
0 0 0.130
1 0 nan
2 0 nan
3 0 nan
0 1 0.524
1 1 nan
2 1 nan
3 1 nan
0 2 nan
1 2 0.170
2 2 0.344
3 2 0.370
0 3 nan
1 3 0.162
2 3 0.364
3 3 0.476
0 4 nan
1 4 0.186
2 4 0.410
3 4 0.564
};

\nextgroupplot[
  title={Adversarial tokenisation \\ $v_0$ random},
  title style={align=center},
colorbar style={
    ylabel={},
    y dir=normal,
  },
  yticklabels={
},
  y dir=reverse,
]
\addplot [
  matrix plot,
  point meta=explicit,
  mesh/cols=4,
nodes near coords={\printmynumber\pgfplotspointmeta},
]
table [meta=Value] {
x y Value
0 0 0.012
1 0 nan
2 0 nan
3 0 nan
0 1 0.474
1 1 nan
2 1 nan
3 1 nan
0 2 nan
1 2 0.042
2 2 0.084
3 2 0.136
0 3 nan
1 3 0.040
2 3 0.212
3 3 0.406
0 4 nan
1 4 0.048
2 4 0.312
3 4 0.506
};

\end{groupplot}

\end{tikzpicture}
    }
    \end{subfigure}\par\vspace{3mm}

    \begin{subfigure}{.85\textwidth}
      \appheader{Pre-training perturbation: $\alpha_{\mathrm{pre}}=0.5$}
      \resizebox{1\linewidth}{!}{%
      \pgfplotsset{
  colormap={scScheme}{
    color(0pt)=(white)
color(1pt)=(scCyan!60)
  }
}

\begin{tikzpicture}
\begin{groupplot}[
  group style={
    group size=3 by 1,
    horizontal sep=.5cm,
  },
  width=0.33\textwidth,
  tick label style={font=\small},   colormap name=scScheme,
  point meta min=0,
  point meta max=1,
  x grid style={gray!30},
  y grid style={gray!30},
  xlabel={$\pfine$},
  xmin=-0.5, xmax=3.5,
  ymin=-0.5, ymax=4.5,
  xtick={0,...,3},
  xticklabels={-, 0.1, 0.5, 1.0},
  ytick={0,...,4},
  yticklabels={
    Canonical,
    \uni,
    \stok,
    \stokuni,
    \unik,
  },
  y dir=reverse,
  tick align=outside,
  tick pos=left,
  every axis label/.append style={font=\small},
  every node near coord/.append style={
    font=\footnotesize, color=black, anchor=center,
    /pgf/number format/.cd, fixed, fixed zerofill
  },
]

\nextgroupplot[
  title={Canonical tokenisation},
]
\addplot [
  matrix plot,
  point meta=explicit,
  mesh/cols=4,
nodes near coords={\printmynumber\pgfplotspointmeta},
]
table [meta=Value] {
x y Value
0 0 0.847
1 0 nan
2 0 nan
3 0 nan
0 1 0.768
1 1 nan
2 1 nan
3 1 nan
0 2 nan
1 2 0.844
2 2 0.804
3 2 0.820
0 3 nan
1 3 0.836
2 3 0.824
3 3 0.806
0 4 nan
1 4 0.846
2 4 0.852
3 4 0.800
};

\nextgroupplot[
  title={Adversarial tokenisation \\ $v_0$ canonical},
  title style={align=center},
  colorbar style={
    ylabel={},
    y dir=normal,
  },
  yticklabels={
},
  y dir=reverse,
]
\addplot [
  matrix plot,
  point meta=explicit,
  mesh/cols=4,
  nodes near coords=\printmynumber\pgfplotspointmeta,
]
table [meta=Value] {
x y Value
0 0 0.299
1 0 nan
2 0 nan
3 0 nan
0 1 0.418
1 1 nan
2 1 nan
3 1 nan
0 2 nan
1 2 0.302
2 2 0.378
3 2 0.430
0 3 nan
1 3 0.288
2 3 0.474
3 3 0.454
0 4 nan
1 4 0.346
2 4 0.504
3 4 0.504
};

\nextgroupplot[
  title={Adversarial tokenisation \\ $v_0$ random},
  title style={align=center},
colorbar style={
    ylabel={},
    y dir=normal,
  },
  yticklabels={
},
  y dir=reverse,
]
\addplot [
  matrix plot,
  point meta=explicit,
  mesh/cols=4,
  nodes near coords=\printmynumber\pgfplotspointmeta,
]
table [meta=Value] {
x y Value
0 0 0.042
1 0 nan
2 0 nan
3 0 nan
0 1 0.406
1 1 nan
2 1 nan
3 1 nan
0 2 nan
1 2 0.142
2 2 0.264
3 2 0.290
0 3 nan
1 3 0.146
2 3 0.424
3 3 0.446
0 4 nan
1 4 0.204
2 4 0.494
3 4 0.526
};

\end{groupplot}
\end{tikzpicture}
    }
    \end{subfigure}

    \caption{Effect of perturbation strength $\ppre$, $\pfine$, and different training schemes on adversarial robustness for the \langgame data set.}
    \label{fig:adv_robustness_tiny_langgame_ablation}
\end{figure}

\newpage

\begin{figure}[h!]
    \centering
{ \textit{Data set}: \cute \\ \textit{Model}: \tinyllm}\par\medskip
    \begin{subfigure}{.85\textwidth}
      \appheader{Pre-training perturbation: $\alpha_{\mathrm{pre}}=0$}
      \resizebox{1\linewidth}{!}{%
      \pgfplotsset{
  colormap={scScheme}{
    color(0pt)=(white)
color(1pt)=(scCyan!60)
  }
}

\begin{tikzpicture}
\begin{groupplot}[
  group style={
    group size=3 by 1,
    horizontal sep=.5cm,
  },
  width=0.33\textwidth,
  tick label style={font=\small},   colormap name=scScheme,
  point meta min=0,
  point meta max=1,
  x grid style={gray!30},
  y grid style={gray!30},
  xlabel={$\pfine$},
  xmin=-0.5, xmax=3.5,
  ymin=-0.5, ymax=4.5,
  xtick={0,...,3},
  xticklabels={-, 0.1, 0.5, 1.0},
  ytick={0,...,4},
  yticklabels={
    Canonical,
    \uni,
    \stok,
    \stokuni,
    \unik,
  },
  y dir=reverse,
  tick align=outside,
  tick pos=left,
  every axis label/.append style={font=\small},
  every node near coord/.append style={
    font=\footnotesize, color=black, anchor=center,
    /pgf/number format/.cd, fixed, fixed zerofill
  },
]

\nextgroupplot[
  title={Canonical tokenisation},
]
\addplot [
  matrix plot,
  point meta=explicit,
  mesh/cols=4,
nodes near coords={\printmynumber\pgfplotspointmeta},
]
table [meta=Value] {
x y Value
0 0 0.616
1 0 nan
2 0 nan
3 0 nan
0 1 0.681
1 1 nan
2 1 nan
3 1 nan
0 2 nan
1 2 0.662
2 2 0.678
3 2 0.713
0 3 nan
1 3 0.646
2 3 0.701
3 3 0.701
0 4 nan
1 4 0.641
2 4 0.717
3 4 0.743
};

\nextgroupplot[
  title={Adversarial tokenisation \\ $v_0$ canonical},
  title style={align=center},
  colorbar style={
    ylabel={},
    y dir=normal,
  },
  yticklabels={
},
  y dir=reverse,
]
\addplot [
  matrix plot,
  point meta=explicit,
  mesh/cols=4,
nodes near coords={\printmynumber\pgfplotspointmeta},
]
table [meta=Value] {
x y Value
0 0 0.070
1 0 nan
2 0 nan
3 0 nan
0 1 0.395
1 1 nan
2 1 nan
3 1 nan
0 2 nan
1 2 0.135
2 2 0.311
3 2 0.392
0 3 nan
1 3 0.142
2 3 0.322
3 3 0.382
0 4 nan
1 4 0.109
2 4 0.319
3 4 0.432
};

\nextgroupplot[
  title={Adversarial tokenisation \\ $v_0$ random},
  title style={align=center},
colorbar style={
    ylabel={},
    y dir=normal,
  },
  yticklabels={
},
  y dir=reverse,
]
\addplot [
  matrix plot,
  point meta=explicit,
  mesh/cols=4,
nodes near coords={\printmynumber\pgfplotspointmeta},
]
table [meta=Value] {
x y Value
0 0 0.015
1 0 nan
2 0 nan
3 0 nan
0 1 0.382
1 1 nan
2 1 nan
3 1 nan
0 2 nan
1 2 0.069
2 2 0.249
3 2 0.334
0 3 nan
1 3 0.062
2 3 0.270
3 3 0.341
0 4 nan
1 4 0.058
2 4 0.266
3 4 0.384
};

\end{groupplot}
\end{tikzpicture}
    }
    \end{subfigure}\par\vspace{3mm}
    \begin{subfigure}{.85\textwidth}
      \appheader{Pre-training perturbation: $\alpha_{\mathrm{pre}}=0.1$}
      \resizebox{1\linewidth}{!}{%
      \pgfplotsset{
  colormap={scScheme}{
    color(0pt)=(white)
color(1pt)=(scCyan!60)
  }
}

\begin{tikzpicture}
\begin{groupplot}[
  group style={
    group size=3 by 1,
    horizontal sep=.5cm,
  },
  width=0.33\textwidth,
  tick label style={font=\small},   colormap name=scScheme,
  point meta min=0,
  point meta max=1,
  x grid style={gray!30},
  y grid style={gray!30},
  xlabel={$\pfine$},
  xmin=-0.5, xmax=3.5,
  ymin=-0.5, ymax=4.5,
  xtick={0,...,3},
  xticklabels={-, 0.1, 0.5, 1.0},
  ytick={0,...,4},
  yticklabels={
    Canonical,
    \uni,
    \stok,
    \stokuni,
    \unik,
  },
  y dir=reverse,
  tick align=outside,
  tick pos=left,
  every axis label/.append style={font=\small},
  every node near coord/.append style={
    font=\footnotesize, color=black, anchor=center,
    /pgf/number format/.cd, fixed, fixed zerofill
  },
]

\nextgroupplot[
  title={Canonical tokenisation},
]
\addplot [
  matrix plot,
  point meta=explicit,
  mesh/cols=4,
nodes near coords={\printmynumber\pgfplotspointmeta},
]
table [meta=Value] {
x y Value
0 0 0.726
1 0 nan
2 0 nan
3 0 nan
0 1 0.790
1 1 nan
2 1 nan
3 1 nan
0 2 nan
1 2 0.743
2 2 0.805
3 2 0.803
0 3 nan
1 3 0.757
2 3 0.810
3 3 0.805
0 4 nan
1 4 0.753
2 4 0.797
3 4 0.822
};

\nextgroupplot[
  title={Adversarial tokenisation \\ $v_0$ canonical},
  title style={align=center},
  colorbar style={
    ylabel={},
    y dir=normal,
  },
  yticklabels={
},
  y dir=reverse,
]
\addplot [
  matrix plot,
  point meta=explicit,
  mesh/cols=4,
nodes near coords={\printmynumber\pgfplotspointmeta},
]
table [meta=Value] {
x y Value
0 0 0.270
1 0 nan
2 0 nan
3 0 nan
0 1 0.592
1 1 nan
2 1 nan
3 1 nan
0 2 nan
1 2 0.357
2 2 0.554
3 2 0.599
0 3 nan
1 3 0.365
2 3 0.570
3 3 0.607
0 4 nan
1 4 0.333
2 4 0.577
3 4 0.615
};

\nextgroupplot[
  title={Adversarial tokenisation \\ $v_0$ random},
  title style={align=center},
colorbar style={
    ylabel={},
    y dir=normal,
  },
  yticklabels={
},
  y dir=reverse,
]
\addplot [
  matrix plot,
  point meta=explicit,
  mesh/cols=4,
nodes near coords={\printmynumber\pgfplotspointmeta},
]
table [meta=Value] {
x y Value
0 0 0.085
1 0 nan
2 0 nan
3 0 nan
0 1 0.571
1 1 nan
2 1 nan
3 1 nan
0 2 nan
1 2 0.230
2 2 0.482
3 2 0.552
0 3 nan
1 3 0.254
2 3 0.521
3 3 0.574
0 4 nan
1 4 0.208
2 4 0.527
3 4 0.575
};

\end{groupplot}
\end{tikzpicture}
    }
    \end{subfigure}\par\vspace{3mm}

    \begin{subfigure}{.85\textwidth}
      \appheader{Pre-training perturbation: $\alpha_{\mathrm{pre}}=0.5$}
      \resizebox{1\linewidth}{!}{%
      \pgfplotsset{
  colormap={scScheme}{
    color(0pt)=(white)
color(1pt)=(scCyan!60)
  }
}

\begin{tikzpicture}
\begin{groupplot}[
  group style={
    group size=3 by 1,
    horizontal sep=.5cm,
  },
  width=0.33\textwidth,
  tick label style={font=\small},   colormap name=scScheme,
  point meta min=0,
  point meta max=1,
  x grid style={gray!30},
  y grid style={gray!30},
  xlabel={$\pfine$},
  xmin=-0.5, xmax=3.5,
  ymin=-0.5, ymax=4.5,
  xtick={0,...,3},
  xticklabels={-, 0.1, 0.5, 1.0},
  ytick={0,...,4},
  yticklabels={
    Canonical,
    \uni,
    \stok,
    \stokuni,
    \unik,
  },
  y dir=reverse,
  tick align=outside,
  tick pos=left,
  every axis label/.append style={font=\small},
  every node near coord/.append style={
    font=\footnotesize, color=black, anchor=center,
    /pgf/number format/.cd, fixed, fixed zerofill
  },
]

\nextgroupplot[
  title={Canonical tokenisation},
]
\addplot [
  matrix plot,
  point meta=explicit,
  mesh/cols=4,
nodes near coords={\printmynumber\pgfplotspointmeta},
]
table [meta=Value] {
x y Value
0 0 0.791
1 0 nan
2 0 nan
3 0 nan
0 1 0.834
1 1 nan
2 1 nan
3 1 nan
0 2 nan
1 2 0.834
2 2 0.859
3 2 0.865
0 3 nan
1 3 0.806
2 3 0.864
3 3 0.864
0 4 nan
1 4 0.818
2 4 0.847
3 4 0.861
};

\nextgroupplot[
  title={Adversarial tokenisation \\ $v_0$ canonical},
  title style={align=center},
  colorbar style={
    ylabel={},
    y dir=normal,
  },
  yticklabels={
},
  y dir=reverse,
]
\addplot [
  matrix plot,
  point meta=explicit,
  mesh/cols=4,
nodes near coords={\printmynumber\pgfplotspointmeta},
]
table [meta=Value] {
x y Value
0 0 0.211
1 0 nan
2 0 nan
3 0 nan
0 1 0.647
1 1 nan
2 1 nan
3 1 nan
0 2 nan
1 2 0.522
2 2 0.669
3 2 0.675
0 3 nan
1 3 0.531
2 3 0.681
3 3 0.675
0 4 nan
1 4 0.488
2 4 0.669
3 4 0.706
};

\nextgroupplot[
  title={Adversarial tokenisation \\ $v_0$ random},
  title style={align=center},
colorbar style={
    ylabel={},
    y dir=normal,
  },
  yticklabels={
},
  y dir=reverse,
]
\addplot [
  matrix plot,
  point meta=explicit,
  mesh/cols=4,
nodes near coords={\printmynumber\pgfplotspointmeta},
]
table [meta=Value] {
x y Value
0 0 0.094
1 0 nan
2 0 nan
3 0 nan
0 1 0.641
1 1 nan
2 1 nan
3 1 nan
0 2 nan
1 2 0.440
2 2 0.615
3 2 0.645
0 3 nan
1 3 0.451
2 3 0.645
3 3 0.650
0 4 nan
1 4 0.408
2 4 0.630
3 4 0.672
};

\end{groupplot}
\end{tikzpicture}
    }
    \end{subfigure}

    \caption{Effect of perturbation strength $\ppre$, $\pfine$, and different training schemes on adversarial robustness for the \cute data set.}
    \label{fig:adv_robustness}
\end{figure}

\FloatBarrier
\subsubsection{Additional Results for \gptxl} \label{app:exp_diff_models}

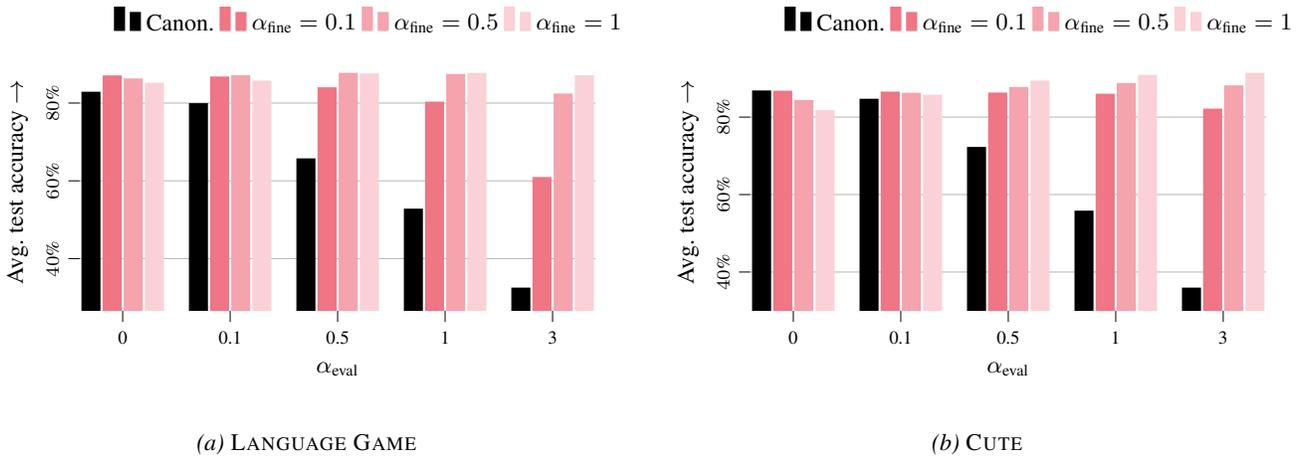
\begin{figure}[h]
    \centering
    \setlength{\figureheight}{0.2\linewidth}
    \setlength{\figurewidth}{.4\linewidth}

    \pgfplotsset{
        x tick label style={font=\scriptsize},
        y tick label style={rotate=90, font=\scriptsize},
        ylabel={\small Avg.\ test accuracy $\rightarrow$},
        xlabel={\small $\peval$},
        scale only axis,
        tick align=outside,
        tick pos=left,
        every axis plot/.append style={mark size=3pt},
        axis x line*=bottom,
        axis y line*=left,
        axis line style={draw=none},
        grid style={solid},
    }

    \begin{subfigure}[b]{0.48\textwidth}
        \centering
        \begin{tikzpicture}
\begin{axis}[
  height=\figureheight,
  width=\figurewidth,
  ybar,
  bar width=6pt,
  enlarge x limits=0.1,
  symbolic x coords={0,0.1,0.5,1,3},
  xtick=data,
  xtick style={color=black},
  ytick style={color=black},
  ymajorgrids,
  yticklabel={
    \pgfmathparse{100*\tick}\pgfmathprintnumber[fixed,precision=0]{\pgfmathresult}\%
  },
  grid style={scGrey},
  legend cell align={left},
  legend style={fill opacity=1,legend columns=4,draw opacity=1,text opacity=1,at={(0.05,1.03)},anchor=south west,draw=none,font=\footnotesize},
]

\addplot+[very thick, draw=black, fill=black, bar shift=-12pt]
coordinates {
  (0,0.825)
  ({0.1},0.7956)
  ({0.5},0.65365)
  (1,0.52435)
  (3,0.32135)
};
\addlegendentry{Canon.}

\addplot+[very thick, draw=scRed!90, fill=scRed!90, bar shift=-4pt]
coordinates {
  (0,0.867)
  ({0.1},0.8642)
  ({0.5},0.8365)
  (1,0.7996)
  (3,0.6057)
};
\addlegendentry{$\pfine=0.1$}

\addplot+[very thick, draw=scRed!60, fill=scRed!60, bar shift=4pt]
coordinates {
  (0,0.859)
  ({0.1},0.8674)
  ({0.5},0.8736)
  (1,0.8708)
  (3,0.8205)
};
\addlegendentry{$\pfine=0.5$}

\addplot+[very thick, draw=scRed!30, fill=scRed!30, bar shift=12pt]
coordinates {
  (0,0.848)
  ({0.1},0.8536)
  ({0.5},0.872)
  (1,0.8733)
  (3,0.8673)
};
\addlegendentry{$\pfine=1$}

\end{axis}
\end{tikzpicture}
        \caption{\langgame}
        \label{fig:app-gpt2xl-langgame}
    \end{subfigure}
    \hfill
    \begin{subfigure}[b]{0.48\textwidth}
        \centering
        \begin{tikzpicture}
\begin{axis}[
  height=\figureheight,
  width=\figurewidth,
  ybar,
  bar width=6pt,
  enlarge x limits=0.1,
  symbolic x coords={0,0.1,0.5,1,3},
  xtick=data,
  xtick style={color=black},
  ytick style={color=black},
  ymajorgrids,
  yticklabel={
    \pgfmathparse{100*\tick}\pgfmathprintnumber[fixed,precision=0]{\pgfmathresult}\%
  },
  grid style={scGrey},
  legend cell align={left},
  legend style={fill opacity=1,legend columns=4,draw opacity=1,text opacity=1,at={(0.05,1.03)},anchor=south west,draw=none,font=\footnotesize},
]

\addplot+[very thick, draw=black, fill=black, bar shift=-12pt]
coordinates {
  (0,0.864666666666667)
  ({0.1},0.843266666666667)
  ({0.5},0.718866666666667)
  (1,0.554333333333333)
  (3,0.355666666666667)
};
\addlegendentry{Canon.}

\addplot+[very thick, draw=scRed!90, fill=scRed!90, bar shift=-4pt]
coordinates {
  (0,0.864)
  ({0.1},0.8618)
  ({0.5},0.8594)
  (1,0.8562)
  (3,0.8178)
};
\addlegendentry{$\pfine=0.1$}

\addplot+[very thick, draw=scRed!60, fill=scRed!60, bar shift=4pt]
coordinates {
  (0,0.84)
  ({0.1},0.8588)
  ({0.5},0.8736)
  (1,0.884)
  (3,0.878)
};
\addlegendentry{$\pfine=0.5$}

\addplot+[very thick, draw=scRed!30, fill=scRed!30, bar shift=12pt]
coordinates {
  (0,0.814)
  ({0.1},0.8536)
  ({0.5},0.89)
  (1,0.9048)
  (3,0.9102)
};
\addlegendentry{$\pfine=1$}

\end{axis}
\end{tikzpicture}
        \caption{\cute}
        \label{fig:app-gpt2xl-cute}
    \end{subfigure}

    \caption{Average accuracy of \gptxl for stochastic tokenisation during evaluation and fine-tuning on \langgame and \cute. Stochastic fine-tuning improves robustness to random perturbations during evaluation.}
    \label{fig:app_finetuning_robustness_gpt2xl}
\end{figure}

\begin{figure}[h!]
    \centering
    \begin{subfigure}{.85\textwidth}\centering
    \appheader{Data set: \langgame}
      \pgfplotsset{
  colormap={scScheme}{
    color(0pt)=(white)
color(1pt)=(scCyan!60)
  }
}

\begin{tikzpicture}
\begin{groupplot}[
  group style={
    group size=3 by 1,
    horizontal sep=.5cm,
  },
  width=0.33\textwidth,
  tick label style={font=\small},   colormap name=scScheme,
  point meta min=0,
  point meta max=1,
  x grid style={gray!30},
  y grid style={gray!30},
  xlabel={$\pfine$},
  xmin=-0.5, xmax=3.5,
  ymin=-0.5, ymax=4.5,
  xtick={0,...,3},
  xticklabels={-, 0.1, 0.5, 1.0},
  ytick={0,...,4},
  yticklabels={
    Canonical,
    \uni,
    \stok,
    \stokuni,
    \unik,
  },
  y dir=reverse,
  tick align=outside,
  tick pos=left,
  every axis label/.append style={font=\small},
  every node near coord/.append style={
    font=\footnotesize, color=black, anchor=center,
    /pgf/number format/.cd, fixed, fixed zerofill
  },
]

\nextgroupplot[
  title={Canonical tokenisation},
]
\addplot [
  matrix plot,
  point meta=explicit,
  mesh/cols=4,
nodes near coords={\printmynumber\pgfplotspointmeta},
]
table [meta=Value] {
x y Value
0 0 0.831
1 0 nan
2 0 nan
3 0 nan
0 1 0.498
1 1 nan
2 1 nan
3 1 nan
0 2 nan
1 2 0.870
2 2 0.865
3 2 0.851
0 3 nan
1 3 0.864
2 3 0.853
3 3 0.855
0 4 nan
1 4 0.866
2 4 0.870
3 4 0.832
};

\nextgroupplot[
  title={Adversarial tokenisation \\ $v_0$ canonical},
  title style={align=center},
  colorbar style={
    ylabel={},
    y dir=normal,
  },
  yticklabels={
},
  y dir=reverse,
]
\addplot [
  matrix plot,
  point meta=explicit,
  mesh/cols=4,
nodes near coords={\printmynumber\pgfplotspointmeta},
]
table [meta=Value] {
x y Value
0 0 0.073
1 0 nan
2 0 nan
3 0 nan
0 1 0.036
1 1 nan
2 1 nan
3 1 nan
0 2 nan
1 2 0.029
2 2 0.176
3 2 0.299
0 3 nan
1 3 0.019
2 3 0.203
3 3 0.319
0 4 nan
1 4 0.026
2 4 0.220
3 4 0.330
};

\end{groupplot}
\end{tikzpicture}
    \end{subfigure}\par\vspace{3mm}
    \begin{subfigure}{.85\textwidth}\centering
    \appheader{Data set: \cute}
      \pgfplotsset{
  colormap={scScheme}{
    color(0pt)=(white)
color(1pt)=(scCyan!60)
  }
}

\begin{tikzpicture}
\begin{groupplot}[
  group style={
    group size=3 by 1,
    horizontal sep=.5cm,
  },
  width=0.33\textwidth,
  tick label style={font=\small},   colormap name=scScheme,
  point meta min=0,
  point meta max=1,
  x grid style={gray!30},
  y grid style={gray!30},
  xlabel={$\pfine$},
  xmin=-0.5, xmax=3.5,
  ymin=-0.5, ymax=4.5,
  xtick={0,...,3},
  xticklabels={-, 0.1, 0.5, 1.0},
  ytick={0,...,4},
  yticklabels={
    Canonical,
    \uni,
    \stok,
    \stokuni,
    \unik,
  },
  y dir=reverse,
  tick align=outside,
  tick pos=left,
  every axis label/.append style={font=\small},
  every node near coord/.append style={
    font=\footnotesize, color=black, anchor=center,
    /pgf/number format/.cd, fixed, fixed zerofill
  },
]

\nextgroupplot[
  title={Canonical tokenisation},
]
\addplot [
  matrix plot,
  point meta=explicit,
  mesh/cols=4,
nodes near coords={\printmynumber\pgfplotspointmeta},
]
table [meta=Value] {
x y Value
0 0 0.846
1 0 nan
2 0 nan
3 0 nan
0 1 0.775
1 1 nan
2 1 nan
3 1 nan
0 2 nan
1 2 0.843
2 2 0.837
3 2 0.806
0 3 nan
1 3 0.846
2 3 0.819
3 3 0.810
0 4 nan
1 4 0.845
2 4 0.848
3 4 0.834
};

\nextgroupplot[
  title={Adversarial tokenisation \\ $v_0$ canonical},
  title style={align=center},
  colorbar style={
    ylabel={},
    y dir=normal,
  },
  yticklabels={
},
  y dir=reverse,
]
\addplot [
  matrix plot,
  point meta=explicit,
  mesh/cols=4,
nodes near coords={\printmynumber\pgfplotspointmeta},
]
table [meta=Value] {
x y Value
0 0 0.101
1 0 nan
2 0 nan
3 0 nan
0 1 0.440
1 1 nan
2 1 nan
3 1 nan
0 2 nan
1 2 0.373
2 2 0.503
3 2 0.483
0 3 nan
1 3 0.365
2 3 0.517
3 3 0.506
0 4 nan
1 4 0.280
2 4 0.539
3 4 0.570
};

\end{groupplot}
\end{tikzpicture}
    \end{subfigure}\par\vspace{3mm}
\caption{Accuracy of \gptxl under canonical and adversarial tokenisation on \langgame and \cute. Stochastic fine-tuning consistently improves robustness to perturbations during evaluation, with highest gains for methods that address support and sampling bias (\stokuni, \unik.}
\end{figure}
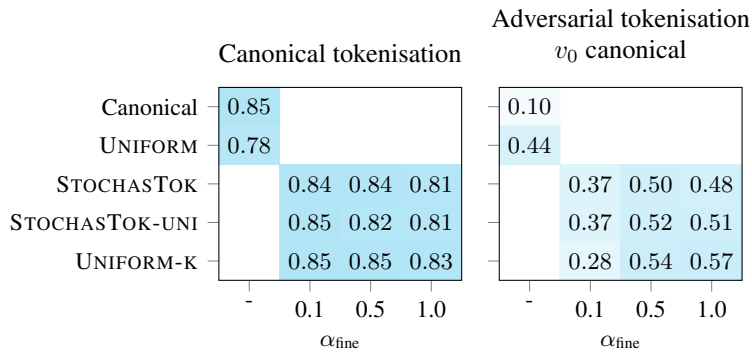

\newpage
\FloatBarrier

\subsubsection{Ablation on Tokenisation Support}\label{app:tokenisation_support}

\stok induces a distribution over tokenisations that is biased and with incomplete support. We showed that addressing both issues with \stokuni results in increased robustness to adversarial tokenisations. In this ablation, we study which of the two extensions (uniformity or support) is the main driver of the improvements. Thus, we construct two additional stochastic tokenisation schemes. On the one hand, we use the same \stok algorithm as in \Cref{alg:stochastok}, but build the dictionary of possible expansions with 2-way and 3-way merges. This still results in a biased distribution, but increases the support of possible tokenisations. On the other hand, we restrict \stokuni to sample only tokenisations that are reachable with \stok, \ie achievable via 2-way merges. 

We test the robustness to adversarial tokenisation attacks for three different attack settings. Each attack consists of $10$ iterations. First, we vary the initial tokenisation $\bv_0$: canonical $\bv_0=\tau(\bx)$, or uniform $\bv_0\sim\distUnif(\mcT_\mcV(\bx))$. Second, we increase the neighbourhood of tokenisations for the greedy search in each iteration step by increasing the edit distance $d_k=\{2,4\}$. 
\Cref{fig:app:tokenisation-support-barplot-langgame} shows the results for 10 LoRA fine-tuned \llamaoneb models on \langgame with our stochastic tokenisation schemes for different expansion proportions $\pfine$. 
The first case ($d_k=2,\ \bv_0=\tau(\bx)$) corresponds to the easiest setting. Within the tokenisation neighbourhood with edit distance $d_k=2$, all methods perform equally well. However, when increasing the neighbourhood to $d_k=4$ \textit{or} starting from a random initial tokenisation, methods with increased support exhibit higher robustness. The same trend, albeit to a smaller extend is visible for the \cute dataset in \Cref{fig:app:tokenisation-support-barplot-cute}. 

\begin{figure}[h]
\centering

\begin{subfigure}{1\linewidth}
    \centering
    \input{plotting/ablation/support_langgame}
    \caption{\langgame dataset}  \label{fig:app:tokenisation-support-barplot-langgame}
\end{subfigure}
\hfill \vspace*{1cm}
\begin{subfigure}{1\linewidth}
    \centering
    \input{plotting/ablation/support_cute}
    \caption{\cute dataset} \label{fig:app:tokenisation-support-barplot-cute}
\end{subfigure}

\caption{Adversarial accuracy (\%) of 10 LoRA fine-tuned \llamaoneb models versus fine-tuning stochasticity $\pfine$ for three attack settings (defined by neighbourhood radius $d_k$ and initial tokenisation $\bv_0$). Methods with larger support over $\mcT_\mcV(\bx)$ (\stok (2- and 3-way merges), \stokuni (all merges)) are more robust. Increasing uniformity on the same support (\stokuni (2-way merges)) yields no significant improvements.}
\label{fig:app:tokenisation-support-barplot}

\end{figure}

\FloatBarrier

\subsubsection{Comparison to BPE-dropout} \label{app:bpedropout}

BPE-dropout \cite{provilkov2020bpe} is a popular stochastic tokenisation method for BPE tokenisers. It introduces stochasticity in the tokenisation process by randomly skipping merge rules with a specified dropout probability $p_{drop}$.  

However, a direct comparison to our methods is not straightforward: The perturbation strength is controlled indirectly via the dropout rate $p_{\text{drop}}$, and the resulting number of induced splits further depends on the input string and tokeniser's merge structure. In contrast, the methods considered in the main text allow explicit control over the number of induced splits. 

To illustrate this, we apply BPE-dropout to the example string from \Cref{app:sec:splitcounts}, namely \emph{``revolution is a rapid, fundamental transformation of a society's class, state, ethnic or religious structures''}. We sample $50{,}000$ stochastic tokenisations with BPE-dropout using the Llama 3 tokeniser and track the resulting non-canonical segmentations. 
First, \Cref{tab:bpedropout_splits} reports the statistics of the total number of induced splits for different values of $p_{drop}$. The effective perturbation strength varies substantially even for a fixed value of $p_{\text{drop}}$. Second, \Cref{app:fig:seg_histograms_appendix} summarises the distribution over the non-canonical tokenisations of the first token (\emph{``revolution''}). This reflects the findings of \cite{cognetta2024distributional} that BPE-dropout has a strong sampling bias, which may lead to certain non-canonical tokenisations to be underrepresented.

\begin{table}[h]
\centering
\caption{Induced number of splits under BPE-dropout for different dropout rates $p_{\text{drop}}$.}
\label{tab:bpedropout_splits}
\begin{tabular}{lrrrr}
\toprule
$p_{\text{drop}}$ & mean & std & min & max \\
\midrule
0.10 & 0.75  & 0.90 & 0  & 6  \\
0.25 & 3.01  & 1.89 & 0  & 14 \\
0.50 & 22.41 & 4.62 & 6  & 44 \\
0.75 & 53.06 & 6.13 & 29 & 78 \\
\bottomrule
\end{tabular}
\end{table}

\begin{figure}[h]
\appheader{Llama 3}
\centering

\begin{subfigure}[t]{0.48\linewidth}
\centering
\begin{tikzpicture}[baseline=(current bounding box.north)]
\begin{axis}[
  ybar,
  bar width=10pt,
  width=\linewidth,
  height=5cm,
  ymin=0,
  ylabel={Probability (\%)},
  scaled y ticks=false,
  yticklabel={
  \pgfmathparse{100*\tick}%
  \pgfmathprintnumber[fixed,precision=0]{\pgfmathresult}%
  },
  xlabel={},
  symbolic x coords={
    rev-olution,
    re-volution,
  },
  xtick=data,
  xticklabel style={font=\ttfamily\small, rotate=90, anchor=east},
  enlarge x limits=0.3,
  legend style={at={(0.5,1.02)}, anchor=south, legend columns=2},
]
\addplot+[bar shift=-6pt,fill=scRed, draw=scRed] table[
  col sep=comma,
  x=Segmentation,
  y=Frequency] {
Segmentation,Frequency
rev-olution,0.5012787723785166
re-volution,0.49872122762148335

};
\addplot+[bar shift=+6pt,fill=black!60, draw=black!60] table[
  col sep=comma,
  x=Segmentation,
  y=Frequency] {
Segmentation,Frequency
rev-olution,0.8716309948474039
re-volution,0.1283690051525961
};
\legend{\uni~~, \bpedropout}
\end{axis}
\end{tikzpicture}
\label{fig:seg_hist_coarse_appendix_b}
\end{subfigure}
\hfill 
\begin{subfigure}[t]{0.48\linewidth}
\centering
\begin{tikzpicture}[baseline=(current bounding box.north)]
\begin{axis}[
  ybar,
  bar width=8pt,
  width=\linewidth,
  height=5cm,
  ymin=0,
  ylabel={Probability (\%)},
  scaled y ticks=false,
  yticklabel={
  \pgfmathparse{100*\tick}%
  \pgfmathprintnumber[fixed,precision=0]{\pgfmathresult}%
  },
  xlabel={},
  symbolic x coords={
    re-vol-ution,
    re-v-olution,
    r-ev-olution,
    rev-ol-ution,
    rev-olut-ion,
    r-e-volution,
    rev-olu-tion,
  },
  xtick=data,
  xticklabel style={font=\ttfamily\small, rotate=90, anchor=east},
  enlarge x limits=0.18,
  legend style={at={(0.5,1.02)}, anchor=south, legend columns=2},
]
\addplot+[bar shift=-5pt,fill=scRed, draw=scRed] table[
  col sep=comma,
  x=Segmentation,
  y=Frequency] {
Segmentation,Frequency
re-vol-ution,0.13700918964076858
re-v-olution,0.14118629908103592
r-ev-olution,0.13450292397660818
rev-ol-ution,0.14118629908103592
rev-olut-ion,0.14619883040935672
r-e-volution,0.14703425229741018
rev-olu-tion,0.15288220551378445
};
\addplot+[bar shift=+5pt,fill=black!60, draw=black!60] table[
  col sep=comma,
  x=Segmentation,
  y=Frequency] {
Segmentation,Frequency
re-vol-ution,0.6384131801430739
re-v-olution,0.1331021027530891
rev-ol-ution,0.13115109473227834
rev-olut-ion,0.06741816605246044
r-ev-olution,0.019076522870149576
rev-olu-tion,0.010622154779969651
r-e-volution,0.00021677866897897247
};
\legend{\uni~~, \bpedropout}
\end{axis}
\end{tikzpicture}
\label{fig:seg_hist_coarse_appendix}
\end{subfigure}
\par\bigskip
\begin{subfigure}[t]{0.48\linewidth}
\centering
\begin{tikzpicture}[baseline=(current bounding box.north)]
\begin{axis}[
  ybar,
  bar width=2pt,
  width=\linewidth,
  height=5cm,
  ymin=0,
  ylabel={Probability (\%)},
  scaled y ticks=false,
  yticklabel={
  \pgfmathparse{100*\tick}%
  \pgfmathprintnumber[fixed,precision=0]{\pgfmathresult}%
  },
  xlabel={},
  symbolic x coords={
    re-vo-l-ution,
    r-e-vol-ution,
    r-ev-olut-ion,
    re-v-olut-ion,
    re-vol-uti-on,
    rev-ol-u-tion,
    rev-olu-ti-on,
    rev-ol-ut-ion,
    rev-olut-io-n,
    re-vo-lu-tion,
    r-e-v-olution,
    rev-o-lu-tion,
    rev-olut-i-on,
    re-v-ol-ution,
    rev-ol-uti-on,
    re-vol-ut-ion,
    rev-olu-t-ion,
    rev-o-lut-ion,
    re-v-olu-tion,
    r-ev-olu-tion,
    re-vo-lut-ion,
    re-vol-u-tion,
    r-ev-ol-ution,
    rev-o-l-ution,
  },
  xtick=data,
  xticklabel style={font=\ttfamily\small, rotate=90, anchor=east},
  enlarge x limits=0.05,
  legend style={at={(0.5,1.02)}, anchor=south, legend columns=2},
]
\addplot+[bar shift=-1.5pt,fill=scRed, draw=scRed] table[
  col sep=comma,
  x=Segmentation,
  y=Frequency
] {
Segmentation,Frequency
re-vo-l-ution,0.03501855287569573
r-e-vol-ution,0.04962894248608534
r-ev-olut-ion,0.04638218923933209
re-v-olut-ion,0.040584415584415584
re-vol-uti-on,0.03965677179962894
rev-ol-u-tion,0.040352504638218926
rev-olu-ti-on,0.04359925788497217
rev-ol-ut-ion,0.03780148423005566
rev-olut-io-n,0.041280148423005564
re-vo-lu-tion,0.046150278293135436
r-e-v-olution,0.041743970315398886
rev-o-lu-tion,0.041743970315398886
rev-olut-i-on,0.0387291280148423
re-v-ol-ution,0.0364100185528757
rev-ol-uti-on,0.04406307977736549
re-vol-ut-ion,0.03455473098330241
rev-olu-t-ion,0.04638218923933209
rev-o-lut-ion,0.04151205936920223
re-v-olu-tion,0.04406307977736549
r-ev-olu-tion,0.03849721706864564
re-vo-lut-ion,0.04568645640074211
re-vol-u-tion,0.042439703153988866
r-ev-ol-ution,0.03896103896103896
rev-o-l-ution,0.044758812615955476
};
\addplot+[bar shift=+1.5pt,fill=black!60, draw=black!60] table[
  col sep=comma,
  x=Segmentation,
  y=Frequency,
] {
Segmentation,Frequency
re-vo-l-ution,0.5598714170908117
re-vo-lu-tion,0.14920975087061344
re-vo-lut-ion,0.11438521296544334
re-vol-ut-ion,0.04098580230377712
re-v-ol-ution,0.029734797749799088
rev-ol-ut-ion,0.029734797749799088
rev-o-l-ution,0.012590409858023038
re-v-olut-ion,0.008840075006697026
rev-o-lu-tion,0.008572193945888026
r-ev-ol-ution,0.006429145459416019
rev-ol-u-tion,0.006429145459416019
rev-olut-i-on,0.005893383337798017
rev-olu-t-ion,0.005625502276989017
re-vol-u-tion,0.004553978033753014
rev-ol-uti-on,0.0032145727297080095
rev-olut-io-n,0.0032145727297080095
re-vol-uti-on,0.0029466916688990086
rev-o-lut-ion,0.0029466916688990086
rev-olu-ti-on,0.0029466916688990086
re-v-olu-tion,0.0008036431824270024
r-ev-olut-ion,0.0005357621216180017
r-e-v-olution,0.0002678810608090008
r-e-vol-ution,0.0002678810608090008
};
\legend{\uni~~, \bpedropout}
\end{axis}
\end{tikzpicture}
\label{fig:seg_hist_fine_appendix_b}
\end{subfigure}
\hfill 
\begin{subfigure}[t]{0.48\linewidth}
\centering
\begin{tikzpicture}[baseline=(current bounding box.north)]
\begin{axis}[
  ybar,
  bar width=1pt,
  width=\linewidth,
  height=5cm,
  ymin=0,
  ylabel={Probability (\%)},
  scaled y ticks=false,
  yticklabel={
  \pgfmathparse{100*\tick}%
  \pgfmathprintnumber[fixed,precision=0]{\pgfmathresult}%
  },
  xlabel={},
  symbolic x coords={
    rev-o-l-ut-ion,
    r-ev-olu-t-ion,
    r-e-vol-uti-on,
    rev-ol-u-t-ion,
    re-vol-ut-i-on,
    rev-olu-ti-o-n,
    re-vo-l-uti-on,
    r-e-v-olut-ion,
    r-ev-ol-uti-on,
    r-e-v-olu-tion,
    re-vol-ut-io-n,
    re-v-o-l-ution,
    re-v-ol-u-tion,
    r-e-vol-u-tion,
    re-vo-lut-io-n,
    rev-olu-t-io-n,
    re-vo-l-u-tion,
    re-v-olu-ti-on,
    re-vol-u-ti-on,
    rev-ol-u-ti-on,
    re-vo-lut-i-on,
    r-e-vo-lu-tion,
    r-ev-olut-io-n,
    r-e-vo-l-ution,
    rev-o-lu-t-ion,
    re-v-ol-uti-on,
    r-ev-o-lu-tion,
    re-v-olu-t-ion,
    re-vo-lu-ti-on,
    rev-o-lut-i-on,
    re-v-olut-i-on,
    r-ev-o-l-ution,
    rev-o-lu-ti-on,
    rev-o-l-u-tion,
    rev-ol-ut-i-on,
    rev-olut-i-o-n,
    re-vol-u-t-ion,
    rev-o-lut-io-n,
    r-ev-ol-u-tion,
    rev-olu-t-i-on,
    re-vo-l-ut-ion,
    r-ev-olu-ti-on,
    re-vol-uti-o-n,
    r-e-v-ol-ution,
    rev-ol-uti-o-n,
    re-v-o-lu-tion,
    rev-ol-ut-io-n,
    r-ev-olut-i-on,
    r-e-vo-lut-ion,
    r-ev-o-lut-ion,
    r-ev-ol-ut-ion,
    r-e-vol-ut-ion,
    rev-o-l-uti-on,
    re-v-olut-io-n,
    re-vo-lu-t-ion,
    re-v-o-lut-ion,
    re-v-ol-ut-ion,
  },
  xtick=data,
  xticklabel style={font=\ttfamily\tiny, rotate=90, anchor=east},
  enlarge x limits=0.01,
  legend style={at={(0.5,1.02)}, anchor=south, legend columns=2},
  yticklabel style={
  /pgf/number format/.cd,
    fixed,
    precision=1
},
scaled y ticks=false,
]
\addplot+[bar shift=-1pt,fill=scRed, draw=scRed] table[
  col sep=comma,
  x=Segmentation,
  y=Frequency
] {
Segmentation,Frequency
rev-o-l-ut-ion,0.017851829812555786
r-ev-olu-t-ion,0.018149360309431716
r-e-vol-uti-on,0.01894277496776753
rev-ol-u-t-ion,0.017752652980263812
re-vol-ut-i-on,0.015669939502132302
rev-olu-ti-o-n,0.01953783596151939
re-vo-l-uti-on,0.017951006644847764
r-e-v-olut-ion,0.016860061489636022
r-ev-ol-uti-on,0.017752652980263812
r-e-v-olu-tion,0.015471585837548348
re-vol-ut-io-n,0.016562530992760092
re-v-o-l-ution,0.019438659129227414
re-v-ol-u-tion,0.017256768818803926
r-e-vol-u-tion,0.014777348011504512
re-vo-lut-io-n,0.01924030546464346
rev-olu-t-io-n,0.017256768818803926
re-vo-l-u-tion,0.019141128632351484
re-v-olu-ti-on,0.017157591986511952
re-vol-u-ti-on,0.015669939502132302
rev-ol-u-ti-on,0.018248537141723694
re-vo-lut-i-on,0.017256768818803926
r-e-vo-lu-tion,0.01428146385004463
r-ev-olut-io-n,0.01636417732817614
r-e-vo-l-ution,0.017455122483387882
rev-o-lu-t-ion,0.02013289695527125
re-v-ol-uti-on,0.018843598135475554
r-ev-o-lu-tion,0.01517405534067242
re-v-olu-t-ion,0.016165823663592184
re-vo-lu-ti-on,0.019041951800059506
rev-o-lut-i-on,0.016860061489636022
re-v-olut-i-on,0.018050183477139742
r-ev-o-l-ution,0.01606664683130021
rev-o-lu-ti-on,0.014678171179212536
rev-o-l-u-tion,0.01636417732817614
rev-ol-ut-i-on,0.01924030546464346
rev-olut-i-o-n,0.0186452444708916
re-vol-u-t-ion,0.018248537141723694
rev-o-lut-io-n,0.017554299315679856
r-ev-ol-u-tion,0.016959238321927996
rev-olu-t-i-on,0.016860061489636022
re-vo-l-ut-ion,0.017653476147971834
r-ev-olu-ti-on,0.015273232172964396
re-vol-uti-o-n,0.01606664683130021
r-e-v-ol-ution,0.017355945651095904
rev-ol-uti-o-n,0.019736189626103344
re-v-o-lu-tion,0.018843598135475554
rev-ol-ut-io-n,0.017752652980263812
r-ev-olut-i-on,0.017752652980263812
r-e-vo-lut-ion,0.016661707825052066
r-ev-o-lut-ion,0.02211643360111078
r-ev-ol-ut-ion,0.016165823663592184
r-e-vol-ut-ion,0.01894277496776753
rev-o-l-uti-on,0.016860061489636022
re-v-olut-io-n,0.018149360309431716
re-vo-lu-t-ion,0.017752652980263812
re-v-o-lut-ion,0.019934543290687296
re-v-ol-ut-ion,0.018050183477139742

};
\addplot+[bar shift=+1pt,fill=black!60, draw=black!60] table[
  col sep=comma,
  x=Segmentation,
  y=Frequency
] {
Segmentation,Frequency
re-vo-l-ut-ion,0.26255380200860834
re-vo-lu-t-ion,0.22812051649928264
re-v-ol-ut-ion,0.13199426111908177
rev-ol-u-t-ion,0.08177905308464849
re-v-o-l-ution,0.03730272596843615
rev-o-lu-t-ion,0.03443328550932568
re-vo-lu-ti-on,0.02582496413199426
rev-ol-u-ti-on,0.024390243902439025
rev-ol-ut-i-on,0.021520803443328552
rev-o-l-ut-ion,0.018651362984218076
re-vol-u-ti-on,0.015781922525107604
re-vol-u-t-ion,0.01291248206599713
r-ev-o-l-ution,0.010043041606886656
re-vol-ut-io-n,0.010043041606886656
r-ev-ol-ut-ion,0.00860832137733142
re-vol-ut-i-on,0.00860832137733142
rev-ol-ut-io-n,0.00860832137733142
r-e-vo-l-ution,0.007173601147776184
re-v-olu-t-ion,0.007173601147776184
re-v-ol-u-tion,0.005738880918220947
re-v-olut-i-on,0.005738880918220947
r-ev-o-lut-ion,0.0028694404591104736
re-v-o-lu-tion,0.0028694404591104736
re-v-o-lut-ion,0.0028694404591104736
re-v-olut-io-n,0.0028694404591104736
re-vo-l-u-tion,0.0028694404591104736
re-vo-lut-io-n,0.0028694404591104736
r-e-v-ol-ution,0.0014347202295552368
r-e-v-olut-ion,0.0014347202295552368
r-e-vol-ut-ion,0.0014347202295552368
r-ev-ol-u-tion,0.0014347202295552368
r-ev-ol-uti-on,0.0014347202295552368
r-ev-olu-t-ion,0.0014347202295552368
re-v-ol-uti-on,0.0014347202295552368
re-v-olu-ti-on,0.0014347202295552368
re-vo-lut-i-on,0.0014347202295552368
rev-o-l-u-tion,0.0014347202295552368
rev-o-lu-ti-on,0.0014347202295552368

};
\legend{\uni~~, \bpedropout}
\end{axis}
\end{tikzpicture}
\label{fig:seg_hist_fine_appendix}
\end{subfigure}

\caption{Histogram comparison of segmentation probabilities for two sampling schemes (\uni vs.\ \bpedropout with $p_{drop}=0.5$).}
\label{app:fig:seg_histograms_appendix}

\end{figure}
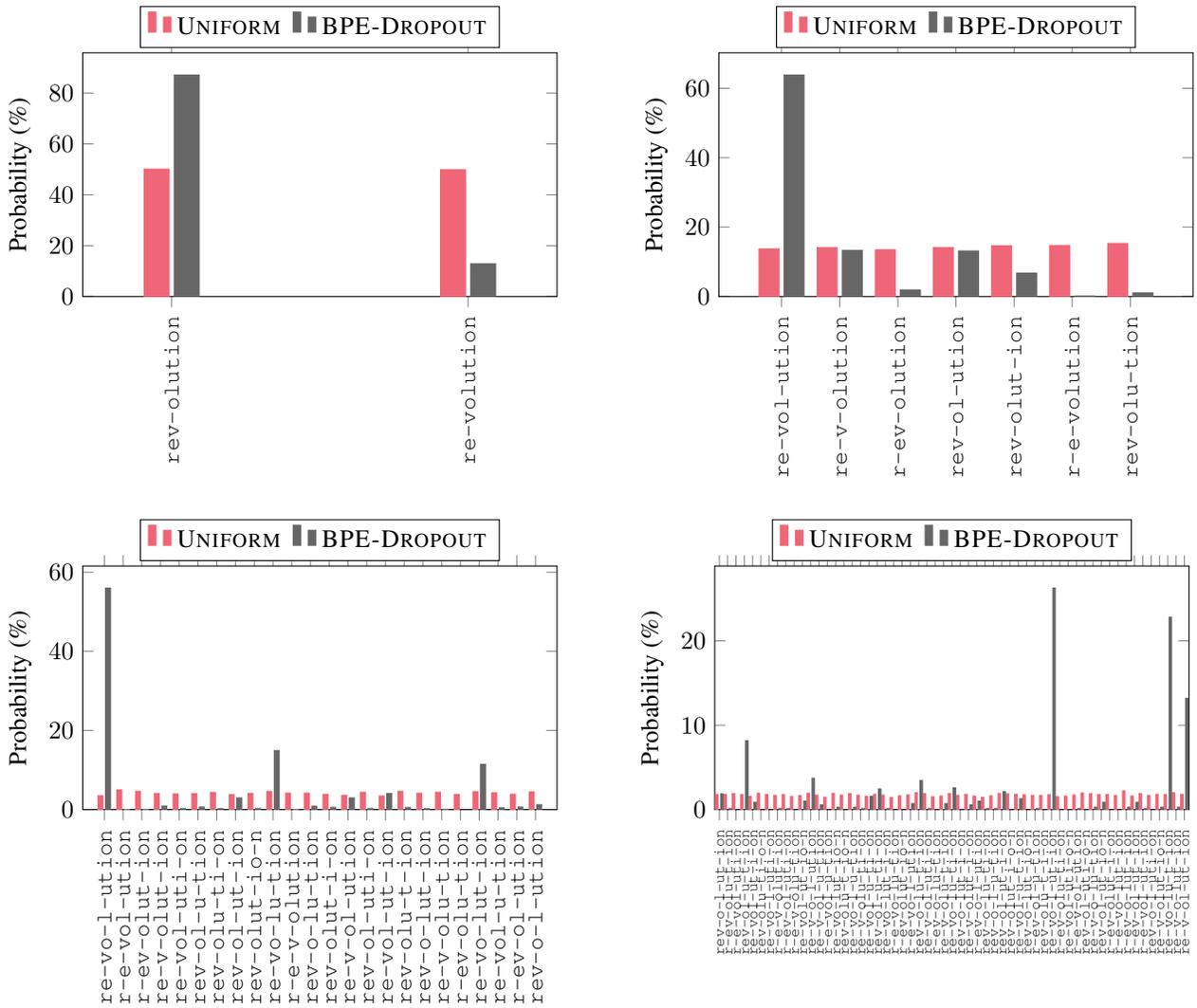

Next, we empirically evaluate the performance of BPE-dropout compared to our methods for \llamaoneb fine-tuned on \langgame and \cute. \Cref{fig:app_finetuning_robustness_bpedropout_avg} shows average accuracy under canonical and stochastic evaluation with \stok tokenisation. We observe that \bpedropout improves robustness to non-canonical tokenisations compared to canonical fine-tuning. \Cref{fig:app_finetuning_robustness_bpedropout_adv} reports robustness under adversarial tokenisation. The greedy search for the adversarial tokenisation is performed for 10 iterations, $d_k=2$ neighbours, starting from canonical initial tokenisations $v_0$. Here, BPE-dropout again improves robustness over canonical fine-tuning, while more uniform stochastic tokenisation schemes remain competitive or stronger at comparable perturbation strengths. As above, note that a direct comparison between $p_{\text{drop}}$ and $\alpha_{\text{fine}}$ is not possible. In \Cref{fig:avg_robustness_qwen} and \Cref{fig:app_finetuning_robustness_bpedropout_adv_qwen} we repeated the experiments for the Qwen-3 tokeniser and LoRA fine-tune a \qwenzerosixb model.

\begin{figure}[h]
    \centering
    \setlength{\figureheight}{0.2\linewidth}
    \setlength{\figurewidth}{.4\linewidth}

    \pgfplotsset{
        x tick label style={font=\scriptsize},
        y tick label style={rotate=90, font=\scriptsize},
        ylabel={\small Avg.\ test accuracy $\rightarrow$},
        xlabel={\small $\peval$},
        scale only axis,
        tick align=outside,
        tick pos=left,
        every axis plot/.append style={mark size=3pt},
        axis x line*=bottom,
        axis y line*=left,
        axis line style={draw=none},
        grid style={solid},
    }

    \begin{subfigure}[b]{0.48\textwidth}
        \centering
        \begin{tikzpicture}
\begin{axis}[
  height=\figureheight,
  width=\figurewidth,
  ybar,
  bar width=6pt,
  enlarge x limits=0.12,
  symbolic x coords={0,0.1,0.5,1},
  xtick=data,
  xtick style={color=black},
  ytick style={color=black},
  ymajorgrids,
  ymin=0.6,
  ymax=.99,
  yticklabel={
    \pgfmathparse{100*\tick}\pgfmathprintnumber[fixed,precision=0]{\pgfmathresult}\%
  },
  grid style={scGrey},
  legend cell align={left},
  legend style={
    fill opacity=1,
    legend columns=2,
    draw opacity=1,
    text opacity=1,
    at={(0.2,1.03)},
    anchor=south west,
    draw=none
  },
]

\addplot+[very thick, draw=black, fill=black, bar shift=-12pt]
coordinates { (0,0.952) ({0.1},0.947) ({0.5},0.901) (1,0.813) };
\addlegendentry{$p_{\mathrm{drop}}=0.0$}

\addplot+[very thick, draw=scBlue!90!black, fill=scBlue!90!black, bar shift=-4pt]
coordinates { (0,0.956) ({0.1},0.949) ({0.5},0.935) (1,0.904) };
\addlegendentry{$p_{\mathrm{drop}}=0.1$}

\addplot+[very thick, draw=scBlue!60, fill=scBlue!60, bar shift=4pt]
coordinates { (0,0.958) ({0.1},0.953) ({0.5},0.953) (1,0.940) };
\addlegendentry{$p_{\mathrm{drop}}=0.25$}

\addplot+[very thick, draw=scBlue!30, fill=scBlue!30, bar shift=12pt]
coordinates { (0,0.956) ({0.1},0.955) ({0.5},0.966) (1,0.969) };
\addlegendentry{$p_{\mathrm{drop}}=0.5$}

\end{axis}
\end{tikzpicture}
        \caption{\langgame}
    \end{subfigure}
    \hfill
    \begin{subfigure}[b]{0.48\textwidth}
        \centering
        \begin{tikzpicture}
\begin{axis}[
  height=\figureheight,
  width=\figurewidth,
  ybar,
  bar width=6pt,
  enlarge x limits=0.12,
  symbolic x coords={0,0.1,0.5,1},
  xtick=data,
  xtick style={color=black},
  ytick style={color=black},
  ymajorgrids,
  ymin=0.6,
  ymax=.99,
  yticklabel={
    \pgfmathparse{100*\tick}\pgfmathprintnumber[fixed,precision=0]{\pgfmathresult}\%
  },
  grid style={scGrey},
  legend cell align={left},
  legend style={
    fill opacity=1,
    legend columns=2,
    draw opacity=1,
    text opacity=1,
    at={(0.2,1.03)},
    anchor=south west,
    draw=none
  },
]

\addplot+[very thick, draw=black, fill=black, bar shift=-12pt]
coordinates { (0,0.893) ({0.1},0.884) ({0.5},0.817) (1,0.716) };
\addlegendentry{$p_{\mathrm{drop}}=0.0$}

\addplot+[very thick, draw=scBlue!90!black, fill=scBlue!90!black, bar shift=-4pt]
coordinates { (0,0.892) ({0.1},0.892) ({0.5},0.893) (1,0.875) };
\addlegendentry{$p_{\mathrm{drop}}=0.1$}

\addplot+[very thick, draw=scBlue!60, fill=scBlue!60, bar shift=4pt]
coordinates { (0,0.909) ({0.1},0.909) ({0.5},0.907) (1,0.901) };
\addlegendentry{$p_{\mathrm{drop}}=0.25$}

\addplot+[very thick, draw=scBlue!30, fill=scBlue!30, bar shift=12pt]
coordinates { (0,0.903) ({0.1},0.911) ({0.5},0.918) (1,0.925) };
\addlegendentry{$p_{\mathrm{drop}}=0.5$}

\end{axis}
\end{tikzpicture}
        \caption{\cute}
    \end{subfigure}

\caption{Average accuracy of \llamaoneb fine-tuned with \bpedropout and evaluated on \langgame under canonical and \stok tokenisation. \bpedropout improves robustness to non-canonical tokenisations. However, choosing an appropriate dropout rate $p_{\text{drop}}$ is less intuitive, since the resulting number of splits depends on the underlying input text and its merge structure.}    \label{fig:app_finetuning_robustness_bpedropout_avg}
\end{figure}
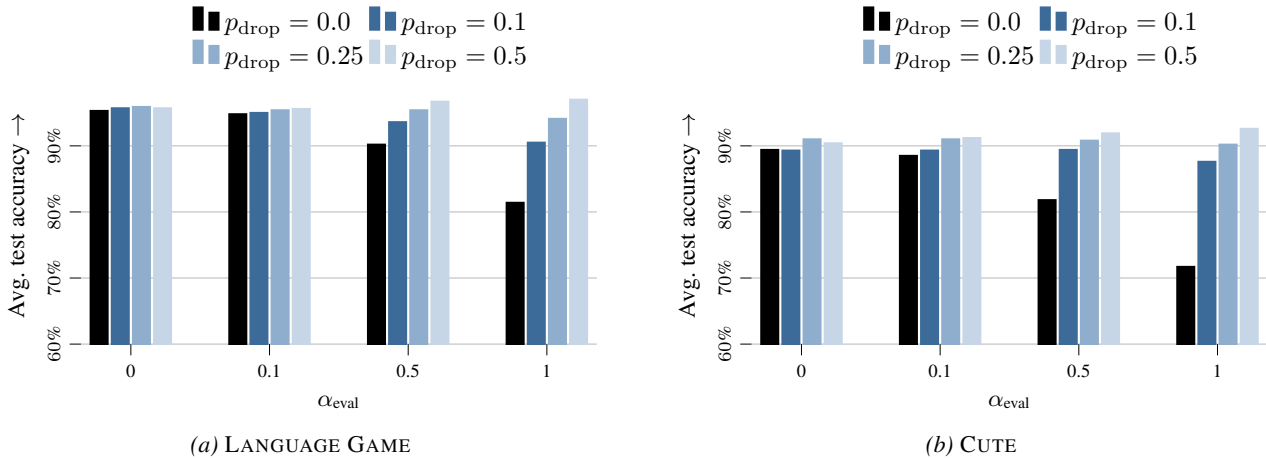

\begin{figure}[h]
    \centering
    \begin{subfigure}{.85\textwidth}\centering
    \appheader{Data set: \langgame}
      \pgfplotsset{
  colormap={scScheme}{
    color(0pt)=(white)
color(1pt)=(scCyan!60)
  }
}

\begin{tikzpicture}
\begin{groupplot}[
  group style={
    group size=3 by 1,
    horizontal sep=.5cm,
  },
  width=0.33\textwidth,
  colormap name=scScheme,
  point meta min=0,
  point meta max=1,
  xlabel={$\pfine/p_{drop}$},
  xmin=-0.5, xmax=3.5,
  ymin=-0.5, ymax=3.5,
  xtick={0,1,2,3},
  xticklabels={-, 0.1, 0.25, 0.5},
  ytick={0,1,2,3},
  yticklabels={
    Canonical,
    \unik,
    \bpedropout,
    \stokuni
  },
 y dir=reverse,
  tick align=outside,
  tick pos=left,
  every axis label/.append style={font=\small},
  every node near coord/.append style={
    font=\footnotesize, color=black, anchor=center,
    /pgf/number format/.cd, fixed, fixed zerofill
  },
]
\nextgroupplot[title={Canonical tokenisation }]
\addplot [
  matrix plot,
  point meta=explicit,
  mesh/cols=4,
nodes near coords={\printmynumber\pgfplotspointmeta},
]
table [meta=Value] {
x y Value
0 0 0.948
1 0 nan
2 0 nan
3 0 nan
0 1 nan
1 1 0.954
2 1 0.962
3 1 0.964
0 2 nan
1 2 0.956
2 2 0.960
3 2 0.956
0 3 nan
1 3 0.954
2 3 0.962
3 3 0.950
};

\nextgroupplot[
  title={Adversarial tokenisation \\ $v_0$ canonical},
  title style={align=center},
  colorbar style={
    ylabel={},
    y dir=normal,
  },
  yticklabels={},
]
\addplot [
  matrix plot,
  point meta=explicit,
  mesh/cols=4,
nodes near coords={\printmynumber\pgfplotspointmeta},
]
table [meta=Value] {
x y Value
0 0 0.102
1 0 nan
2 0 nan
3 0 nan
0 1 nan
1 1 0.804
2 1 0.732
3 1 0.764
0 2 nan
1 2 0.418
2 2 0.532
3 2 0.706
0 3 nan
1 3 0.616
2 3 0.636
3 3 0.696
};

\end{groupplot}
\end{tikzpicture}
    \end{subfigure}\par\vspace{3mm}
    \begin{subfigure}{.85\textwidth}\centering
    \appheader{Data set: \cute}
      \pgfplotsset{
  colormap={scScheme}{
    color(0pt)=(white)
color(1pt)=(scCyan!60)
  }
}

\begin{tikzpicture}
\begin{groupplot}[
  group style={
    group size=3 by 1,
    horizontal sep=.5cm,
  },
  width=0.33\textwidth,
  colormap name=scScheme,
  point meta min=0,
  point meta max=1,
  xlabel={$\pfine / p_{drop}$},
  xmin=-0.5, xmax=3.5,
  ymin=-0.5, ymax=3.5,
  xtick={0,1,2,3},
  xticklabels={-, 0.1, 0.25, 0.5},
  ytick={0,1,2,3},
  yticklabels={
    Canonical,
    \unik,
    \bpedropout,
    \stokuni
  },
  y dir=reverse,
  tick align=outside,
  tick pos=left,
  every axis label/.append style={font=\small},
  every node near coord/.append style={
    font=\footnotesize, color=black, anchor=center,
    /pgf/number format/.cd, fixed, fixed zerofill
  },
]

\nextgroupplot[title={Canonical tokenisation }]
\addplot [
  matrix plot,
  point meta=explicit,
  mesh/cols=4,
nodes near coords={\printmynumber\pgfplotspointmeta},
]
table [meta=Value] {
x y Value
0 0 0.900
1 0 nan
2 0 nan
3 0 nan
0 1 nan
1 1 0.892
2 1 0.913
3 1 0.914
0 2 nan
1 2 0.894
2 2 0.907
3 2 0.897
0 3 nan
1 3 0.905
2 3 0.906
3 3 0.910
};

\nextgroupplot[
  title={Adversarial tokenisation \\ $v_0$ canonical},
  title style={align=center},
  colorbar style={
    ylabel={},
    y dir=normal,
  },
  yticklabels={},
]
\addplot [
  matrix plot,
  point meta=explicit,
  mesh/cols=4,
nodes near coords={\printmynumber\pgfplotspointmeta},
]table [meta=Value] {
x y Value
0 0 0.303
1 0 nan
2 0 nan
3 0 nan
0 1 nan
1 1 0.708
2 1 0.783
3 1 0.817
0 2 nan
1 2 0.677
2 2 0.755
3 2 0.812
0 3 nan
1 3 0.712
2 3 0.794
3 3 0.785
};

\end{groupplot}
\end{tikzpicture}
    \end{subfigure}\par\vspace{3mm}
\caption{Accuracy of \llamaoneb under canonical and adversarial tokenisation on \langgame and \cute for stochastic fine-tuning with \bpedropout, \stokuni, and \unik. All stochastic fine-tuning methods improve robustness compared to canonical fine-tuning. Note that $\pfine$ and $p_{drop}$ cannot be compared directly as they induce different number of splits. }
\label{fig:app_finetuning_robustness_bpedropout_adv}
\end{figure}

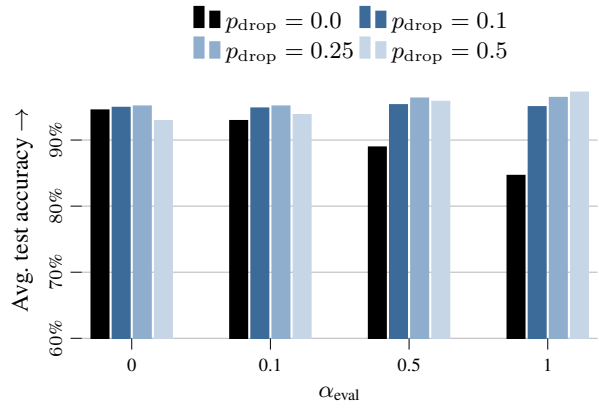
\begin{figure}[h]
    \centering

    \centering\footnotesize
    \setlength{\figureheight}{0.2\linewidth}
    \setlength{\figurewidth}{0.4\linewidth}

    \pgfplotsset{
        x tick label style={font=\scriptsize},
        y tick label style={rotate=90, font=\scriptsize},
        ylabel={\small Avg.\ test accuracy $\rightarrow$},
        xlabel={\small $\peval$},
        scale only axis,
        tick align=outside,
        tick pos=left,
        every axis plot/.append style={mark size=3pt},
        axis x line*=bottom,
        axis y line*=left,
        axis line style={draw=none},
        grid style={solid},
    }
    \begin{subfigure}{.85\textwidth}\centering
    \appheader{Data set: \langgame}

    \begin{tikzpicture}
\begin{axis}[
  height=\figureheight,
  width=\figurewidth,
  ybar,
  bar width=6pt,
  enlarge x limits=0.12,
  symbolic x coords={0,0.1,0.5,1},
  xtick=data,
  xtick style={color=black},
  ytick style={color=black},
  ymajorgrids,
  ymin=0.6,
  ymax=.99,
  yticklabel={
    \pgfmathparse{100*\tick}\pgfmathprintnumber[fixed,precision=0]{\pgfmathresult}\%
  },
  grid style={scGrey},
  legend cell align={left},
  legend style={
    fill opacity=1,
    legend columns=2,
    draw opacity=1,
    text opacity=1,
    at={(0.2,1.03)},
    anchor=south west,
    draw=none
  },
]

\addplot+[very thick, draw=black, fill=black, bar shift=-12pt]
coordinates { (0,0.944) ({0.1},0.928) ({0.5},0.888) (1,0.845) };
\addlegendentry{$p_{\mathrm{drop}}=0.0$}

\addplot+[very thick, draw=scBlue!90!black, fill=scBlue!90!black, bar shift=-4pt]
coordinates { (0,0.948) ({0.1},0.947) ({0.5},0.952) (1,0.949) };
\addlegendentry{$p_{\mathrm{drop}}=0.1$}

\addplot+[very thick, draw=scBlue!60, fill=scBlue!60, bar shift=4pt]
coordinates { (0,0.950) ({0.1},0.950) ({0.5},0.962) (1,0.963) };
\addlegendentry{$p_{\mathrm{drop}}=0.25$}

\addplot+[very thick, draw=scBlue!30, fill=scBlue!30, bar shift=12pt]
coordinates { (0,0.928) ({0.1},0.937) ({0.5},0.957) (1,0.971) };
\addlegendentry{$p_{\mathrm{drop}}=0.5$}

\end{axis}
\end{tikzpicture}
    \end{subfigure}\par\vspace{3mm}
    
    \caption{Average perturbation robustness of \qwenzerosixb fine-tuned on \langgame. }
    \label{fig:avg_robustness_qwen}
    \vspace*{-1em}
\end{figure}

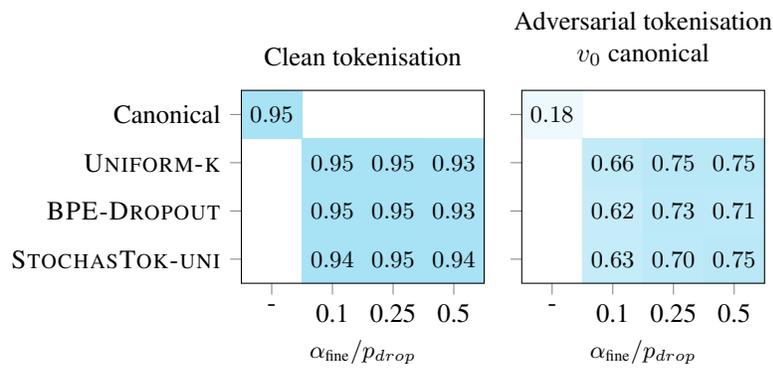
\begin{figure}[h]
    \centering
    \begin{subfigure}{.85\textwidth}\centering
      \pgfplotsset{
  colormap={scScheme}{
    color(0pt)=(white)
color(1pt)=(scCyan!60)
  }
}

\begin{tikzpicture}
\begin{groupplot}[
  group style={
    group size=3 by 1,
    horizontal sep=.5cm,
  },
  width=0.33\textwidth,
  colormap name=scScheme,
  point meta min=0,
  point meta max=1,
  xlabel={$\pfine/p_{drop}$},
  xmin=-0.5, xmax=3.5,
  ymin=-0.5, ymax=3.5,
  xtick={0,1,2,3},
  xticklabels={-, 0.1, 0.25, 0.5},
  ytick={0,1,2,3},
  yticklabels={
    Canonical,
    \unik,
    \bpedropout,
    \stokuni
  },
 y dir=reverse,
  tick align=outside,
  tick pos=left,
  every axis label/.append style={font=\small},
  every node near coord/.append style={
    font=\footnotesize, color=black, anchor=center,
    /pgf/number format/.cd, fixed, fixed zerofill
  },
]
\nextgroupplot[title={Clean tokenisation}]
\addplot [
  matrix plot,
  point meta=explicit,
  mesh/cols=4,
  nodes near coords={\printmynumber\pgfplotspointmeta},
]
table [meta=Value] {
x y Value
0 0 0.946
1 0 nan
2 0 nan
3 0 nan
0 1 nan
1 1 0.950
2 1 0.948
3 1 0.930
0 2 nan
1 2 0.948
2 2 0.950
3 2 0.928
0 3 nan
1 3 0.936
2 3 0.946
3 3 0.942
};

\nextgroupplot[
  title={Adversarial tokenisation \\ $v_0$ canonical},
  title style={align=center},
  colorbar style={
    ylabel={},
    y dir=normal,
  },
  yticklabels={},
]
\addplot [
  matrix plot,
  point meta=explicit,
  mesh/cols=4,
  nodes near coords={\printmynumber\pgfplotspointmeta},
]
table [meta=Value] {
x y Value
0 0 0.180
1 0 nan
2 0 nan
3 0 nan
0 1 nan
1 1 0.656
2 1 0.748
3 1 0.752
0 2 nan
1 2 0.620
2 2 0.726
3 2 0.708
0 3 nan
1 3 0.628
2 3 0.700
3 3 0.748
};

\end{groupplot}
\end{tikzpicture}
    \end{subfigure}\par\vspace{3mm}

\caption{Canonical and adversarial perturbation robustness of \qwenzerosixb fine-tuned on \langgame.  }
\label{fig:app_finetuning_robustness_bpedropout_adv_qwen}
\end{figure}

\FloatBarrier
\subsubsection{Additional Results for Gemma 3}
\label{app:larger_vocab}

\begin{figure}[h]
    \centering

    \centering\footnotesize
    \setlength{\figureheight}{0.2\linewidth}
    \setlength{\figurewidth}{0.4\linewidth}

    \pgfplotsset{
        x tick label style={font=\scriptsize},
        y tick label style={rotate=90, font=\scriptsize},
        ylabel={\small Avg.\ test accuracy $\rightarrow$},
        xlabel={\small $\peval$},
        scale only axis,
        tick align=outside,
        tick pos=left,
        every axis plot/.append style={mark size=3pt},
        axis x line*=bottom,
        axis y line*=left,
        axis line style={draw=none},
        grid style={solid},
    }

    \begin{tikzpicture}
\begin{axis}[
  height=\figureheight,
  width=\figurewidth,
  ybar,
  bar width=6pt,
  enlarge x limits=0.1,
  symbolic x coords={0,0.1,0.5,1},
  xtick=data,
  xtick style={color=black},
  ytick style={color=black},
  ymajorgrids,
  ymin=0.1, ymax=1,
  yticklabel={
    \pgfmathparse{100*\tick}\pgfmathprintnumber[fixed,precision=0]{\pgfmathresult}\%
  },
  grid style={scGrey},
  legend cell align={left},
  legend style={fill opacity=1,legend columns=4,draw opacity=1,text opacity=1,at={(0.05,1.03)},anchor=south west,draw=none,font=\scriptsize},
]

\addplot+[very thick, draw=black, fill=black, bar shift=-12pt]
coordinates {
  (0,0.904)
  ({0.1},0.890)
  ({0.5},0.767)
  (1,0.635)
};
\addlegendentry{$\pfine=0.0$}

\addplot+[very thick, draw=scRed!90, fill=scRed!90, bar shift=-4pt]
coordinates {
  (0,0.920)
  ({0.1},0.918)
  ({0.5},0.890)
  (1,0.814)
};
\addlegendentry{$\pfine=0.1$}

\addplot+[very thick, draw=scRed!60, fill=scRed!60, bar shift=4pt]
coordinates {
  (0,0.904)
  ({0.1},0.915)
  ({0.5},0.917)
  (1,0.876)
};
\addlegendentry{$\pfine=0.2$}

\addplot+[very thick, draw=scRed!30, fill=scRed!30, bar shift=12pt]
coordinates {
  (0,0.898)
  ({0.1},0.912)
  ({0.5},0.924)
  (1,0.918)
};
\addlegendentry{$\pfine=0.5$}

\end{axis}
\end{tikzpicture}

    \caption{Average perturbation robustness of \gemmaoneb (vocabulary size of 262k) fine-tuned and evaluated with \stok on \langgame. }
    \label{fig:gemma_avg_robustness}
    \vspace*{-1em}
\end{figure}
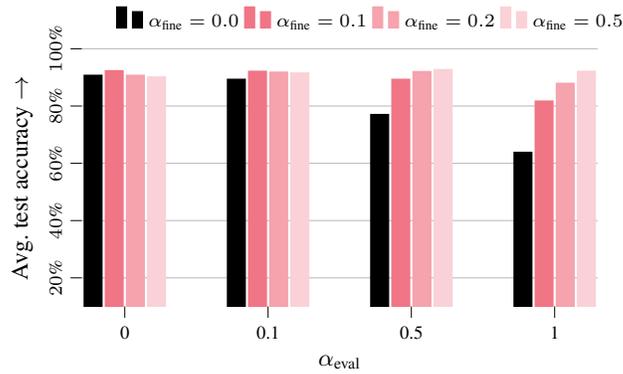

\FloatBarrier
\subsubsection{Ablation on Embedding Distances} \label{app:embedding_distances}

We measure the mean normalised distance between the hidden representations of canonical and non-canonical tokenisations of the same word across layers. We use the 1,000 most frequent English words and uniformly sample 50 non-canonical tokenisations for each word. \Cref{fig:app:embeddings} shows the results for \llamaoneb after canonical and stochastic fine-tuning on \langgame. While canonical fine-tuning does not reduce these distances compared to the zero-shot model, stochastic fine-tuning consistently yields smaller distances, especially in deeper layers. This indicates reduced local sensitivity to tokenisation changes and is consistent with the improved robustness to adversarial tokenisation observed in the main paper.

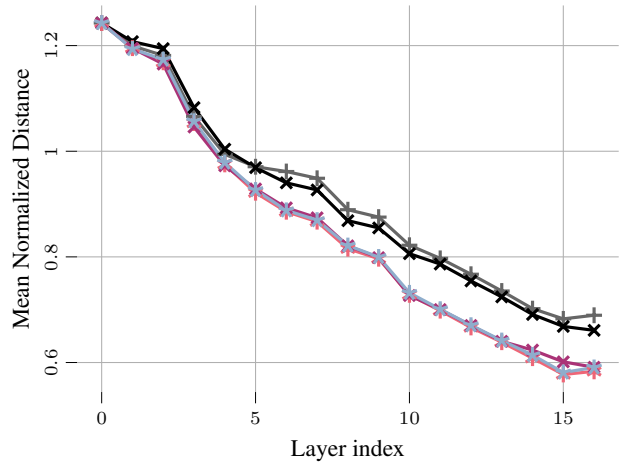
\begin{figure}[h]
\centering\footnotesize
\setlength{\figureheight}{0.3\linewidth}
\setlength{\figurewidth}{0.3\linewidth}

\pgfplotsset{
    x tick label style={font=\scriptsize}, 
    y tick label style={rotate=90, font=\scriptsize}, 
    ylabel={\small Avg.\ normalised $L^2$ distance},
    xlabel={\small Layer index},
    scale only axis,
    tick align=outside,
    tick pos=left,
    every axis plot/.append style={mark size=3pt},
    axis x line*=bottom,
    axis y line*=left,
    axis line style={draw=none},
    grid style={solid},
    }

    \begin{tikzpicture}

\definecolor{crimson2143940}{RGB}{214,39,40}
\definecolor{darkgrey176}{RGB}{176,176,176}
\definecolor{darkorange25512714}{RGB}{255,127,14}
\definecolor{forestgreen4416044}{RGB}{44,160,44}
\definecolor{lightgrey204}{RGB}{204,204,204}
\definecolor{steelblue31119180}{RGB}{31,119,180}

\begin{axis}[
height=\figureheight,
tick align=outside,
tick pos=left,
width=1.4\figurewidth,
x grid style={darkgrey176},
xlabel={Layer index},
xmajorgrids,
xmin=-0.8, xmax=16.8,
xtick style={color=black},
y grid style={darkgrey176},
ylabel={Mean Normalized Distance},
ymajorgrids,
ymin=0.544115109614192, ymax=1.2760232851211,
ytick style={color=black}
]

\addplot [very thick, black!60, mark=+, mark size=3, mark options={solid}]
table {0 1.24275473168896
1 1.1994502856492
2 1.18132154620082
3 1.06580915502899
4 0.994086667959609
5 0.970669055469738
6 0.961605629804653
7 0.948818279744436
8 0.889304043797881
9 0.875279104874344
10 0.822037937943555
11 0.797211967657938
12 0.767143938800746
13 0.735403699258887
14 0.702033488292879
15 0.682648545683333
16 0.689477823572047
};
\label{plt:plt-distance-zero}

\addplot [very thick, black, mark=x, mark size=3, mark options={solid}]
table {0 1.24275473168896
1 1.20753966754084
2 1.19430839099152
3 1.08264113038717
4 1.00392055409712
5 0.968978777529078
6 0.940147914789058
7 0.926698693087845
8 0.868561993230382
9 0.854808284167699
10 0.80604757280316
11 0.786358546941446
12 0.754412852745057
13 0.724249813167277
14 0.690966909560374
15 0.66835123967626
16 0.660883864678152
};
\label{plt:plt-distance-canon}

\addplot [very thick, scRed, mark=asterisk, mark size=3, mark options={solid}]
table {0 1.24275473168896
1 1.19519255047422
2 1.16844725606458
3 1.05334605986874
4 0.975111982029811
5 0.921468879661238
6 0.885510961758417
7 0.866686761796036
8 0.815457337367003
9 0.796534382854753
10 0.728161535126798
11 0.69820197223967
12 0.666419278888861
13 0.638472247472088
14 0.608251627790481
15 0.577383663046324
16 0.583277909744289
};
\label{plt:plt-distance-stok}
 
\addplot [very thick, scPurple, mark=x, mark size=3, mark options={solid}]
table {0 1.24275473168896
1 1.19491398775558
2 1.1650336802025
3 1.04589182897999
4 0.972488717329734
5 0.928632978310314
6 0.892809718909716
7 0.873968594208942
8 0.820914912451555
9 0.79850428820794
10 0.726875138370681
11 0.700208728558152
12 0.670100534970903
13 0.640134202285812
14 0.623593838515263
15 0.601290568170964
16 0.591005443479242
};
\label{plt:plt-distance-stokuni}

\addplot [very thick, scBlue!60, mark=star, mark size=3, mark options={solid}]
table {0 1.24275473168896
1 1.19362558153507
2 1.17346355115114
3 1.05681803563399
4 0.978663868034483
5 0.926609573372812
6 0.888089731716792
7 0.869611444163437
8 0.821451911150879
9 0.799861603859144
10 0.731889973094953
11 0.700968964374885
12 0.670631269868678
13 0.641616042401287
14 0.614124808349489
15 0.581110588139456
16 0.5906775328291
};
\label{plt:plt-distance-unik}

\end{axis}

\end{tikzpicture}
    \caption{Canonical fine-tuning (\ref{plt:plt-distance-canon}) does not affect the distances of alternative tokenisations compared to the zero-shot \llamaoneb (\ref{plt:plt-distance-zero}). Stochastic tokenisation (\stok \ref{plt:plt-distance-stok}, \stokuni \ref{plt:plt-distance-stokuni}, \unik \ref{plt:plt-distance-unik}) reduce distances in deeper layers. 
    }
    \label{fig:app:embeddings}
\end{figure}

\newpage

\subsection{Additional results for in-context learning}

\subsubsection{Example Prompt for Language Game}

\textit{Canonical context + perturbed question ($\alpha_{\text{query}}=1$).}

\begin{quote}
    \ttfamily
    \tokensequence{{Below},{\;are},{\;solved},{\;examples},{.},{\;For},{\;the},{\;final},{\;question},{,},{\;predict},{\;the},{\;correct},{\;option.}}

    \vspace{0.5em}
    \tokensequence{{Which},{\;choice},{\;has},{\;the},{\;most},{\;letter},{\;'},{b},{'s},{?},{\;These},{\;are},{\;the},{\;possible},{\;option},{\;words},{:},{\;[},{\;up},{,},{\;baby},{,},{\;rain},{,},{\;glad},{].},{\;Answer},{:},{\;},{\;baby}}

    \vspace{0.5em}
    \tokensequence{{Which},{ option},{ ends},{ with},{ 'o'},{?},{ The},{ possible},{ option},{ strings},{:},{ [},{ no},{,},{ hard},{,},{ by},{,},{ happen},{].},{ Answer},{:},{ },{ no}}

    \vspace{0.5em}
    \tokensequence{{What},{ word},{ is},{ the},{ shortest},{?},{ The},{ available},{ option},{ strings},{ are},{:},{ [},{ row},{,},{ suffix},{,},{ gather},{,},{ energy},{].},{ Answer},{:},{ },{ row}}

    \vspace{0.5em}
    {\itshape
    \tokensequence{{T},{h},{e},{ possible},{ option},{ words},{ a},{re},{:},{ [},{ straight},{,},{ ca},{r},{r},{y},{,},{ di},{ffer},{,},{ h},{alf},{].},{ Which},{ of},{ the},{ p},{ossible},{ option},{ words},{ has},{ the},{ most},{ let},{te},{r},{ '},{t},{'s},{?},{ Ans},{we},{r},{:}}}
\end{quote}

\textit{Perturbed context ($\alpha_{\text{context}}=1$) + perturbed question ($\alpha_{\text{query}}=1$).}

\begin{quote}
    \ttfamily
    \tokensequence{{Below},{ are},{ solved},{ examples},{.},{ For},{ the},{ final},{ question},{,},{ predict},{ the},{ correct},{ option.}}

    \vspace{0.5em}
    \tokensequence{{Wh},{i},{ch},{ choice},{ has},{ the},{ most},{ le},{t},{t},{er},{ '},{b},{'s},{?},{ These},{ are},{ the},{ possible},{ option},{ words},{:},{ [},{ up},{,},{ baby},{,},{ rain},{,},{ g},{lad},{].},{ A},{n},{sw},{er},{:},{ },{ baby}}

    \vspace{0.5em}
    \tokensequence{{Which},{ option},{ end},{s},{ with},{ 'o'},{?},{ The},{ possible},{ option},{ s},{tring},{s},{:},{ [},{ no},{,},{ hard},{,},{ b},{y},{,},{ happen},{].},{ Answer},{:},{ },{ no}}

    \vspace{0.5em}
    \tokensequence{{Wh},{at},{ word},{ i},{s},{ the},{ shortest},{?},{ The},{ available},{ option},{ string},{s},{ are},{:},{ [},{ r},{o},{w},{,},{ suffix},{,},{ g},{ather},{,},{ e},{n},{ergy},{].},{ Ans},{wer},{:},{ },{ row}}

    \vspace{0.5em}
    {\itshape
    \tokensequence{{T},{h},{e},{ possible},{ option},{ words},{ a},{re},{:},{ [},{ straight},{,},{ ca},{r},{r},{y},{,},{ di},{ffer},{,},{ h},{alf},{].},{ Which},{ of},{ the},{ p},{ossible},{ option},{ words},{ has},{ the},{ most},{ let},{te},{r},{ '},{t},{'s},{?},{ Ans},{we},{r},{:}}}
\end{quote}

\begin{table*}[h!]
\centering\small
\renewcommand{\arraystretch}{1.1}
\setlength{\tabcolsep}{4pt}

\renewcommand{\HeatVMax}{0.3} %
\renewcommand{\HeatMinInt}{0} %
\renewcommand{\HeatSpanInt}{70} %

\caption{Accuracy (\%) of \llamaeightb under canonical versus uniformly sampled query tokenisation across data sets, with $K=10$ few-shot examples. 
}
\label{tab:app:uniformity_avg_icl}
\resizebox{\textwidth}{!}{%

\begin{tabular}{@{}l ll *{7}{cc} @{}}
\toprule
& \multirow{2}{*}{\shortstack{\textbf{Context}\\\textbf{tokenisation}}} &
  \multirow{2}{*}{\textbf{$\picl$}} &
  \multicolumn{2}{c}{\textsc{Language Game}} &
  \multicolumn{2}{c}{\textsc{ARC-E}} &
  \multicolumn{2}{c}{\textsc{COPA}} &
  \multicolumn{2}{c}{\textsc{CSQA}} &
  \multicolumn{2}{c}{\textsc{HellaSwag}} &
  \multicolumn{2}{c}{\textsc{PIQA}} &
  \multicolumn{2}{c}{\textsc{SocialIQA}} \\
\cmidrule(l){4-5}\cmidrule(l){6-7}\cmidrule(l){8-9}\cmidrule(l){10-11}\cmidrule(l){12-13}\cmidrule(l){14-15}\cmidrule(l){16-17}
& & &
\textbf{canon.} & \textbf{unif.} &
\textbf{canon.} & \textbf{unif.} &
\textbf{canon.} & \textbf{unif.} &
\textbf{canon.} & \textbf{unif.} &
\textbf{canon.} & \textbf{unif.} &
\textbf{canon.} & \textbf{unif.} &
\textbf{canon.} & \textbf{unif.} \\
\midrule

& Canonical & -  & \PairDelta{0.839}{0.703} & \PairDelta{0.781}{0.713} & \PairDelta{0.920}{0.846} & \PairDelta{0.683}{0.630} & \PairDelta{0.548}{0.502} & \PairDelta{0.814}{0.795} & \PairDelta{0.524}{0.451} \\ \midrule
& \stok & 0.1  & \PairDelta{0.846}{0.756} & \PairDelta{0.767}{0.717} & \PairDelta{0.920}{0.860} & \PairDelta{0.688}{0.634} & \PairDelta{0.550}{0.509} & \PairDelta{0.817}{0.796} & \PairDelta{0.528}{0.463} \\
& \stok & 0.5  & \PairDelta{0.859}{0.802} & \PairDelta{0.775}{0.732} & \PairDelta{0.911}{0.868} & \PairDelta{0.678}{0.641} & \PairDelta{0.543}{0.509} & \PairDelta{0.819}{0.797} & \PairDelta{0.537}{0.474} \\
& \stok & 1.0  & \PairDelta{0.877}{0.818} & \PairDelta{0.774}{0.740} & \PairDelta{0.913}{0.870} & \PairDelta{0.671}{0.639} & \PairDelta{0.544}{0.509} & \PairDelta{0.819}{0.800} & \PairDelta{0.538}{0.475} \\ \midrule
& \stokuni & 0.1  & \PairDelta{0.851}{0.756} & \PairDelta{0.764}{0.715} & \PairDelta{0.920}{0.858} & \PairDelta{0.686}{0.635} & \PairDelta{0.551}{0.508} & \PairDelta{0.816}{0.795} & \PairDelta{0.530}{0.462} \\
& \stokuni & 0.5  & \PairDelta{0.868}{0.804} & \PairDelta{0.771}{0.731} & \PairDelta{0.911}{0.870} & \PairDelta{0.674}{0.637} & \PairDelta{0.542}{0.508} & \PairDelta{0.820}{0.798} & \PairDelta{0.536}{0.473} \\
& \stokuni & 1.0  & \PairDelta{0.871}{0.813} & \PairDelta{0.773}{0.738} & \PairDelta{0.916}{0.873} & \PairDelta{0.667}{0.639} & \PairDelta{0.542}{0.510} & \PairDelta{0.819}{0.801} & \PairDelta{0.534}{0.477} \\ \midrule
& \unik & 0.1  & \PairDelta{0.851}{0.745} & \PairDelta{0.773}{0.715} & \PairDelta{0.920}{0.852} & \PairDelta{0.687}{0.634} & \PairDelta{0.548}{0.505} & \PairDelta{0.814}{0.795} & \PairDelta{0.527}{0.462} \\
& \unik & 0.5  & \PairDelta{0.862}{0.790} & \PairDelta{0.771}{0.725} & \PairDelta{0.914}{0.871} & \PairDelta{0.680}{0.637} & \PairDelta{0.544}{0.510} & \PairDelta{0.818}{0.798} & \PairDelta{0.530}{0.478} \\
& \unik & 1.0  & \PairDelta{0.872}{0.808} & \PairDelta{0.771}{0.737} & \PairDelta{0.911}{0.872} & \PairDelta{0.676}{0.640} & \PairDelta{0.543}{0.509} & \PairDelta{0.819}{0.800} & \PairDelta{0.528}{0.489} \\ \midrule
& \uni & -  & \PairDelta{0.850}{0.817} & \PairDelta{0.776}{0.752} & \PairDelta{0.902}{0.870} & \PairDelta{0.623}{0.638} & \PairDelta{0.541}{0.510} & \PairDelta{0.819}{0.804} & \PairDelta{0.514}{0.498} \\
\bottomrule
\end{tabular}
} 
\end{table*}

\renewcommand{\HeatVMax}{1}
\newcommand{\HeatMaxMixInt}{100}
\newcommand{\HeatMute}{90}

\newcommand{\HeatCellFewshot}[1]{%
  \begingroup
  \pgfmathsetmacro{\d}{#1}%
  \pgfmathsetmacro{\x}{max(0,min(\d,\HeatVMax))}%
  \pgfmathsetmacro{\t}{\x/\HeatVMax}%
  \pgfmathparse{(\t < 0.5) ? 1 : 0}%
  \let\isLower\pgfmathresult
    \pgfmathsetmacro{\p}{(\t)}%
    \pgfmathtruncatemacro{\I}{round(\HeatMaxMixInt*\p)}%
    \edef\HeatColorSpec{scLightCyan!\I}%
  \edef\HeatColorSpec{\HeatColorSpec}%
  \expandafter\cellcolor\expandafter{\HeatColorSpec}%
  \pgfmathparse{100*(\d)}%
  \pgfmathprintnumber[fixed,fixed zerofill,precision=\DeltaPrec]{\pgfmathresult}%
  \endgroup
}

\newcommand{\AdvRowShort}[3]{#1 & \HeatCellFewshot{#2} & \HeatCellFewshot{#3} \\}

\begin{table}[h!]
\centering\small

\caption{ Accuracy (\%) on \langgame\ using \llamaeightb\ under canonical and adversarial tokenisations, with $5$ context examples.
}
\label{tab:adv_icl_langgame}
   
    \begin{tabular}{ccccc|c}
        \toprule
         & \textsc{Canonical}& \stok & \stokuni & \unik & \uni \\ \midrule
         Canonical & \HeatCellFewshot{0.838}&  \HeatCellFewshot{0.892}&  \HeatCellFewshot{0.896}&  \HeatCellFewshot{0.886}& \HeatCellFewshot{0.84}\\
         Adversarial & \HeatCellFewshot{0.188}&  \HeatCellFewshot{0.504}&  \HeatCellFewshot{0.510}&  \HeatCellFewshot{0.492}& \HeatCellFewshot{0.454} \\
    \bottomrule

    \end{tabular}
\end{table}

\newpage

\newpage

\end{document}